\documentclass[11pt]{article}

\usepackage{amsmath}

\usepackage{bbm}
\usepackage{bm}
\usepackage{multirow}
\usepackage{enumitem}
\usepackage{marginnote}
\usepackage{graphicx}
\usepackage{mdframed}
\usepackage{mathcomp}
\usepackage{stmaryrd}
\usepackage{mathtools}
\usepackage{xr}
\usepackage{nicefrac}       
\usepackage{hyperref}       

\usepackage{microtype}      

\usepackage{amsthm}
\usepackage{amssymb}
\usepackage{amsfonts}       
\newtheorem{theorem}{Theorem}
\newtheorem{lemma}[theorem]{Lemma}

\newtheorem{proposition}[theorem]{Proposition}
\newtheorem{definition}[theorem]{Definition}
\usepackage{xcolor}
\usepackage[letterpaper, left=1in, right=1in, top=1in, bottom=1in]{geometry}


\newcommand{\bell}{\bm{\ell}}

\renewcommand{\v}{\bm{v}}

\newcommand{\cU}{\mathcal{U}}

\newcommand{\cB}{\mathcal{B}}    

\newcommand{\cT}{\mathscr{T}}

\newcommand{\wtilde}{\widetilde}

\newcommand{\loss}{\ell}
\DeclareBoldMathCommand{\vloss}{\loss}
\DeclareBoldMathCommand{\grad}{g}
\DeclareBoldMathCommand{\fakegrad}{\mathring{\bm{g}}}
\DeclareBoldMathCommand{\e}{e}
\DeclareBoldMathCommand{\p}{p}
\DeclareBoldMathCommand{\u}{u}
\DeclareBoldMathCommand{\w}{w}
\DeclareBoldMathCommand{\x}{x}
\DeclareBoldMathCommand{\l}{l}
\DeclareBoldMathCommand{\vzero}{0}
\let\top\intercal



\newcommand{\reals}{\mathbb{R}}

\renewcommand{\x}{\bm{x}}   
\newcommand{\z}{\bm{z}}
\newcommand{\y}{\bm{y}} 
\newcommand{\g}{\bm{g}}

\DeclareMathOperator{\epi}{epi}

\DeclareMathOperator*{\argmin}{argmin}
\DeclareMathOperator*{\argmax}{argmax}

\newcommand{\G}{\mathbf{G}}
\newcommand{\cK}{\mathcal{K}}

\newcommand{\dom}{\operatorname{dom}}

\newcommand{\cI}{\mathcal{I}}
\newcommand{\cA}{\mathcal{A}}
\renewcommand{\cT}{\mathcal{T}}

\newcommand{\nn}{\nonumber}

\newcommand{\inner}[2]{\langle#1,#2\rangle}

\newmdtheoremenv{condition}{Condition}

\newcommand{\vertiii}[1]{{\left\vert\kern-0.25ex\left\vert\kern-0.25ex\left\vert #1 
    \right\vert\kern-0.25ex\right\vert\kern-0.25ex\right\vert}}

\makeatletter
\newcommand*\bcdot{\mathpalette\bigcdot@{.5}}
\newcommand*\bigcdot@[2]{\mathbin{\vcenter{\hbox{\scalebox{#2}{$\m@th#1\bullet$}}}}}

\newlength\myindent
\setlength\myindent{2em}

\usepackage{algorithm}
\usepackage{algorithmic}
\newcommand{\algcomment}[1]{  \textcolor{blue!70!black}{\footnotesize{\texttt{{//
					#1}}}}}

\usepackage{comment}
\usepackage{bm}

\setlength{\marginparwidth}{10ex} 
\setlength{\marginparsep}{5mm}

\usepackage{bbding}
\usepackage{booktabs}
\usepackage{footnote}
\makesavenoteenv{tabular}
\makesavenoteenv{table}

\usepackage{notation}
\usepackage{bm}
\mathtoolsset{showonlyrefs}
\usepackage{multirow}
\usepackage{microtype}

\newcommand{\K}{\mathcal{C}}
\newcommand{\cO}{\mathcal{O}}

\newcommand{\cE}{\mathcal{E}}  
  
\renewcommand{\P}{\mathbb{P}}

\newcommand{\FTL}{\mathsf{FTL}}   
\newcommand{\FTSL}{\mathsf{FTSL}}
\newcommand{\od}{\mathsf{FG}}   
\renewcommand{\bell}{\bm{\ell}}

\renewcommand{\dom}{\mathrm{dom}}
\newcommand{\ball}{\mathbb{B}}
	\renewcommand{\epi}{\mathrm{epi}}
\renewcommand{\c}{\bm{c}}
\usepackage{todonotes}
\usepackage{colortbl}


\makeatletter
\g@addto@macro\bfseries{\boldmath}
\makeatother

\newtheorem{remark}[theorem]{Remark}

\newcommand*\accentfontxheight[1]{%
	\fontdimen5\ifx#1\displaystyle
	\textfont
	\else\ifx#1\textstyle
	\textfont
	\else\ifx#1\scriptstyle
	\scriptfont
	\else
	\scriptscriptfont
	\fi\fi\fi3
}
\let\wtilde\undefined

\makeatletter
\newcommand*\wtilde[1]{\mathpalette\wthelpers{#1}}

\newcommand*\wthelpers[2]{%
	\hbox{\dimen@\accentfontxheight#1%
		\accentfontxheight#11.2\dimen@
		$\m@th#1\widetilde{#2}$%
		\accentfontxheight#1\dimen@
	}%
}

\begin{document}

	\title{Exploiting the Curvature of Feasible Sets for Faster Projection-Free Online Learning}
	\author{{\bf Zakaria Mhammedi}    \\
		Massachusetts Institute of Technology \\ 
		\texttt{mhammedi@mit.edu}}

	\maketitle
	
	\begin{abstract}
		In this paper, we develop new efficient projection-free algorithms for Online Convex Optimization (OCO). Online Gradient Descent (OGD) is an example of a classical OCO algorithm that guarantees the optimal $O(\sqrt{T})$ regret bound. However, OGD and other projection-based OCO algorithms need to perform a Euclidean projection onto the feasible set $\K\subset \reals^d$ whenever their iterates step outside $\K$. For various sets of interests, this projection step can be computationally costly, especially when the ambient dimension is large. This has motivated the development of projection-free OCO algorithms that swap Euclidean projections for often much cheaper operations such as Linear Optimization (LO).
		However, state-of-the-art LO-based algorithms only achieve a suboptimal $O(T^{3/4})$ regret for general OCO. In this paper, we leverage recent results in parameter-free Online Learning, and develop an OCO algorithm that makes two calls to an LO Oracle per round and achieves the near-optimal $\wtilde{O}(\sqrt{T})$ regret whenever the feasible set is strongly convex. We also present an algorithm for general convex sets that makes $\wtilde O(d)$ expected number of calls to an LO Oracle per round and guarantees a $\wtilde O(T^{2/3})$ regret, improving on the previous best $O(T^{3/4})$. We achieve the latter by approximating any convex set $\K$ by a strongly convex one, where LO can be performed using $\wtilde {O}(d)$ expected number of calls to an LO Oracle for $\K$.
	\end{abstract}

	\section{Introduction}
	In this paper, we are interested in developing efficient algorithms for Online Convex Optimization (OCO). In OCO, at each round $t\geq 1$, a learner (the algorithm) outputs a point $\w_t$ in a closed convex set $\K\subseteq \reals^d$. Then, the environment chooses a convex loss function $f_t\colon \K \rightarrow \reals$ that can depend on $\w_t$ and the past history $(\w_s, f_s)_{s<t}$, and the learner suffers a loss $f_t(\w_t)$. The goal of the learner is to minimize the regret $\sum_{t=1}^T (f_t(\w_t)-f_t(\w))$ against any comparator $\w \in \K$ not known to the learner in advance. OCO generalizes the offline and stochastic optimization settings, and any algorithm/guarantee for OCO is readily transferred to these settings via online-to-batch conversion techniques (see e.g.~\cite{cesa2004, shalev2011, cutkosky2019}). OCO has many ML applications beyond offline and stochastic optimization. For example, OCO algorithms have been successfully used in controls \cite{agarwal2019logarithmic, foster2020logarithmic,zhang2021adversarial} and reinforcement learning \cite{foster2021statistical}. See \cite{hazan2016introduction} for examples of problems that can be modeled via OCO.
	
	Online Gradient Descent (OGD) \cite{zinkevich2003} is an example of an OCO algorithm that guarantees the optimal $O(\sqrt{T})$ regret for any sequence of adversarially-chosen losses $(f_t)$. Remarkably this worst-case regret does not deteriorate with the dimension $d$ of the ambient space. On the other hand, the computational complexity of OGD (and other projection-based algorithms) can become prohibitively large when $d$ is large due to the complexity of the Euclidean projections that need be performed whenever the iterates of the algorithm step outside the feasible set $\K$ \cite{jaggi2013}. This computational challenge has motivated the search for efficient projection-free OCO algorithms. The vast majority of projection-free algorithms are based on the Frank-Wolfe (FW) algorithm \cite{frank1956} that was initially used for offline smooth optimization on polyhedral sets. The FW algorithm swaps potentially expensive projections onto the set $\K$ for Linear Optimization on $\K$, which can often be performed efficiently \cite{jaggi2013}. The algorithm was later extended to the OCO setting with linear losses \cite{kalai2005} and subsequently to general online and stochastic optimization; see e.g.~\cite{hazan2012, hazana16, hazan2020}.
	
	The computational advantage that FW-style algorithms bring comes at the cost of a worse performance as measured by regret. In fact, for general convex sets and convex functions, the best-known regret bound of any FW-style algorithm in OCO is of order $O(T^{3/4})$ \cite{hazan2012}, which is much worse than the optimal $O(\sqrt{T})$ regret bound guaranteed by, for example, OGD. As we detail in the related works paragraph below, many works have leveraged additional structure of the OCO setting of interest, such as the smoothness/strong convexity of the losses or the smoothness of the set $\K$ itself, to improve on the $O(T^{3/4})$ regret. In this paper, we exploit the curvature (strong convexity) of the feasible set to improve over the state-of-the-art regret bound of FW-style projection-free algorithms.
	
	\paragraph{Contributions.} We present a new projection-free algorithm for OCO that achieves the near-optimal $\wtilde{O}(\sqrt{T})$ regret bound when the set $\K$ is strongly convex (see \S\ref{sec:prelim} for a definition and examples). Our new algorithm makes at most two calls/round to a Linear Optimization Oracle (LOO) on the set $\K$. Among algorithms that use a similar number of LOO calls, our algorithm is the first to guarantee a $\wtilde O(\sqrt{T})$ regret bound on strongly convex sets and general convex losses, improving on the previous best $O(T^{2/3})$ by \cite{wan2021projection}. Our solution relies on recent techniques in parameter-free Online Learning.
	
	For our second contribution, we present an algorithm for general convex sets and convex losses that makes $\wtilde O(d)$ expected \# of calls per round to an LOO and achieves a $O(T^{2/3})$ regret bound (improving over the previous best $O(T^{3/4})$ for this setting). To achieve this result, we show how any convex set can be approximated by a strongly convex one, where our algorithm from the previous paragraph can be applied. Here, a natural trade-off between set approximation and regret arises. Optimal tuning of the parameters leads to our $\wtilde O(T^{2/3})$ regret bound (see Tab.~\ref{tab:result} for a results' summary). One of the major hurdles we overcome involves showing that Linear Optimization on the approximate set can be done efficiently with only $\wtilde{O}(d)$ expected \# of calls to an LOO for the original set $\K$ (this may be of independent interest).
	
	\paragraph{Related Works.} Our paper adds to the long line of research into efficient projection-free algorithms for online, offline, and stochastic optimization \cite{frank1956,kalai2005, hazan2008sparse,hazan2012,jaggi2013,hazana16,garber2016faster,kerdreux2020accelerating,hazan2020, Bomze2021}. \cite{hazan2012} were the first to obtain the $O(T^{3/4})$ regret for general OCO. This regret bound has been improved to $O(T^{2/3})$ for smooth [resp.~strongly convex] losses by \cite{hazan2020} [resp.~\cite{kretzu2021}] using an algorithm that makes a single call to an LOO per round. The geometry of the set $\K$ has also been exploited in the design of efficient projection-free algorithms. \cite{garber2016} presented an algorithm that achieves a $O(\sqrt{T})$ regret when the feasible set is a polytope while making a single call to an LOO per round. \cite{mahdavi2012} and \cite{levy2019} exploited the smoothness of the feasible set to achieve better regret bounds. In particular, when the set $\K$ is 
	smooth, \cite{levy2019} present a projection-free algorithm that guarantees the optimal $O(\sqrt{T})$ regret. However, the algorithms of \cite{levy2019} and \cite{mahdavi2012} use a Separation/Membership Oracle instead of an LOO. Along the same lines, \cite{mhammedi2021efficient,garber2022new} recently presented algorithms that achieve the optimal $O(\sqrt{T})$ regret for general OCO using $\wtilde O(1)$ calls per round to a Separation/Membership Oracle. Algorithms that use Separation/Membership Oracles can be viewed as complementary to those that use an LOO (see Appendix \ref{sec:oracles}). In this paper, our focus is on algorithms that use an LOO. 
	
	We also exploit the geometry of the feasible set to achieve improved regret bounds. In particular, we consider the case where $\K$ is strongly convex. Strong convexity of a set has been leveraged in the offline optimization setting by \cite{garber2015faster, abernethy2018faster} to achieve the fast $O(1/T^2)$ rate for smooth, strongly convex objectives. 
	Their algorithms make $O(T)$ calls to an LOO after $T$ rounds. \cite{kerdreux2021projection} proved fast rates for FW on strictly convex sets. Like in this paper, \cite{wan2021projection} also considered OCO with strongly convex feasible sets. However, their algorithm achieves the suboptimal $O(T^{2/3})$ regret for general losses. To guarantee a $O(\sqrt{T})$ regret bound, they require strong convexity of the losses---something that we do not require in this work. Finally, \cite{huang2016following} presented an algorithm that achieves a logarithmic regret in Online Linear Optimization whenever the norm of the sum of the loss vectors grows linearly with the number of rounds. A similar condition was exploited in the works of \cite{abernethy2017frank,molinaro2020} to achieve a logarithmic regret. In this paper, we impose no such condition on the growth of the sum of the subgradients. In Table \ref{tab:result}, we summarize the existing results just discussed and contrast them with ours. 
	
	To build our algorithms, we put together various techniques from OCO an convex geometry. Our main algorithm relies crucially on a comparator adaptive subroutine---an algorithm whose regret scales with the norm of the comparator \cite{mcmahan2012no,mcmahan2014unconstrained, orabona2016coin,foster2017parameter,cutkosky2018black,mhammedi2020}. The idea of sleeping experts \cite{adamskiy2012closer,gaillard2014second} will also be central to the design of our algorithm. 
	When it comes to applying our result for general convex sets, we use a recent result by \cite{molinaro2020} that approximates a bounded convex set by a strongly convex one. Our contribution here is an explicit, efficient algorithm for LO over the approximate set.
	
		\begin{table}
		\fontsize{8}{9}\selectfont
		\centering
		\begin{tabular}{llllll}
			\hline
			Loss Functions &  Feasible Set  & Oracle Type  & \# Oracle Calls   &  \multicolumn{2}{l}{Regret Bound}       \\
			& $\K \subset \reals^d$ & & per Round  & \\
			\hline 
			\hline
			{\color{blue} 	General	}  & 	{\color{blue} General}  & 	{\color{blue} Linear Optimization} & 	{\color{blue} 1}   &  	{\color{blue} $T^{3/4}$ }&	{\color{blue}  \cite{hazan2012,garber2022new}  }\\
			{\color{blue} 	General	}  &	{\color{blue}  General  }&	{\color{blue}  Linear Optimization }& 	{\color{blue} $d  \ln (d\kappa T)$ }  & 	{\color{blue} \bf $(\kappa T)^{2/3}$} &	{\color{blue}  Theorem \ref{thm:master2} } \\
			{\color{blue} General }& 	{\color{blue} $\mu$-strongly conv. }&	{\color{blue}  Linear Optimization } &	{\color{blue}  1 } &  	{\color{blue} $\mu^{-1} T^{2/3}$ }& 	{\color{blue} \cite{wan2021projection}}\\ 
			{\color{blue} General}  & 	{\color{blue}  $\mu$-strongly conv.} &	{\color{blue}  Linear Optimization } & 	{\color{blue} 2} &  	{\color{blue}\bf $\sqrt{\mu^{-1}T\ln T}$ }& 	{\color{blue}  Theorem \ref{thm:master}} \\ 
			$\alpha$-strongly conv. & General  & Linear Optimization & 1   &  $ \alpha^{-1}  T^{2/3}$ & \cite{kretzu2021} \\
			$\alpha$-strongly conv. &	$\mu$-strongly conv. &   Linear Optimization  & 1 & ${\alpha}{\mu^{-2}}\sqrt{T}$ & \cite{wan2021projection} \\ 
			Smooth & General & Linear Optimization & 1 & $(\beta+\sqrt{d})T^{2/3}$ & \cite{hazan2020} \\ 
			Linear  &  General & Linear Optimization & 1 & $\sqrt{T}$ & \cite{kalai2005}  \\
			Linear with & $\mu$-strongly conv. & Linear Optimization& 1 & $({\mu L})^{-1} \ln T$ & \cite{huang2016following,molinaro2020} \\
			$\|\G_t\|\geq L t, \forall t$   & & & &&    \\
			\hline
			General	  & $\lambda$-smooth  & Membership/Separation & 1   &  $(\lambda +1)\sqrt{T}$ & \cite{levy2019}  \\
			General	  & General  & Membership/Separation & $1$ to $\ln  T$  &  $\kappa \sqrt{T}$ & \cite{mhammedi2021efficient,garber2022new}  \\
			\hline
		\end{tabular}
		\caption{Summary of new (this paper) and existing regret bounds for different OCO settings. In the rows containing $\kappa$, it is assumed that $\K$ satisfies $\mathbb{B}(r)\subseteq \K\subseteq  \mathbb{B}(R)$ for some $r,R>0$ and $\kappa = R/r$. In the antepenultimate row, $\G_t \coloneqq \sum_{s=1}^t\g_s$ denotes the sum of the observed subgradients $(\g_t)$.} 
		\label{tab:result}
	\end{table}
	
	\paragraph{Outline.} 
	In \S\ref{sec:prelim}, we introduce our notation and necessary definitions. In \S\ref{sec:builder}, we present some background results in OCO we need to develop our algorithms. In \S\ref{sec:projectionfree}, we present our main algorithm for strongly convex sets, and in \S\ref{sec:general}, we extend our method to general sets by approximating them with strongly convex sets and designing efficient LOOs for the latter. We conclude with a discussion in \S\ref{sec:discussion}.
	
	\section{Notation and Definitions}
	\label{sec:prelim}
	In this section, we present our notation and necessary definitions for the remainder of the paper. Throughout, we will assume that $\K$ is a closed convex set containing the origin and such that $\K \subseteq \mathbb{B}(R)$, where $\mathbb{B}(R)$ denotes the Euclidean ball of radius $R>0$. 
	For a convex function $f\colon \reals^d \rightarrow \reals\cup \{+\infty\}$, we let $\partial f(\x)\coloneqq \{\g\in \reals^d \colon f(\y)\geq f(\x) + \inner{\g}{\y -\x}, \forall \y \in \reals^d \}$ denote the subdifferential of $f$ at $\x\in \reals^d$. 
	As is standard in OCO \cite{cesa2006prediction}, we will express our bounds in terms of the \emph{linearized regret} $\sum_{t=1}^T \inner{\g_t}{\w_t - \u}$ where $(\w_t)$ are the iterates of the algorithm; $(\g_t \in \partial f_t(\w_t))$; $(f_t)$ the sequence of convex losses chosen by the environment; and $\u$ is a comparator in $\K$. By convexity, we have $f_t(\w_t)- f_t(\u)\leq \inner{\g_t}{\w_t-\u}$, for all $\u$, and so a bound on the linearized regret immediately implies a bound on the standard regret. We let $\|\cdot\|$ denote the Euclidean norm in $\reals^d$. For any $f\colon \cK\rightarrow \reals$, we let $\argmin_{\u \in \cK} f(\u)$ denote the subset of points in $\cK$ that minimize $f$. For any $a,b \in \mathbb{N}$ s.t.~$a\leq b$, we let $[a..b]\coloneqq \{a,a+1,\dots, b\}$, $[b]\coloneqq [1..b]$, and $x_{a..b}\coloneqq (x_a,\dots,x_b)$, for any sequence $(x_i)$. We use $\wtilde{O}(\cdot)$ to hide poly-log-factors in problem parameters. We now present necessary definitions in convex analysis (see e.g.~\cite{hiriart2004}), where we assume that $\cK\subseteq \reals^d$ is a convex set.  
	\begin{definition}
		\label{def:supportfun}
		The {\em support function} $\sigma_{\cK}$ of $\cK$ is $\sigma_{\cK}(\w)\coloneqq \sup_{\u\in \cK} \inner{\u}{\w}$, for $\w\in \reals^d$. The {\em Gauge function} $\gamma_{\cK}$ of $\cK$ is defined as $\gamma_{\cK}(\w)\coloneqq \inf\{\lambda>0:\w \in \lambda \cK\}$, for $\w\in \reals^d$.
	\end{definition}
	\begin{definition}
		\label{def:polar}
		The \emph{polar set} $\cK^\circ$ of $\cK$ is defined as $\cK^\circ \coloneqq \{\u \in \reals^d\colon \inner{\u}{\w}\leq1, \ \forall \w\in \cK\}$.
	\end{definition}
	
	\begin{definition}[\cite{garber2015faster}] The set $\cK$ is $\mu$-strongly convex w.r.t.~a norm $\|\cdot \|$, for $\mu>0$, if for any $\theta \in[0,1]$ and any $\x, \y \in \cK$ and $\v\in \reals^d$ s.t.~$\|\v\|\leq \mu \theta (1-\theta)\|\x- \y\|^2/2$, we have $\theta \x + (1-\theta)\y + \v \in \cK$.
	\end{definition}
	Strongly convex sets include, for example, the balls induced by $\ell_p$ norms, shatten norms, and group norms \cite{garber2015faster}. A partial characterization of strongly convex sets is given in \cite[Lemma 2]{garber2015faster}. It is known that strong convexity of a set $\cK$ implies a form of Lipschitz-continuity of the support function $\sigma_{\cK}$ \cite{vial1982strong, polovinkin1996strongly, abernethy2018faster}. We now state this result (see \cite[Lemma 18]{abernethy2018faster}), which will be key in our analysis:
	\begin{lemma}
		\label{lem:strong}
		If $\cK$ is $\mu$-strongly convex for $\mu>0$ w.r.t.~$\|\cdot\|$, then for any $\x,\y \in \reals^d\setminus \{\bm{0}\}$, we have 
		\begin{align}
			\forall (\u,\v) \in \partial \sigma_{\cK}(\x)\times \partial \sigma_{\cK}(\y), \qquad 	\|\u  - \v\| \leq {2 \|\x- \y\|}{(\mu (\|\x\|+ \|\y\|))^{-1}}.
		\end{align}
	\end{lemma}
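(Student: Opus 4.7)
The plan is to unpack the subdifferential of the support function and then use the definition of strong convexity of $\cK$ to produce a perturbation of a convex combination of $\u$ and $\v$ that remains in $\cK$; the optimality conditions that define $\u,\v$ then turn into lower bounds on $\inner{\u-\v}{\x}$ and $\inner{\v-\u}{\y}$, and Cauchy--Schwarz closes the argument.

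First I would recall the standard fact that, because $\sigma_\cK$ is a support function, $\u \in \partial \sigma_\cK(\x)$ is equivalent to $\u \in \cK$ and $\inner{\u}{\x} = \sigma_\cK(\x)$, i.e.~$\u$ is a maximizer of $\inner{\cdot}{\x}$ on $\cK$; likewise $\v \in \cK$ with $\inner{\v}{\y} = \sigma_\cK(\y)$. Next, for any $\theta \in [0,1)$, apply $\mu$-strong convexity to $\u,\v \in \cK$ with the perturbation
\begin{equation}
\bm{z}_\theta \;=\; \frac{\mu\,\theta(1-\theta)\|\u-\v\|^2}{2}\,\frac{\x}{\|\x\|},
\end{equation}
whose norm meets the threshold in the definition, so that $\theta\u + (1-\theta)\v + \bm{z}_\theta \in \cK$.

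Taking the inner product with $\x$ and using $\inner{\u}{\x} = \sigma_\cK(\x) \geq \inner{\theta\u + (1-\theta)\v + \bm{z}_\theta}{\x}$, the terms rearrange to
\begin{equation}
(1-\theta)\,\inner{\u-\v}{\x} \;\geq\; \frac{\mu\,\theta(1-\theta)\,\|\u-\v\|^2\,\|\x\|}{2}.
\end{equation}
Dividing by $1-\theta$ and sending $\theta \uparrow 1$ yields $\inner{\u-\v}{\x} \geq \mu \|\u-\v\|^2 \|\x\|/2$. The symmetric construction, with the roles of $\u$ and $\v$ swapped in the convex combination and $\bm{z}$ now pointing along $\y/\|\y\|$, together with $\inner{\v}{\y} = \sigma_\cK(\y)$, gives $\inner{\v-\u}{\y} \geq \mu \|\u-\v\|^2 \|\y\|/2$.

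Adding the two inequalities produces $\inner{\u-\v}{\x-\y} \geq \mu \|\u-\v\|^2 (\|\x\|+\|\y\|)/2$, and bounding the left-hand side by Cauchy--Schwarz as $\|\u-\v\|\|\x-\y\|$ and cancelling one factor of $\|\u-\v\|$ (handling the trivial $\u=\v$ case separately) delivers the claimed $\|\u-\v\| \leq 2\|\x-\y\|/(\mu(\|\x\|+\|\y\|))$. I do not anticipate a serious obstacle: the only delicate point is that strong convexity with $\theta=1$ gives a vacuous perturbation, which is why one must derive the bound for $\theta<1$, divide by $(1-\theta)$, and then pass to the limit; nonzero-ness of $\x$ and $\y$ is used only to normalize the perturbation direction, which is precisely why the hypothesis $\x,\y\in\reals^d\setminus\{\bm{0}\}$ appears.
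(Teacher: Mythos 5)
Your proof is correct: the characterization of $\partial\sigma_{\cK}(\x)$ as the set of maximizers of $\inner{\cdot}{\x}$ over $\cK$, the perturbation of norm exactly $\mu\theta(1-\theta)\|\u-\v\|^2/2$ aligned with $\x/\|\x\|$, the division by $(1-\theta)$ and limit $\theta\uparrow 1$, the symmetric inequality in $\y$, and the concluding Cauchy--Schwarz step (with the trivial case $\u=\v$ handled separately) all go through. Note that the paper does not prove this lemma at all---it simply cites it as Lemma 18 of \cite{abernethy2018faster}---and your argument is essentially the standard proof underlying that citation, so it serves as a correct, self-contained substitute for the reference.
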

	\section{Building Blocks: Helper OCO Algorithms}
	\label{sec:builder}
	In this section, we present some OCO algorithms that will constitute the building blocks of our main algorithm. In particular, we present the classical Follow-The-Leader (FTL) algorithm, which is projection-free, and state its regret guarantee on strongly convex sets. We then present the comparator-adaptive algorithm $\mathsf{FreeGrad}$ \cite{mhammedi2020}, whose regret bound scales with the magnitude of the comparator. Finally, we will combine FTL and $\mathsf{FreeGrad}$ into a single algorithm that we call \emph{Follow-The-Scaled-Leader} (FTSL). FTSL inherits the projection-free and comparator-adaptive properties of FTL and $\mathsf{FreeGrad}$, respectively. FTSL is the main building block of our projection-free algorithm (introduced in \S\ref{sec:projectionfree}) that we show guarantees the optimal (up to log-factors) $\wtilde {O}(\sqrt{T})$ regret bound on strongly convex sets (the comparator-adaptive property of FTSL is key here). 
	\paragraph{Follow-The-Leader.} To build our projection-free algorithm for strongly convex sets, we will use the Follow-The-Leader (FTL) subroutine displayed in Algorithm \ref{alg:boundarywolf}. It is clear that FTL requires only one Linear Optimization step per round and no projections onto $\K$. When the set $\K$ is strongly convex, one can use the `Lipschitz-continuity' property of the support function in Lemma \ref{lem:strong} and the standard regret decomposition of FTL to derive the following regret bound for FTL (see e.g.~\cite{molinaro2020} for a similar statement and App.~\ref{app:FTLregret} for a proof): 
	\begin{lemma}[Regret of FTL on $\K$]
		\label{lem:FTLregret}
		Suppose that $\K$ is a $\mu$-strongly convex set w.r.t.~$\|\cdot\|$ for $\mu>0$. Let $(\w_t)$ be the iterates of $\cA_{\FTL}(\K)$ (Alg.~\ref{alg:boundarywolf}) in response to any sequence of  subgradients $(\g_s \in \partial f_s(\w_s))$. If $(\g_s)$ satisfy $\|\sum_{s=1}^t\g_s\|>0$, for all $t\in[T]$, then for all $T\geq 1$ and all $\u \in \K$,
		\begin{align}
			\sum_{t=1}^T\inner{\g_t}{\w_t - \u} \leq  	\sum_{t=1}^T \frac{2\|\g_t\|^2}{\mu \|\sum_{s=1}^t \g_s\|}. \label{eq:FTLreg}
		\end{align} 
	\end{lemma}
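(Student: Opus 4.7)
My plan is to follow the standard ``Be-The-Leader'' (BTL) reduction and then convert the telescoping stability terms into the claimed bound using the Lipschitz-continuity of the support function $\sigma_{\K}$ that strong convexity of $\K$ provides via Lemma \ref{lem:strong}. Concretely, let $\G_t \coloneqq \sum_{s=1}^t \g_s$ (with $\G_0 \coloneqq \bm{0}$) and let $\w_{t+1}^\star \in \argmin_{\w \in \K}\inner{\G_t}{\w} = \argmax_{\w\in \K}\inner{-\G_t}{\w}$ be the ``one-step-ahead'' FTL iterate, so that $\w_{t+1}^\star \in \partial\sigma_{\K}(-\G_t)$ and $\w_t = \w_t^\star \in \partial\sigma_{\K}(-\G_{t-1})$ for $t\geq 2$. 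A standard induction on $T$ (the Kalai--Vempala BTL lemma) yields $\sum_{t=1}^T\inner{\g_t}{\w_{t+1}^\star - \u}\leq 0$ for every $\u\in\K$, which I would use to write
\begin{align}
	\sum_{t=1}^T \inner{\g_t}{\w_t - \u} \;=\; \sum_{t=1}^T \inner{\g_t}{\w_t - \w_{t+1}^\star} + \sum_{t=1}^T \inner{\g_t}{\w_{t+1}^\star - \u} \;\leq\; \sum_{t=1}^T\|\g_t\|\,\|\w_t - \w_{t+1}^\star\|,
\end{align}
the last step by Cauchy--Schwarz.

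Next, I would control the stability terms $\|\w_t - \w_{t+1}^\star\|$ using Lemma \ref{lem:strong}. For every $t\geq 2$, both $-\G_{t-1}$ and $-\G_t$ are nonzero by assumption, so the lemma applies to $\w_t\in\partial\sigma_{\K}(-\G_{t-1})$ and $\w_{t+1}^\star\in\partial\sigma_{\K}(-\G_t)$, giving
\begin{align}
	\|\w_t - \w_{t+1}^\star\| \;\leq\; \frac{2\,\|(-\G_{t-1})-(-\G_t)\|}{\mu(\|\G_{t-1}\| + \|\G_t\|)} \;=\; \frac{2\|\g_t\|}{\mu(\|\G_{t-1}\|+\|\G_t\|)} \;\leq\; \frac{2\|\g_t\|}{\mu\|\G_t\|}.
\end{align}
Multiplying by $\|\g_t\|$ and summing yields exactly the right-hand side of \eqref{eq:FTLreg} for the indices $t\geq 2$.

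The only real subtlety is the $t=1$ term, since $\G_0=\bm{0}$ puts us outside the regime of Lemma \ref{lem:strong}. Here I would exploit that the algorithm (Alg.~\ref{alg:boundarywolf}) is free to pick $\w_1$ in $\argmin_{\w\in\K}\inner{\g_0}{\w}$ for any seed direction; since the bound in \eqref{eq:FTLreg} evaluates at $t=1$ to $2\|\g_1\|/\mu$, the natural choice is to set (or re-interpret) $\w_1 \in \partial\sigma_{\K}(-\g_1)$, i.e., $\w_1=\w_2^\star$, which makes $\inner{\g_1}{\w_1-\w_2^\star}=0$ trivially, leaving room for the $t=1$ summand on the right-hand side. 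I expect this is how Alg.~\ref{alg:boundarywolf} is defined (or equivalently, an arbitrary $\w_1$ is chosen and the $t=1$ slack is absorbed using $\|\w_1-\w_2^\star\|\leq 2R$ together with the hypothesis relating $R$ and $\mu$); either way, the heart of the argument is the BTL+stability estimate above, and matching the $t=1$ convention with the algorithm's initialization is the one bookkeeping step that requires care.
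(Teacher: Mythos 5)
Your proposal takes exactly the same route as the paper's proof: FTL--BTL decomposition, Cauchy--Schwarz, and Lemma~\ref{lem:strong} applied to the subgradients $\w_t\in\partial\sigma_{\K}(-\G_{t-1})$ and $\w_{t+1}\in\partial\sigma_{\K}(-\G_t)$. The paper writes the decomposition in its compressed form $\sum_t \inner{\g_t}{\w_t-\u}\le\sum_t\inner{\g_t}{\w_t-\w_{t+1}}$ without inserting the BTL iterate explicitly, but the content is identical, and the paper is also silent on the $t=1$ edge case you flag. On that edge case, two corrections: you cannot ``re-interpret'' $\w_1\in\partial\sigma_{\K}(-\g_1)$, because Algorithm~\ref{alg:boundarywolf} fixes $\w_1$ before $\g_1$ is revealed; and the lemma's hypotheses contain no relation between $R$ and $\mu$, so invoking $\|\w_1-\w_2\|\le 2R$ does not by itself close the gap. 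The fix that does work: applying Lemma~\ref{lem:strong} with a unit vector $\x$ and $\y=-\x$ yields $\mathrm{diam}(\K)\le 2/\mu$, hence $\|\g_1\|\,\|\w_1-\w_2\|\le 2\|\g_1\|/\mu = 2\|\g_1\|^2/(\mu\|\G_1\|)$, which is exactly the $t=1$ term on the right-hand side of~\eqref{eq:FTLreg}.
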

	\paragraph{$\mathsf{FreeGrad}$.} Another building block in our main algorithm is a comparator-adaptive subroutine $\cA_{\mathsf{FG}}$ in 1d, whose regret against any comparator $u\in \reals$ is less than $\wtilde O(|u| \sqrt{T})$ after $T$ rounds. Though there are many recent algorithms that achieve such a guarantee, we choose $\cA_{\mathsf{FG}}$ to be $\mathsf{FreeGrad}$ \cite{mhammedi2020} since I) its outputs have a closed-form expression; and II) it has a data-dependent regret bound that can be much smaller than $\wtilde O(|u| \sqrt{T})$ in practice. We first describe $\mathsf{FreeGrad}$ in $\reals^d$. If we let $\G_{t-1}\coloneqq \sum_{s=1}^{t-1}\g_s$ [resp.~$V_{t-1}\coloneqq \sum_{s=1}^{t-1}\|\g_s\|^2$] be the sum of past subgradients [resp.~squared subgradient norms], then the output of $\mathsf{FreeGrad}$ at round $t$ is given by $\w_t = - \G_{t-1}\cdot \Psi_L(\|\G_{t-1}\|, V_{t-1})$, where 
	\begin{align}
		\Psi_L(s,v)\coloneqq    \frac{ (2v +L |s|)\cdot L^2 }{2(v+ L|s |)^2 \ \sqrt{L v}} \cdot \exp\left(\frac{|s|^2}{2 v + 2L |s|} \right),\label{eq:leanrning}
	\end{align}
	and $L>0$ is a scale parameter for the norm of the subgradients. The quantity $\Psi_L(\|\G_{t-1}\|, V_{t-1})$ can be seen as an adaptive learning rate. The instantiation of $\mathsf{FreeGrad}$ in one dimension, which we denote by $\cA_{\od}$ from now on, is summarized in Alg.~\ref{alg:freegrad} and has the following guarantee \cite[Thm.~6]{mhammedi2020}:  
	\begin{lemma}
		\label{lem:freegradreg}
		Let $L>0$ and $(z_t)$ be the outputs of $\cA_{\od}(L)$ in response to any sequence of subgradients $(g_t \in \partial f_t(z_t))$ s.t.~$| g_t|\leq L$. Then, for all $z\in \reals$, $V_T \coloneqq \sum_{t=1}^T g_t^2$, and $R_T(z)\coloneqq \sum_{t=1}^T \inner{g_t}{z_t-z}$\emph{:}
		\begin{align}
			R_T(z) \leq  2 |z| \sqrt{V_T \ln(1+2|z|V_T/L^2)} + 4 L |z| \ln(4  |z|\sqrt{V_T}/L)+ L.
		\end{align}
	\end{lemma}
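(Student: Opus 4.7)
I would follow the potential-function (``coin-betting'') framework for parameter-free online learning. The idea is to construct a nonnegative function $\Phi_L\colon \reals \times \reals_{\geq 0}\to \reals$ of the running sums $(G_{t-1}, V_{t-1})$ such that the iterate $z_t = -G_{t-1}\,\Psi_L(|G_{t-1}|, V_{t-1})$ is precisely the prediction that makes the one-step inequality
\begin{align}
\Phi_L\!\left(G_{t-1}+g_t,\, V_{t-1}+g_t^2\right) \,+\, g_t\, z_t \;\leq\; \Phi_L(G_{t-1}, V_{t-1})
\end{align}
hold for every $|g_t|\leq L$. The exponent $\exp\!\big(|s|^2/(2v+2L|s|)\big)$ in the closed form of $\Psi_L$ strongly suggests a potential of the shape $\Phi_L(s,v) \propto (v+L|s|)^{-\alpha}\exp\!\big(s^2/(2v+2L|s|)\big)$ for a suitable power $\alpha$, chosen so that $-\partial_s \Phi_L(s,v)$ reproduces the prescribed $\Psi_L$ expression. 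Pinning down the exact normalization and verifying the one-step inequality is the main technical content.

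\textbf{Telescoping and Fenchel duality.} Once the per-round inequality is established, summing over $t\in[T]$ and using $(G_0,V_0)=(0,0)$ yields
\begin{align}
\sum_{t=1}^T g_t z_t \;\leq\; \Phi_L(0,0) - \Phi_L(G_T, V_T),
\end{align}
so that for any comparator $z\in \reals$,
\begin{align}
R_T(z) \;=\; \sum_{t=1}^T g_t(z_t-z) \;\leq\; \Phi_L(0,0) \,+\, \bigl(-z\, G_T - \Phi_L(G_T, V_T)\bigr).
\end{align}
I would then upper bound $-z\, G_T - \Phi_L(G_T, V_T)$ by the Fenchel conjugate $\Phi_L(\cdot,V_T)^*(-z)$ in the first argument; since $\Phi_L(s,V_T)$ behaves like an exponential-of-quadratic in $s$, its conjugate has a logarithmic-in-$|z|$ form, and a direct computation (bounding the logarithm by splitting into the ranges where $|z|V_T\gg L^2$ and $|z|V_T\lesssim L^2$) recovers the announced $2|z|\sqrt{V_T\ln(1+2|z|V_T/L^2)} + 4L|z|\ln(4|z|\sqrt{V_T}/L)$ expression. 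The additive $L$ term arises from bounding $\Phi_L(0,0)$ (using the $\sqrt{Lv}$ denominator in $\Psi_L$, one naturally initializes $v$ at a small positive offset of order $L^2$, contributing an $O(L)$ constant).

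\textbf{Main obstacle.} The hard step is verifying the one-step inequality for \emph{all} $|g_t|\leq L$. Two features make it delicate. First, $\Phi_L(s,v)$ is not jointly convex in $(s,v)$: the beneficial effect of replacing $v$ by $v+g_t^2$ must be shown to dominate the detrimental shift of $s$ by $g_t$ combined with the linear ``bet'' term $g_t z_t$; this is exactly why the $g_t^2$ update in $V_t$ appears in the algorithm. Second, the $|s|$ inside $\Phi_L$ creates a kink at $s=0$ that needs separate treatment (most easily via a symmetry argument after assuming, WLOG, $G_{t-1}\geq 0$). Concretely, I would fix $(G_{t-1}, V_{t-1})$, define $h(g)\coloneqq \Phi_L(G_{t-1}+g,\,V_{t-1}+g^2) + g\,z_t$, and verify $h(0)=\Phi_L(G_{t-1},V_{t-1})$, $h'(0)=0$ (the calibration identity that uniquely determines $z_t$ and hence pins down $\Psi_L$), and $h''(g)\leq 0$ on $[-L,L]$ by direct differentiation. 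The remaining computations—telescoping and conjugation—are then standard, and the resulting bound coincides with Theorem 6 of \cite{mhammedi2020}.
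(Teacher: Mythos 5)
First, note that the paper does not prove this lemma at all: it is imported verbatim as Theorem~6 of \cite{mhammedi2020}, so the only "proof" in the paper is a citation. Your outline is indeed a reconstruction of the potential-function argument used in that reference (per-round potential inequality, telescoping from $(G_0,V_0)$, then a Fenchel-type conjugation to convert the potential lower bound into a comparator-dependent regret bound), so the overall architecture is the right one. Two details of the setup are off, though fixable: the potential whose $s$-derivative reproduces $\Psi_L$ is not of the form $(v+L|s|)^{-\alpha}\exp\bigl(s^2/(2v+2L|s|)\bigr)$ but rather $\Phi_L(s,v)\propto v^{-1/2}\exp\bigl(s^2/(2v+2L|s|)\bigr)$; differentiating the exponent gives $\partial_s\bigl[s^2/(2v+2L|s|)\bigr]=s(2v+L|s|)/\bigl(2(v+L|s|)^2\bigr)$, and only with the $v^{-1/2}$ prefactor does $-\partial_s\Phi_L(s,v)=-s\,\Psi_L(s,v)$ hold exactly, which is the calibration identity your plan relies on. A prefactor depending on $v+L|s|$ would contribute extra terms to $\partial_s\Phi_L$ and break the match with the algorithm's closed form.

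The genuine gap is the step you yourself flag as the main obstacle: the per-round inequality $\Phi_L(G_{t-1}+g,\,V_{t-1}+g^2)+g\,z_t\leq \Phi_L(G_{t-1},V_{t-1})$ for all $|g|\leq L$ is never established, and the route you propose for it --- checking $h'(0)=0$ and then $h''(g)\leq 0$ on all of $[-L,L]$ --- is not known to work. Since $\Phi_L(\cdot,v)$ is convex in its first argument, the claimed inequality asserts that the decrease coming from inflating the variance argument by $g^2$ dominates a convexity-induced increase; this holds, but not because $h$ is globally concave on $[-L,L]$ (concavity can fail in parts of the parameter range, e.g.\ near the kink at $G_{t-1}+g=0$ and for $|g|$ comparable to $L$), and the proof in \cite{mhammedi2020} proceeds instead by a direct, case-split estimate tailored to the ratio $|G|/V$ rather than by a second-derivative test. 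Without carrying out that verification (and the first-round/initialization bookkeeping that produces the additive $L$ and the exact constants $2$ and $4$ in the stated bound), the lemma is not proved; as it stands your proposal is a correct high-level plan whose hardest step is only gestured at, and whose proposed mechanism for that step would likely need to be replaced by the more delicate argument of the cited paper.
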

	\noindent A crucial property of this bound is that the regret against the origin is a constant.
	
\hspace{-0.6cm}	\begin{minipage}{.43\linewidth}
		\begin{algorithm}[H]
			\caption{Follow-The-Leader $\cA_{\FTL}(\K)$}
			\label{alg:boundarywolf}
			\begin{algorithmic}
				\REQUIRE Convex set $\K$.
				\STATE Set $\w_1 \in \K$. 
				\FOR{$t=1,2,\dots$}
				\STATE Play $\w_t$ and observe $\g_t \in \partial f_t(\w_t)$.
				\STATE Set $\w_{t+1} \in \argmin_{\w\in \K} \sum_{s=1}^{t} \w^\top \g_s$.
				\ENDFOR
			\end{algorithmic}
		\end{algorithm}
	\end{minipage}
	\begin{minipage}{.55\linewidth}
		\begin{algorithm}[H]
			\caption{1d $\mathsf{FreeGrad}$ w.~param.~$L>0$. $\cA_{\od}(L)$}
			\label{alg:freegrad}
			\begin{algorithmic}
				\STATE Set $z_1=G_0=V_0=0$. 
				\FOR{$t=1,2,\dots$}
				\STATE Play $z_t$ and observe $g_t \in  \partial f_t(z_t)$. 
				\STATE Set $G_t=G_{t-1}+g_t$ and $V_t = V_{t-1}+g_t^2$.
				\STATE Set $z_{t+1}= - G_{t}\cdot  \Psi_L(G_{t},V_{t})$, with $\Psi_L$ as in \eqref{eq:leanrning}.
				\vspace{0.05cm}
				\ENDFOR
			\end{algorithmic}
		\end{algorithm}
	\end{minipage}
	
	\paragraph{Comparator Adaptive Algorithm Wrapper.} We now present an algorithm wrapper $\cA_{\mathsf{CA}}$ (Alg.~\ref{alg:comparatoradaptive}) that takes in any base algorithm $\cB$ defined on $\K$ and uses $\cA_{\od}$ to essentially turn $\cB$ into a comparator adaptive algorithm. $\cA_{\mathsf{CA}}$ is based on a recent $\reals^d$-to-$\reals$ OCO reduction by \cite[Section 3]{cutkosky2018black}.
	In the next section, where we consider a strongly convex set $\K$, we will set the subroutine $\cB$ to FTL. We call the resulting algorithm Follow-The-Scaled-Leader, and we write 
	\begin{align}
		\cA_{\FTSL}(\cdot)\coloneqq  \cA_{\mathsf{CA}}(\cA_{\mathsf{FTL}}(\K),\cdot). \label{eq:FTSL}
	\end{align}Like FTL, $\cA_{\FTSL}$ will be projection-free. But unlike FTL, the regret of $\cA_{\FTSL}$ at the origin will be a constant---regardless of the magnitude of the sum of the subgradients. Leveraging this property will be key in showing that our main algorithm, which uses $\cA_{\FTSL}$ as a subroutine, guarantees a $\wtilde O(\sqrt{T})$ regret bound on strongly convex sets. We now state a guarantee for Alg.~\ref{alg:comparatoradaptive} whose proof follows simply by adding and subtracting $\sum_{t=1}^T  \gamma_{\K}(\u) \g_t^\top \w_t$ from the regret of Alg.~\ref{alg:comparatoradaptive}, where $\gamma_{\K}$ is as in Definition~\ref{def:supportfun}.
	\begin{lemma}
		\label{lem:parameterfree-frankwolfe}
		Let $L>0$. Further, let $\cA_{\od}\equiv \cA_{\od}(L)$ and $\cB$ be the subroutines within Algorithm \ref{alg:comparatoradaptive}. Then, the outputs $(\u_t)$ of Alg.~\ref{alg:comparatoradaptive} in response to any sequence of subgradients $(\g_t\in \partial f_t(\u_t))$, satisfy	
		\begin{align}
			\forall \u \in \K,\quad	\sum_{t=1}^T \inner{\g_t}{\u_t - \u} = R^{\cA_{\od}}_T(\gamma_\K(\u))  + \gamma_\K(\u) \cdot  R^{\cB}_T(\u/\gamma_{\K}(\u)), \label{eq:subtract}
		\end{align}
		where $R^{\cA_{\od}}_T(\cdot)\coloneqq \sum_{t=1}^T  (z_t - \cdot) \inner{\w_t}{\g_t}$ \emph{[}resp.~$R^{\cB}_T(\cdot)\coloneqq \sum_{t=1}^T   \inner{\g_t}{\w_t-\cdot}$\emph{]} is the regret of $\cA_{\od}$ \emph{[}resp.~$\cB$\emph{]} in response to the losses $(z \mapsto \inner{\w_t}{\g_t}z)$ \emph{[}resp.~$\w\mapsto \inner{\w}{\g_t}$\emph{]}.
	\end{lemma}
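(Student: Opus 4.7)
The plan is to exploit the basic algorithmic identity behind the $\reals^d$-to-$\reals$ reduction that $\cA_{\mathsf{CA}}$ implements: the played point decomposes as $\u_t = z_t \w_t$, where $z_t$ is the scalar output of $\cA_{\od}$ and $\w_t \in \K$ is the output of the base subroutine $\cB$. This gives $\inner{\g_t}{\u_t} = z_t \inner{\w_t}{\g_t}$, so the linearized regret against any $\u \in \K$ rewrites as
\begin{align}
\sum_{t=1}^T \inner{\g_t}{\u_t - \u} \;=\; \sum_{t=1}^T z_t \inner{\w_t}{\g_t} \;-\; \sum_{t=1}^T \inner{\g_t}{\u}.
\end{align}

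Next, following the hint, I would add and subtract $\gamma_\K(\u)\sum_{t=1}^T \inner{\w_t}{\g_t}$ on the right-hand side to regroup the sum as
\begin{align}
\sum_{t=1}^T(z_t - \gamma_\K(\u))\inner{\w_t}{\g_t} \;+\; \gamma_\K(\u)\sum_{t=1}^T\Inner{\g_t}{\w_t - \u/\gamma_\K(\u)}.
\end{align}
The first summand matches the definition of $R^{\cA_{\od}}_T(\gamma_\K(\u))$ precisely, since $\cA_{\od}$ is fed the scalar linear losses $z \mapsto \inner{\w_t}{\g_t}\,z$. The second is $\gamma_\K(\u)\cdot R^{\cB}_T(\u/\gamma_\K(\u))$, since $\cB$ is fed the linear losses $\w\mapsto \inner{\g_t}{\w}$ on $\K$. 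Combining the two identifications yields the claim.

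The only point that genuinely requires justification is that $\u/\gamma_\K(\u)$ is a legitimate comparator for $\cB$, i.e.~that it lies in $\K$; this follows directly from the definition of the Gauge function together with closedness of $\K$, whenever $\u \neq \bm 0$. The degenerate case $\u = \bm 0$ is handled separately: then $\gamma_\K(\bm 0) = 0$ and the decomposition collapses to $\sum_t \inner{\g_t}{\u_t} = R^{\cA_{\od}}_T(0)$, which matches the claim after cancellation. There is no real obstacle here—the statement is essentially the Cutkosky--Orabona reduction transcribed for the scalarization by $\gamma_\K$ in place of a norm—so the bulk of my effort would go into cleanly matching the losses seen by each subroutine to the regret notations $R^{\cA_{\od}}_T$ and $R^{\cB}_T$ as defined in the lemma.
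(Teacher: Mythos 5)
Your proposal is correct and matches the paper's own proof, which is exactly the add-and-subtract of $\sum_{t=1}^T \gamma_{\K}(\u)\inner{\g_t}{\w_t}$ combined with $\u_t = z_t\w_t$. The extra care you take about $\u/\gamma_\K(\u)\in\K$ and the case $\u=\bm 0$ is fine but not needed for the stated identity, since $R^{\cB}_T(\cdot)$ is defined purely as a sum.
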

	\begin{algorithm}[t]
		\caption{Comparator Adaptive Algorithm Wrapper $\cA_{\mathsf{CA}}(\cB, 
			L)$.  {\color{blue}($\cA_{\FTSL}(\cdot)=\cA_{\mathsf{CA}}(\cA_{\FTL}(\K),\cdot)$)}}
		\label{alg:comparatoradaptive}
		\begin{algorithmic}[1]
			\REQUIRE OCO subroutine $\cB$ on $\K$. Parameter $L>0$. Instance $\cA_{\od}\equiv \cA_{\od}(L)$ of Alg.~\ref{alg:freegrad}.
			\FOR{$t=1,2,\dots$}
			\STATE Get $z_t$ and $\w_t\in \K$ from $\cA_{\od}$ and $\cB$, respectively.
			\STATE Play $\u_t = z_t \w_t\in \K$ and observe $\g_t \in \partial f_t(\u_t)$.
			\STATE  Send $z \mapsto \inner{\w_t}{\g_t}z$ and $\w\mapsto \inner{\w}{\g_t}$ as the $t^{\text{th}}$ loss function to $\cA_{\od}$ and $\cB$, respectively. 
			\ENDFOR
		\end{algorithmic}
	\end{algorithm}
	\section{A Projection-Free Algorithm for Strongly Convex Sets}
	\label{sec:projectionfree}
	In this section, we start by describing the ideas leading up to our main algorithm (displayed in Alg.~\ref{alg:EffAlg}). Along the way, we illustrate how using the $\cA_{\FTSL}$ subroutine defined in \eqref{eq:FTSL} as a building block allows us to achieve the desired $\wtilde {O}(\sqrt{T})$ regret bound for strongly convex sets. We state the guarantee of Alg.~\ref{alg:EffAlg} in Thm.~\ref{thm:master}, which we follow by a proof sketch. The full proof is presented in App.~\ref{sec:proofmain}.   
	We now discuss the key ideas leading to our solution. In what follows, $R>0$ is such that $\K\subseteq \mathbb{B}(R)$.

	\paragraph{Warmup: an impracticable algorithm.} Let $\tau \in[T]$ and consider an Algorithm $\cA$ that outputs $\w_{1..T}\subset \K$, where $\w_{1..\tau}$ [resp.~$\w_{\tau+1..T}$] are the iterates of an instance of $\cA_{\FTSL}$ (see \eqref{eq:FTSL}) that is initialized at round $1$ [resp.~round $\tau+1$]. Suppose $\tau$ is the last round such that $\left\|\G_\tau\right\|^2 \leq \frac{V_\tau}{\mu R}$, where $\G_\tau \coloneqq \sum_{s=1}^\tau \g_s$, $V_\tau \coloneqq\sum_{s=1}^\tau \|\g_s\|^2$, and $(\g_s \in \partial f_s(\w_s))$ are the observed subgradients. That is,
	\begin{align}
		\tau = \max \{t \in[0..T]\colon \  \left\|\G_t\right\|  \leq  \sqrt{ V_t/(R \mu)} \}.\label{eq:definition}
	\end{align}
	Under this assumption, we can bound the regret of $\cA$ between the rounds $1$ and $\tau$ as follows: $\forall \u\in \K$, 
	\begin{align}
		\sum_{s=1}^\tau \inner{\g_s}{\w_s - \u}\leq \sum_{s=1}^\tau \inner{\g_s}{\w_s}  +  \|\u\| \cdot \|\G_\tau\|  \leq L+  \sqrt{ RV_\tau/\mu}  = O (L\sqrt{R T/\mu}), \label{eq:thankscomp}
	\end{align}
	where the last inequality follows by the definition of $\tau$ and the fact that the regret at the origin of $\cA_{\FTSL}$, i.e.~$\sum_{s=1}^\tau  \inner{\g_s}{\w_s}$, is bounded by $L$ (by Lemma \ref{lem:parameterfree-frankwolfe} and the fact that $\gamma_{\K}(\bm{0})=0$). Note that had we used $\cA_{\FTL}$ instead $\cA_{\FTSL}$, we would not have been able to guarantee a non-vacuous bound on the regret when $\tau=T$. This is because, without a lower bound on the norm of the sum of the observed subgradients (which is the case when $\tau=T$), the regret bound of FTL in Lemma \ref{lem:FTLregret} can be vacuous (this is true for any comparator including the origin). $\cA_{\FTSL}$ bypasses this limitation by ensuring its regret at the origin is always bounded by a constant. We also recall that $\cA_{\FTSL}$ is projection-free, which is why we use it instead of alternative OCO algorithms.

	We now consider the regret bound of $\cA$ on the interval $[\tau+1..T]$. Recall that $\w_{\tau+1..T}$ are the outputs of an instance of $\cA_{\FTSL}$ that is initialized at round $\tau +1$. Thus, using that the regret of $\cA_{\FTSL}$ against $ \u \in \K$ is equal to $R^{\cA_{\od}}_T(\gamma_\K(\u))  + \gamma_\K(\u)  R^{\cA_{\FTL}}_T({\u}/{\gamma_{\K}(\u)}),$ (Lem.~\ref{lem:parameterfree-frankwolfe}) and Lem.~\ref{lem:FTLregret}, we get $\forall \u \in \K$,
	\begin{align}
		\sum_{s=\tau+1}^{T} \inner{\g_s}{\w_s-\u}	\leq R_T^{\cA_{\od}}(\gamma_{\K}(\u))+ \sum_{t=\tau+1}^T \frac{2 \gamma_{\K}(\u)\|\g_t\|^2}{\mu \|\sum_{s=\tau+1}^t \g_s\|} \stackrel{(*)}{\leq} \wtilde O \left(\sqrt{\frac{R  T}{\mu\wedge R^{-1}}}\right),\label{eq:secondpart}
	\end{align}
	where $(*)$ follows by the fact that I) $\gamma_{\K}(\u)\leq 1$ (since $\u \in \K$); II) $R^{\cA_{\od}}(\gamma_{\K}(\u)) \leq \wtilde{O}(\sqrt{V_T})$ (Lem.~\ref{lem:freegradreg}); and III) that $\|\G_t\|\geq \sqrt{ V_t/(R\mu)}$ for all $t>\tau$ (a more detailed justification of $(*)$ will be given below in the proof sketch of Thm.~\ref{thm:master}). By combining \eqref{eq:thankscomp} and \eqref{eq:secondpart}, we get a $\wtilde{O}(\sqrt{T})$ regret bound for $\cA$. 
	
	\paragraph{A practicable algorithm using sleeping experts.} The above analysis for the regret of Algorithm $\cA$ relies crucially on $\tau$ satisfying \eqref{eq:definition}. In the adversarial setting of this paper, where the environment can change the sequence of loss functions depending on the learner's current and past outputs, it is not possible for the learner to know $\tau$ in advance. One way around this challenge is to seek an algorithm $\cA$ whose outputs $(\x_t)$ guarantee, 
	\begin{align}
		\forall \text{ intervals} \ I \subseteq [T],\ \forall \u \in \K, \quad 	\sum_{t\in I} \inner{\g_t}{\x_t - \u} \leq \sum_{t\in I } \inner{\g_t}{\w^{(\min I)}_t - \u}+ O(\sqrt{T}),  \label{eq:interval}
	\end{align}
	where $(\g_t\in \partial f_t(\x_t))$ and $(\w_t^{(s)})$ are the outputs of an instance of $\cA_{\FTSL}$ initialized on round $s\leq t$ in response to the sequence of losses $(\w \mapsto \inner{\w}{\g_t})$. In words, \eqref{eq:interval} means that for any interval $I =[i..j]$, Alg.~$\cA$ performs well on $I$ relative to an instance of $\cA_{\FTSL}$ initialized at round $i$.  Requiring this for any $i,j\in[T]$, means we can instantiate \eqref{eq:interval} with $I = [1..\tau]$ and $I = [\tau+1..T]$, with $\tau$ as in \eqref{eq:definition}, and invoke the regret bounds in \eqref{eq:thankscomp} and \eqref{eq:secondpart}, which would now hold for $(\w_t= \w_t^{(1)})$ and $(\w_t= \w_t^{(\tau+1)})$, respectively. This would imply a $\wtilde{O}(\sqrt{R\mu^{-1} T})$ regret bound for $\cA$. We stress that the round $\tau$ in \eqref{eq:definition} need not be known by $\cA$ in this case, thanks to the `adaptive' guarantee of $\cA$ in \eqref{eq:interval}.

	It remains to build an Algorithm $\cA$ that satisfies \eqref{eq:interval}. By treating instances of $\cA_{\FTSL}$ initialized at different rounds as `experts', there exist algorithms in the setting of Prediction with Expert Advice (PEA) that can be used to build an algorithm $\cA$ that satisfies \eqref{eq:interval}. The PEA setting (in particular the sleeping experts' setting) and the algorithmic ideas we borrow from it are detailed in App.~\ref{sec:proofmain}. The only challenge remaining is computational in nature, since this approach would require keeping track of at least $T$ experts; instances of $\cA_{\FTSL}$ initialized at different rounds. Since each instance of $\cA_{\FTSL}$ requires one call to an LOO per round, this would result in a total of $\Omega(T^2)$ calls to the Oracle after $T$ rounds. To circumvent this problem, we design a tailored sleeping experts' algorithm for our setting that requires only \emph{two} active experts per round, and thus makes only two LOO calls per round. 
	
	We now state the guarantee of our main algorithm (Alg.~\ref{alg:EffAlg}) that puts all the above ideas together. 
	\begin{algorithm}
		\caption{Projection-free OCO using Sleeping Experts and comparator adaptive algorithms.}
		\label{alg:EffAlg}
		\begin{algorithmic}[1]
			\REQUIRE OCO subroutine $\cB$ on $\K$. Parameters $L,\eta,\mu>0$. 
			Base algorithms $(\cA_s)$, where $\cA_s$ is an instance of $\cA_{\mathsf{CA}}(\cB, L)$ (Alg.~\ref{alg:comparatoradaptive}) that is initialized at the beginning of round $t=s$.
			\vspace{0.1cm}
			\STATE Set $\tau=1$, $q_1 =1/2$, and $U_{0}=W_{0}=S_0=0$. 
			\FOR{$t=1,2,\dots$}
			\STATE Get $\u_{t}$ and $\w_{t}$ from $\cA_1$ and $\cA_\tau$, respectively. \algcomment{$\w_t$ is $\cA_\tau$'s $(t-\tau+1)^{\text{th}}$ output} 
			\STATE Play $\x_t = q_t \u_{t} +(1-q_t) \w_{t} \in \K$ and observe $\g_t \in \partial f_t(\x_t)$ and set $\tilde \g_t = \g_t  \mathbf{1}\{\|\g_t\|\geq \frac{L}{t}\}$.
			\STATE Send $\w\mapsto \inner{\w}{\tilde \g_t}$ to $\cA_1$ [resp.~$\cA_\tau$] as the $t^\text{th}$ [resp.~$(t - \tau +1)^{\text{th}}$] loss function.
			\STATE Set $U_{t} = U_{t-1}+\inner{\tilde \g_t}{\u_{t}}$, $W_{t} = W_{t-1}+\inner{\tilde \g_t}{\w_{t}}$, and $S_t=S_{t-1}+\inner{\tilde \g_t}{\x_t}$.
			\IF{$ R\mu \|\sum_{s=1}^t \tilde \g_s\|^2 \leq  \sum_{s=1}^t \|\tilde \g_s\|^2 \cdot \ln T$}
			\STATE Set $\tau = t+1$ and $W_{t}=S_{t}$.
			\ENDIF
			\STATE Set $q_{t+1}=e^{-\eta  U_{t}}/(e^{-\eta  U_{t}}+ e^{-\eta  W_{t}})$.
			\ENDFOR
		\end{algorithmic}
	\end{algorithm}
	\begin{theorem}
		\label{thm:master}
		Let $(\x_t)$ be the outputs of Alg.~\ref{alg:EffAlg} in response to any sequence of subgradients $(\g_t\in \partial f_t(\x_t))$. If $\K$ is a $\mu$-strongly convex set w.r.t.~$\|\cdot\|$ and Alg.~\ref{alg:EffAlg} is run with $\cB=\cA_{\FTL}(\K)$ (Alg.~\ref{alg:boundarywolf}) and parameters $\eta,\mu, L>0$ such that $\max_{t\in[t]}\|\g_t\|\leq L$ and $2R L\eta \leq 1$, then for $V_T \coloneqq \sum_{t=1}^T \|\g_t\|^2$,
		\begin{align}
			\sum_{t=1}^T \inner{\g_t}{\x_t - \u} \leq \frac{\ln (T+1)}{\eta} + \frac{3 \eta R^2}{4}V_T  +\wtilde O(\sqrt{{R}V_T/\mu}), \quad \forall \u \in \K. \label{eq:optimizaedbound}
		\end{align}
	\end{theorem}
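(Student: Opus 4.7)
The plan is to decompose the regret into three parts: a clipping error, the meta-algorithm's Hedge regret, and the regret of the underlying $\cA_{\FTSL}$ instances against the comparator. The clipping error is immediate: whenever $\tilde\g_t=\bm{0}$ we have $\|\g_t\|<L/t$, so $\sum_{t=1}^T\inner{\g_t-\tilde\g_t}{\x_t-\u}\leq 2R\sum_{t=1}^T L/t=O(RL\ln T)$, which is absorbed into the $\wtilde O(\sqrt{RV_T/\mu})$ term. For the meta step, I would treat $\cA_1$ (always awake) and $\cA_\tau$ (restarted each time line 7 fires) as experts in a sleeping-experts Hedge game; the reset $W_t\leftarrow S_t$ ensures that the freshly restarted $\cA_\tau$ enters with the algorithm's cumulative loss, so no potential is lost at a restart. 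The standard log-sum-exp potential argument, using $2RL\eta\leq 1$ to linearize $e^{-\eta x}$, yields
\begin{align*}
\sum_{t=1}^T\inner{\tilde\g_t}{\x_t-\u_t}\leq \tfrac{\ln(T+1)}{\eta}+\tfrac{3\eta R^2}{4}V_T,
\end{align*}
and an analogous per-epoch bound with $\u_t$ replaced by $\w_t$, which I apply on the final epoch $[\tau^*..T]$, where $\tau^*$ denotes the final value of $\tau$.

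Next, following the warmup in \eqref{eq:thankscomp}--\eqref{eq:secondpart}, I bound the regret of each base expert against $\u\in\K$. On $[1..\tau^*-1]$, I compete with $\cA_1=\cA_{\FTSL}$: since $\gamma_\K(\bm{0})=0$, Lemmas \ref{lem:parameterfree-frankwolfe} and \ref{lem:freegradreg} bound the regret of $\cA_1$ at the origin by $L$, so
\begin{align*}
\sum_{t=1}^{\tau^*-1}\inner{\tilde\g_t}{\u_t-\u}\leq L+\|\u\|\cdot\Bigl\|\sum_{t=1}^{\tau^*-1}\tilde\g_t\Bigr\|\leq L+\sqrt{RV_T\ln T/\mu},
\end{align*}
where the last step uses that the restart condition held at round $\tau^*-1$. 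On $[\tau^*..T]$ I compete with $\cA_{\tau^*}=\cA_{\FTSL}$ (re)started at $\tau^*$; combining its Lemma \ref{lem:parameterfree-frankwolfe} decomposition with Lemmas \ref{lem:FTLregret} and \ref{lem:freegradreg} gives regret at most $\wtilde O(\sqrt{V_T})+\sum_{t=\tau^*}^T 2\|\tilde\g_t\|^2/(\mu\|\sum_{s=\tau^*}^t\tilde\g_s\|)$. The no-restart condition on this final epoch gives $\|\sum_{s=1}^t\tilde\g_s\|>\sqrt{V_t\ln T/(R\mu)}$ for all $t\geq\tau^*$, and a standard telescoping $\sum_t\|\tilde\g_t\|^2/\sqrt{V_t}\leq 2\sqrt{V_T}$ then turns the FTL sum into a $\wtilde O(\sqrt{RV_T/\mu})$ term.

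The main obstacle is precisely this last step: the FTL regret of the restarted expert depends on the local partial sum $\|\sum_{s=\tau^*}^t\tilde\g_s\|$, not the global one controlled by the restart condition. The triangle inequality gives only $\|\sum_{s=\tau^*}^t\tilde\g_s\|\geq\|\G_t\|-\|\G_{\tau^*-1}\|$, and both can be of order $\sqrt{V_t\ln T/(R\mu)}$ right after a restart. I would handle this with a dyadic case split: for rounds where $V_t\leq 2V_{\tau^*-1}$, the clipping lower bound $\|\tilde\g_t\|\geq L/t$ on non-zero gradients implies that the FTL sum contributes only a logarithmic correction; for rounds where $V_t>2V_{\tau^*-1}$, the triangle inequality loses only a constant factor and the telescoping above goes through. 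Summing the meta-algorithm contribution, the two regime bounds, and the clipping error then gives \eqref{eq:optimizaedbound}.
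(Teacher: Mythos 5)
Your proposal is correct and takes essentially the same route as the paper's proof: the same clipping reduction, the same sleeping-experts/Hedge argument for the meta-level (with the $W_t\leftarrow S_t$ reset playing the role of the surrogate losses), the comparator-adaptive bound for $\cA_1$ at the origin on $[1..\bar\tau-1]$, and the FTL strong-convexity bound for the restarted instance on the final epoch. Your dyadic case split for the local-versus-global partial-sum issue is just a repackaging of the paper's step \eqref{eq:this}, which multiplies by the conjugate to get $(\sqrt{Q_{1:t}}-\sqrt{Q_{1:\bar\tau-1}})^{-1}=(\sqrt{Q_{1:t}}+\sqrt{Q_{1:\bar\tau-1}})/Q_{\bar\tau:t}$ and then applies \eqref{eq:just}, with the clipping lower bound controlling the resulting logarithm exactly as you use it.
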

	\paragraph{Parameter-Turning.} Setting $\eta = \sqrt{\ln T}/(RL\sqrt{T})$ in Theorem \ref{thm:master} leads to the first OCO algorithm that makes two calls per round to an LOO and guarantees a $\wtilde{O}(\sqrt{T})$ regret bound for strongly convex sets and general convex functions. We note that it is possible to tune $\eta$ adaptively using, for example, the doubling trick (see e.g.~\cite{cesa2006prediction}) to achieve a data-dependent regret bound of the form: \begin{align}\forall \u \in \K, \quad \sum_{t=1}^T \inner{\g_t}{\x_t - \u} \leq O\left(\sqrt{R \tilde\mu^{-1} V_T \ln T} \right), \quad \text{where \ \ $\tilde\mu\coloneqq \min (\mu,1/R)$} \label{eq:regret}
	\end{align}
	and $V_T \coloneqq \sum_{t=1}^T \|\g_t\|^2$. Since this bound scales with $O(\sqrt{V_T})$ it can be much smaller than the standard worst-case $O(\sqrt{T})$ regret; especially when the loss functions are smooth \cite{srebro2010,cutkosky2018black}. Furthermore, via online-to-batch conversion,  \eqref{eq:regret} leads to the fast $O({1}/{T})$ rate in offline smooth optimization \cite{cutkosky2019}.

	\begin{proof}[{\bf Proof Sketch of Theorem \ref{thm:master}}]
		Let $\cT \coloneqq \{t\in[T] \colon R\mu \|\sum_{s=1}^{t-1} \tilde\g_s\|^2\leq  Q_{t-1} \ln T\}$, where $Q_{t}\coloneqq \sum_{s=1}^{t} \|\tilde \g_s\|^2$, for $t\in[T]$. Set $\bar \tau = \max{\cT}$ (note that $\cT\neq \emptyset$ since $1\in \cT$). First note that to get the desired result, it suffices to bound the `pseudo' regret $\sum_{t=1}^T \inner{\tilde \g_t}{\x_t - \u}$, where $\tilde \g_t \coloneqq \g_t  \mathbf{1}\{\|\g_t\|\geq \frac{L}{t}\}$. In fact, by Cauchy Schwarz, we have, for all $\u \in \K$ and $B_T \coloneqq 2R L \ln T$,
		\begin{align}
			\sum_{t=1}^T \inner{\g_t}{\x_t - \u} &\leq \sum_{t=1}^T (\inner{\tilde \g_t}{\x_t - \u} +   \mathbb{I}{\{\|\g_t\|< \tfrac{L}{t}\}}\inner{\g_t}{\x_t - \u}) \leq \sum_{t=1}^T\inner{\tilde \g_t}{\x_t - \u}  + B_T. \label{eq:reduction}
		\end{align}
		Thus, we now focus on bounding the `pseudo' regret. By our choice of weights $(q_t)$ and a result from the Sleeping Experts' setting, we show (see full proof in App.~\ref{sec:proofmain}) that the pseudo regret of Alg.~\ref{alg:EffAlg} is comparable to that of the subroutine $\cA_1$ [resp.~$\cA_{\bar \tau}$] on $[1..\bar \tau -1]$ [resp.~$[\bar \tau..T]$]. In particular,
		\begin{subequations}
			\begin{align}
				\sum_{t=1}^{\bar \tau -1} \inner{\tilde \g_t }{\x_t -\u} -\frac{\ln (T+1)}{\eta} - \frac{3\eta R^2}{4}Q_T  &\leq   \sum_{t=1}^{\bar \tau -1}  \inner{\tilde \g_t }{\u_t - \u}, \quad \text{(RHS = regret of $\cA_{1}$)} \label{eq:firstb}\\
				\sum_{t=\bar \tau}^{T}\inner{\tilde \g_t }{\x_t -\u} -\frac{\ln (T+1)}{\eta} - \frac{3\eta R^2}{4}Q_T 	& \leq	\sum_{t=\bar \tau}^{T} \inner{\tilde \g_t }{\w_t - \u},\quad \text{(RHS = regret of $\cA_{\bar\tau}$)} \label{eq:secondb}
			\end{align}
		\end{subequations}
		Note that $Q_T \leq V_T=\sum_{t=1}^T \|\g_t\|^2$. Thus, to get the desired regret bound, it suffices to bound the right-hand-sides of \eqref{eq:firstb} and \eqref{eq:secondb} by $O( \sqrt{R\mu^{-1} Q_T \ln T})$. We start with \eqref{eq:firstb}. Since $(\u_t)$ are the outputs of $\cA_1$ (the instance of Alg.~\ref{alg:comparatoradaptive} initialized at the beginning of round 1). Therefore, if we let $R^{\cA_1}_{1:\bar \tau -1}(\bm{0})$ be the regret of $\cA_1$ against the comparator $\bm{0}$ on the interval $[1..\bar \tau -1]$, we have
		\begin{align}
			\sum_{t=1}^{\bar \tau -1}  \inner{\tilde \g_t }{\u_t - \u} &= R^{\cA_1}_{1:\bar \tau -1}(\bm{0}) - \u^\top \sum_{t=1}^{\bar \tau-1}\tilde \g_t  \leq L + \sqrt{R\mu^{-1} Q_{T} \ln T},\label{eq:sanction}
		\end{align}
		where the last inequality follows by the fact that $R^{\cA_1}_{1:\bar \tau -1}(\bm{0}) \leq L$ (Lem.~\ref{lem:parameterfree-frankwolfe} and $\gamma_{\K}(\bm{0})=0$) and that $|\u^\top \sum_{t=1}^{\bar \tau-1}\tilde \g_t |\leq R \|\sum_{t=1}^{\bar \tau -1}\tilde \g_t \|\leq \sqrt{R\mu^{-1} Q_{T} \ln T}$ (by definition of $\bar \tau$). It remains to bound the RHS of \eqref{eq:secondb}. Note that by definition of $\bar \tau$, $(\w_t)_{t\in[\bar \tau..T]}$ are the outputs of $\cA_{\bar \tau}$ (the instance of Alg.~\ref{alg:comparatoradaptive} initialized at the beginning of round $\bar \tau$). Therefore, the RHS of \eqref{eq:secondb} is the regret $R^{\cA_{\bar \tau}}_{\bar \tau:T}(\bm{u})$ of $\cA_{\bar\tau}$ against comparator $\u$ between the rounds $[\bar \tau..T]$. Using Lem.~\ref{lem:parameterfree-frankwolfe} and \ref{lem:FTLregret}, the regret $R^{\cA_{\bar \tau}}_{\bar \tau:T}(\bm{u})$ satisfies 
		\begin{align}
			R^{\cA_{\bar\tau}}_{\bar \tau:T}(\u)   \leq R^{\cA_{\od}}_T(\gamma_{\K}(\u)) +  \sum_{t=\bar \tau}^T \frac{2\gamma_{\K}(\u)\|\tilde \g_t \|^2}{\mu\|\sum_{s=\bar \tau}^t \tilde \g_s\|}
			\leq O(R \sqrt{Q_T \ln T}) +\sum_{t=\bar \tau}^T \frac{2\|\tilde \g_t \|^2}{\mu\|\sum_{s=\bar \tau}^t \tilde \g_s\|}, \label{eq:pluto10}
		\end{align}
		for all $\u\in\K$, where we used that $\gamma_{\K}(\u)\leq 1$ for $\u \in \K$. We now bound the last term in \eqref{eq:pluto10}. By definition of $\bar \tau$ and the triangle inequality, we have 
		\begin{align}
			\forall t\geq \bar \tau, \quad \left\|\sum_{s=\bar \tau}^t \tilde \g_s \right\| &\geq 	\left\|\sum_{s=1}^t \tilde \g_s \right\|  - 	\left\|\sum_{s=1}^{\bar \tau -1} \tilde \g_s \right\| \geq \sqrt{R^{-1}\mu^{-1} Q_{1:t} \ln T} - \sqrt{R^{-1}\mu^{-1} Q_{1:\bar \tau-1}\ln T},
		\end{align}
		where $Q_{i:j} \coloneqq \sum_{s=i}^j \|\tilde \g_s\|^2$. Using this, we can bound the last sum in \eqref{eq:pluto10} by
		\begin{align}
			\sum_{t=\bar \tau}^T \frac{\|\tilde \g_t \|^2}{\|\sum_{s=\bar \tau}^t\tilde \g_s\|}  \leq \sum_{t=\bar \tau}^T \frac{ \sqrt{{R\mu}(\ln T)^{-1}} \|\tilde \g_t \|^2}{\sqrt{Q_{1:t}}- \sqrt{Q_{1:\bar \tau-1}}} 
			& = \sqrt{\frac{R\mu}{\ln T}}\sum_{t=\bar \tau}^T \frac{\|\tilde \g_t \|^2( \sqrt{Q_{1:t}}+ \sqrt{Q_{1:\bar \tau -1}})}{Q_{\bar \tau:t}},\nn \\
			& \leq 2\sqrt{\frac{R\mu Q_{1:\bar \tau -1}}{\ln T}} \sum_{t=\bar \tau}^T \frac{\|\tilde \g_t \|^2}{Q_{\bar \tau:t}} +  \sqrt{\frac{R\mu }{\ln T}} \sum_{t=\bar \tau}^T \frac{\|\tilde \g_t \|^2}{\sqrt{Q_{\bar \tau:t}}},\nn \\
			& \leq 4\sqrt{\frac{R\mu Q_{T} }{\ln T}}\ln \frac{Q_{\bar \tau:T}}{\|\tilde \g_{\bar \tau}\|^2}+  2 \sqrt{\frac{R\mu Q_{\bar \tau:T}}{\ln T}}. \label{eq:fan}
		\end{align}
		(See \eqref{eq:just} in App.~\ref{sec:proofmain} for the last step.) Finally, note that by definition of $\bar \tau$, we have $\|\tilde\g_{\bar \tau}\|>0$. This in turn implies that $\|\tilde \g_t\|\geq L/t$ since $\tilde \g_t = \g_t \cdot \mathbf{1}\{\|\g_t\| \geq L/t\}$. Therefore, we have $Q_{\bar \tau:T}/\|\tilde \g_{\bar \tau}\|^2 \leq T$. Combining this with \eqref{eq:fan}, \eqref{eq:pluto10}, \eqref{eq:sanction}, \eqref{eq:firstb}, \eqref{eq:secondb}, and \eqref{eq:reduction}, we obtain the desired result.
	\end{proof}

	\section{An Algorithm for General Convex Sets}
	\label{sec:general}
	In this section, we show how our algorithm for strongly convex sets can be used in the general setting. The proofs are deferred to Appendix \ref{sec:proofs}. Without loss of generality (by applying a reparametrization if necessary, see e.g.~\cite[Sec.~6]{mhammedi2021efficient}), we assume that there exists $r,R>0$ such that the set $\K$ satisfies
	\begin{align}
		\mathbb{B}(r)\subseteq \K \subseteq \mathbb{B}(R). \label{eq:sandwitch}
	\end{align} 
	By approximating $\K$ with a strongly convex set we will be able to apply the algorithm from \S\ref{sec:projectionfree} and get a $\wtilde O(T^{2/3})$ regret guarantee. We show that this can be achieved with an $\wtilde O(dT)$ expected number of calls to an LO Oracle for $\K$ after $T$ rounds. To achieve this, we will approximate the feasible set $\K$ by the strongly convex set $\K_{\epsilon}$:
	\begin{align}
		\K_{\epsilon} \coloneqq (\sqrt{1-\epsilon}\K^\circ \oplus \sqrt{\epsilon} \mathbb{B}(r)^\circ)^\circ,\quad \text{( ${}^\circ$ denotes the polar as in Def.~\ref{def:polar})} \label{eq:approx}
	\end{align}
	where $\oplus$ is the $L_2$ addition of sets; for any two convex sets $A,B$, $A\oplus B$ is the unique closed convex set whose support function satisfies $\sigma_{A\oplus B}^2 = \sigma^2_A + \sigma^2_B$ \cite{firey1962p}. \cite{molinaro2020} showed that $\K_{\epsilon}$ approximates $\K$ well. Furthermore, $\K_{\epsilon}$ is strongly convex with respect to $\|\cdot\|$. We note that using the set $((1-\epsilon)\K^\circ + \epsilon \mathbb{B}(r)^\circ)^\circ$ (also strongly convex \cite{molinaro2020}) in lieu of $\K_\epsilon$ would lead to a suboptimal $O(T^{3/4})$ regret after tuning $\epsilon$.
	\begin{lemma}
		\label{lem:approxlemma}
		Suppose that $\K$ satisfies \eqref{eq:sandwitch} and let $\kappa \coloneqq R/r$. Then, for all $\epsilon \in[0,1]$, the convex set $\K_{\epsilon}$ in \eqref{eq:approx} is  {\bf I)} $2 \epsilon/(r \sqrt{1+\epsilon \kappa^2})$-strongly convex with respect to $\|\cdot\|$, and {\bf II)} $\K_\epsilon \subseteq \K \subseteq \sqrt{1 + \kappa^2 \epsilon} \cdot \K_\epsilon$.
	\end{lemma}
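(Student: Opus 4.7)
The plan is to reduce both statements to closed-form properties of the squared gauge $\gamma_{\K_\epsilon}^2$, which is pinned down by polarity and $L_2$-addition. Setting $C \coloneqq \sqrt{1-\epsilon}\K^\circ \oplus \sqrt{\epsilon}\mathbb{B}(r)^\circ$ so that $\K_\epsilon = C^\circ$, I combine four standard identities for closed convex bodies containing $0$ in the interior---(i) the polar duality $\sigma_{D^\circ}=\gamma_D$ and $\gamma_{D^\circ}=\sigma_D$; (ii) $\sigma_{\lambda D}=\lambda\sigma_D$ for $\lambda>0$; (iii) the defining identity $\sigma_{A\oplus B}^2 = \sigma_A^2 + \sigma_B^2$ of $L_2$-addition; and (iv) $\mathbb{B}(r)^\circ = \mathbb{B}(1/r)$, so that $\sigma_{\mathbb{B}(r)^\circ}(x) = \|x\|/r$---to obtain the clean identity
\begin{equation*}
\gamma_{\K_\epsilon}(x)^2 \;=\; \sigma_C(x)^2 \;=\; (1-\epsilon)\,\gamma_\K(x)^2 \;+\; \frac{\epsilon}{r^2}\,\|x\|^2.
\end{equation*}

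Part II then follows by pointwise gauge comparison. The inclusions $\mathbb{B}(r)\subseteq\K\subseteq\mathbb{B}(R)$ are equivalent to the envelope $\|x\|/R \leq \gamma_\K(x) \leq \|x\|/r$. Plugging the right-hand bound into the closed form gives $\gamma_{\K_\epsilon}^2 \geq (1-\epsilon)\gamma_\K^2 + \epsilon\gamma_\K^2 = \gamma_\K^2$, and plugging the left-hand bound gives $\gamma_{\K_\epsilon}^2 \leq (1-\epsilon + \epsilon\kappa^2)\gamma_\K^2 \leq (1+\epsilon\kappa^2)\gamma_\K^2$. Via the standard equivalence $A\subseteq B \Longleftrightarrow \gamma_B \leq \gamma_A$, these two inequalities translate exactly to $\K_\epsilon \subseteq \K \subseteq \sqrt{1+\epsilon\kappa^2}\,\K_\epsilon$. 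A byproduct I re-use below is $\gamma_{\K_\epsilon}(v) \leq \|v\|/r$, equivalently $\mathbb{B}(r) \subseteq \K_\epsilon$.

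For Part I, the closed form exhibits $\gamma_{\K_\epsilon}^2$ as the sum of a convex term and the $2$-strongly convex term $(\epsilon/r^2)\|\cdot\|^2$, so it is itself $(2\epsilon/r^2)$-strongly convex with respect to $\|\cdot\|$. Applied to $x,y\in\K_\epsilon$ and $\theta\in[0,1]$, this yields
\begin{equation*}
\gamma_{\K_\epsilon}\!\bigl(\theta x+(1-\theta)y\bigr)^2 \;\leq\; 1 - \frac{\epsilon}{r^2}\,\theta(1-\theta)\|x-y\|^2.
\end{equation*}
To lift this to strong convexity of the \emph{set} $\K_\epsilon$, I set $z = \theta x + (1-\theta)y$ and check that $\gamma_{\K_\epsilon}(z+v)\leq 1$ for $\|v\|$ suitably small: gauge subadditivity together with $\gamma_{\K_\epsilon}(v) \leq \|v\|/r$ (from Part II) reduces the check to $\|v\|/r \leq 1 - \gamma_{\K_\epsilon}(z)$, and the elementary bound $1 - \sqrt{1-u} \geq u/2$ then permits $\|v\|$ up to order $(\epsilon/r)\cdot\theta(1-\theta)\|x-y\|^2/2$.

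The main obstacle is pinning down the tight coefficient $2\epsilon/(r\sqrt{1+\epsilon\kappa^2})$ rather than the weaker $\Theta(\epsilon/r)$ that the naive subadditive route above yields. The expected cure is to avoid gauge subadditivity altogether and instead expand $\gamma_{\K_\epsilon}^2(z+v)$ directly from the closed form, then apply Cauchy--Schwarz to the two summands $(1-\epsilon)\gamma_\K^2$ and $(\epsilon/r^2)\|\cdot\|^2$; the factor $\sqrt{1+\epsilon\kappa^2}$ in the denominator should then emerge from the diameter bound $\|z\|\leq R$ that holds on $\K_\epsilon\subseteq\K\subseteq\mathbb{B}(R)$, combined with the lower bound $\gamma_\K(z) \geq \|z\|/R$. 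Every other ingredient---the derivation of the closed form, Part II, and the $(2\epsilon/r^2)$-strong convexity of $\gamma_{\K_\epsilon}^2$---is routine bookkeeping with the polar and $L_2$-addition identities.
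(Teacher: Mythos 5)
Your route is essentially the paper's: the same closed-form identity $\gamma_{\K_\epsilon}^2=(1-\epsilon)\gamma_{\K}^2+\tfrac{\epsilon}{r^2}\|\cdot\|^2$, strong convexity of the squared gauge, and a subadditivity lift to strong convexity of the set; your Part II is correct and in fact more self-contained than the paper's (which cites \cite{molinaro2020}), and your byproduct $\mathbb{B}(r)\subseteq\K_\epsilon$ is valid. The genuine gap is in Part I, and you name it yourself: what you actually prove is $\gamma_{\K_\epsilon}(\theta\x+(1-\theta)\y)\le\sqrt{1-\tfrac{\epsilon}{r^2}\theta(1-\theta)\|\x-\y\|^2}$, and combining this with $\gamma_{\K_\epsilon}(\v)\le\|\v\|/r$ and $1-\sqrt{1-u}\ge u/2$ only admits perturbations $\|\v\|\le\tfrac{\epsilon}{2r}\theta(1-\theta)\|\x-\y\|^2$, i.e.\ modulus $\epsilon/r$ (or $\epsilon/(r\sqrt{1+\epsilon\kappa^2})$ with the weaker ball inclusion the paper uses). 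Since $\epsilon/r<2\epsilon/(r\sqrt{1+\epsilon\kappa^2})$ whenever $\epsilon\kappa^2<3$, the stated modulus is not established in that regime, and your proposed ``cure'' is only a hope: expanding $\gamma_{\K_\epsilon}^2(\z+\v)$ directly and using Cauchy--Schwarz gives $\gamma_{\K_\epsilon}(\z+\v)^2\le\gamma_{\K_\epsilon}(\z)^2+2\gamma_{\K_\epsilon}(\z)\|\v\|/r+\|\v\|^2/r^2$, which is the same bound as the subadditive route and cannot manufacture the missing factor; indeed, when $\x,\y$ lie along a flat face of $\K$ the term $(1-\epsilon)\gamma_{\K}^2$ contributes no curvature at all, so no refinement of this decomposition yields a deficit better than $\tfrac{\epsilon}{r^2}\theta(1-\theta)\|\x-\y\|^2$.

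For calibration, the paper obtains the stated constant by asserting in \eqref{eq:new} that $\gamma_{\K_\epsilon}^2$ has convexity deficit $\tfrac{2\epsilon}{r^2}\lambda(1-\lambda)\|\x-\y\|^2$, but the exact parallelogram identity (and the midpoint bound of Lemma \ref{lem:midpointstrong}, converted at the usual rate) gives only $\tfrac{\epsilon}{r^2}\lambda(1-\lambda)\|\x-\y\|^2$; taking $\K=\mathbb{B}(r)$, $\epsilon=1$, $\lambda=1/2$ shows the stronger coefficient fails. So the factor of $2$ you could not reach appears to come from a slip in the paper rather than from an idea you are missing, and what your (honest) execution delivers is what this approach actually supports---a modulus within a factor $2$ of the statement, which is all that matters downstream. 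As a proof of the lemma as literally stated, however, your proposal is incomplete at exactly this constant-chasing step.
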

	The proof of the lemma is in App.~\ref{sec:approxlemma}. Since $\K_{\epsilon}$ is strongly convex and approximates $\K$ well, the idea now is to run our algorithm from the previous section (Alg.~\ref{alg:EffAlg}) on $\K_{\epsilon}$, which would require $\cA_{\FTL}(\K_\epsilon)$ as the subroutine $\cB$ in Alg.~\ref{alg:EffAlg}, and tune $\epsilon$ to obtain a $\wtilde O(T^{2/3})$ regret bound against comparators in $\K$. The only challenge remaining is the ability to perform Linear Optimization on $\K_\epsilon$ efficiently (to have an efficient implementation of the $\cA_{\FTL}(\K_\epsilon)$ subroutine).
	We note that \cite{molinaro2020} essentially showed that LO on $\K_\epsilon$ can be done using $\wtilde O(d^m)$ calls to an LOO $\cO_{\K}$ for $\K$, for some $m\in \mathbb{N}$. However, \cite{molinaro2020} did not provide any upper bound on $m$. 
	In this work, we take a different approach to \cite{molinaro2020} and show that LO on $\K_{\epsilon}$ can be done efficiently using only $\wtilde{O}(d)$ expected number of calls to $\cO_{\K}$. We do this by reducing LO on $\K_{\epsilon}$ to Euclidean projection onto $\K^\circ$ (Lem.~\ref{lem:optCeps} and Rem.~\ref{rem:projection} below). We then show that the latter can be done using $\wtilde{O}(d)$ expected number of calls to $\cO_{\K}$. The next lemma constitutes our main technical result for this section:
	\begin{lemma}
		\label{lem:optCeps}
		Let $\epsilon \in(0,1), \w\in \reals^d\setminus \{\bm{0}\}$ and define $\Psi(\gamma, \u)\coloneqq \gamma^2 +  r^2 \|\gamma \u  -\w\|^2$. For $(\gamma_*, \u_*)\in \argmin_{\gamma \geq 0, \u \in \sqrt{1-\epsilon}\K^\circ}\Psi(\gamma,\u)$, let $\lambda_* \coloneqq \sqrt{\Psi(\gamma_*,\u_*)}$ and $\z_*\coloneqq {\w}-\gamma_*\u_*$. Then,
		\begin{align}\inner{\z_*}{\w}>\frac{\|\w\|^2 \epsilon}{2 \kappa^3 (1+\kappa^2 \epsilon)^{3/2}}, \quad \text{and} \quad  \v_* \coloneqq \frac{\lambda_* \z_*}{\inner{\z_*}{\w}} \in \argmax_{\v\in \K_\epsilon}\inner{\v}{\w}.\label{eq:cold}
		\end{align}
	\end{lemma}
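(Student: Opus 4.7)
The plan is to first substitute $\y = \gamma \u$ to reduce the two-variable primal to a one-variable problem. Since $\gamma_{\K^\circ} = \sigma_\K$, the constraint $\u \in \sqrt{1-\epsilon}\,\K^\circ$ is equivalent to $\sigma_\K(\y) \leq \gamma \sqrt{1-\epsilon}$, and the $\Psi$-minimizing $\gamma$ equals $\sigma_\K(\y_*)/\sqrt{1-\epsilon}$. This turns $\Psi$ into $\lambda_*^2 = \min_{\y}\bigl[\sigma_\K(\y)^2/(1-\epsilon) + r^2 \|\y - \w\|^2\bigr]$, and the first-order optimality condition reads $\nabla \sigma_\K(\y_*) = r^2(1-\epsilon)\,\z_* / \sigma_\K(\y_*)$ where $\z_* = \w - \y_*$. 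Two identities that fall out of this will be used throughout: (i) $\gamma_\K(\z_*) = \sigma_\K(\y_*)/(r^2(1-\epsilon))$, since the subgradient $\nabla\sigma_\K(\y_*)\in\partial\K$ (so $\gamma_\K = 1$ there); and (ii) complementary slackness between $\gamma_*$ and the conic constraint $\sigma_\K(\y)\leq\gamma\sqrt{1-\epsilon}$ gives $\inner{\z_*}{\w} = \lambda_*^2/r^2$.

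\textbf{Identification of $\v_*$ as the argmax.} Next, I would recognize the reduced primal as an infimal convolution of $f_1(\y) = \sigma_\K(\y)^2/(1-\epsilon)$ with $f_2(\y) = r^2 \|\y\|^2$. Using the norm-duality identity $(\tfrac12 \sigma_\K^2)^* = \tfrac12 \gamma_\K^2$ together with Fenchel conjugate scaling rules, the dual representation of $\lambda_*^2$ becomes $\sup_{\z} \inner{\z}{\w}^2/\sigma_A(\z)^2$, which is the variational formula for $\gamma_A(\w)^2 = \sigma_{\K_\epsilon}(\w)^2$ via polar duality $\sigma_{\K_\epsilon} = \gamma_A$. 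By Fenchel--Young at the optimum, the primal $\z_*$ coincides up to a positive scaling with the dual maximizer, which is itself a positive multiple of $\argmax_{\v \in \K_\epsilon}\inner{\v}{\w}$. The normalization $\lambda_*/\inner{\z_*}{\w}$ then lands $\v_*$ on the support-hyperplane of $\K_\epsilon$: substituting the KKT identities from the previous paragraph into $\sigma_A(\v_*)^2 = (1-\epsilon)\gamma_\K(\v_*)^2 + \epsilon\|\v_*\|^2/r^2$ and using $\inner{\z_*}{\w} = \lambda_*^2/r^2$ simplifies to $\sigma_A(\v_*) = 1$; combined with $\inner{\v_*}{\w} = \lambda_* = \sigma_{\K_\epsilon}(\w)$ (direct from the definition of $\v_*$), this establishes that $\v_* \in \argmax_{\v\in\K_\epsilon}\inner{\v}{\w}$.

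\textbf{Lower bound and main obstacle.} For the explicit bound on $\inner{\z_*}{\w}$, I would combine the identity $\inner{\z_*}{\w} = \lambda_*^2/r^2 = \sigma_{\K_\epsilon}(\w)^2/r^2$ from step~(ii) with the set-approximation part of Lemma~\ref{lem:approxlemma}, namely $\K \subseteq \sqrt{1 + \kappa^2 \epsilon}\,\K_\epsilon$, to obtain $\sigma_{\K_\epsilon}(\w) \geq \sigma_\K(\w)/\sqrt{1+\kappa^2\epsilon} \geq r\|\w\|/\sqrt{1+\kappa^2\epsilon}$ (using $\mathbb{B}(r) \subseteq \K$). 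Tracking the $\epsilon$- and $\kappa$-dependence through the residual factors of $r$ and the strong-convexity scale from Lemma~\ref{lem:approxlemma} yields the claimed $\|\w\|^2 \epsilon/(2 \kappa^3 (1+\kappa^2\epsilon)^{3/2})$ lower bound. The main obstacle is the algebraic verification in the second paragraph that the specific scaling $\v_* = \lambda_* \z_*/\inner{\z_*}{\w}$ lies exactly on $\partial \K_\epsilon$; this requires bookkeeping of several quadratic identities coming from the KKT conditions and the $L_2$-sum duality, and confirming that the factors of $r$, $(1-\epsilon)$, and $\epsilon$ combine precisely so that $\sigma_A(\v_*)=1$ and $\inner{\v_*}{\w}=\sigma_{\K_\epsilon}(\w)$ simultaneously.
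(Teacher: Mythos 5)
Your route is genuinely different from the paper's. The paper takes no derivatives at all: it invokes the Firey characterization of the $L_2$-sum (its \eqref{eq:char}), proves $\lambda_*=\inf\{\lambda>0\colon\w\in\lambda\K_\epsilon^\circ\}$ by a two-sided argument, extracts a decomposition $\w=\lambda_*\sqrt{1-\alpha_*}\u_*+\lambda_*\sqrt{\alpha_*}\v_*$ with $\v_*=\x_*/(r\|\x_*\|)$ for a maximizer $\x_*$, and then lower-bounds $\alpha_*$ via a quadratic inequality; that $\alpha_*$ estimate is where the $\epsilon/\kappa^3$ factors in \eqref{eq:cold} come from. Your KKT/conjugacy route replaces this bookkeeping by two exact identities: Euler's relation $\inner{s}{\y_*}=\sigma_\K(\y_*)$ for $s\in\partial\sigma_\K(\y_*)$ turns stationarity into $r^2\inner{\z_*}{\y_*}=\gamma_*^2$, hence $\inner{\z_*}{\w}=\lambda_*^2/r^2$ and $\inner{\v_*}{\w}=\lambda_*$ exactly, which shortcuts the $\alpha_*$ analysis and gives a lower bound on $\inner{\z_*}{\w}$ (of order $\|\w\|^2/(1+\kappa^2\epsilon)$, modulo the caveat below) that is stronger than the one claimed. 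The only remaining housekeeping on your side is the degenerate case $\y_*=\bm{0}$ and replacing $\nabla\sigma_\K$ by subgradients, which is harmless since every $s\in\partial\sigma_\K(\y_*)$ lies on the boundary of $\K$ when $\y_*\neq\bm{0}$.

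The step that does not close as written is ``simplifies to $\sigma_A(\v_*)=1$''. With $\v_*=r^2\z_*/\lambda_*$, your own identities give $(1-\epsilon)\gamma_\K(\v_*)^2=\gamma_*^2/\lambda_*^2$ and $\epsilon\|\v_*\|^2/r^2=\epsilon r^2\|\z_*\|^2/\lambda_*^2$, so $\sigma_A(\v_*)^2=(\gamma_*^2+\epsilon r^2\|\z_*\|^2)/\lambda_*^2<1$ whenever $\z_*\neq\bm{0}$; likewise the Fenchel dual of your one-variable primal is $\min_{\y}\bigl[\sigma_\K(\y)^2/(1-\epsilon)+(r^2/\epsilon)\|\y-\w\|^2\bigr]$, not the version with plain $r^2$, so $\lambda_*\neq\sigma_{\K_\epsilon}(\w)$ for the $\K_\epsilon$ of \eqref{eq:approx} (take $\K=\mathbb{B}(1)$, $r=1$, $\w=\e_1$: then $\K_\epsilon=\mathbb{B}(1)$, yet $\lambda_*=1/\sqrt{2-\epsilon}$ and $\v_*=\e_1/\sqrt{2-\epsilon}$). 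Your argument is exactly correct for the set whose polar is $\sqrt{1-\epsilon}\K^\circ\oplus\mathbb{B}(1/r)$, i.e.\ with $\sigma_A^2=(1-\epsilon)\gamma_\K^2+\|\cdot\|^2/r^2$, and this is precisely the set the paper's own proof works with, since its \eqref{eq:char} drops the $\sqrt{\epsilon}$ weight on the ball. So the mismatch is an $\epsilon$-scaling inconsistency between $\Psi$ and the definition of $\K_\epsilon$ that you inherit from the statement rather than introduce; still, you should not assert $\sigma_A(\v_*)=1$ with the $\epsilon$-weighted formula you wrote: either the ball term of $\Psi$ must carry a $1/\epsilon$, or $A$ must be the unweighted sum.
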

	The lemma, whose proof is in App.~\ref{sec:optCeps}, tells us that Linear Optimization on the approximate set $\K_\epsilon$ can be performed efficiently whenever optimizing the function $\Psi$ in Lemma \ref{lem:optCeps} can be done efficiently. The next lemma gives us a clue as to how we can perform the latter (the proof is in App.~\ref{sec:oned}):
	\begin{lemma}
		\label{lem:oned}
		Let $\epsilon,r>0, \w\in \reals^d$. Then, for all $b>0$, $\Theta\colon\gamma \mapsto \gamma^2 + r^2 \inf_{\u \in \sqrt{1-\epsilon}\K^\circ}\|\gamma \u - \w\|^2$ is a differentiable $1$-strongly convex function on $(0,b]$ and $|\Theta'(\gamma)|\leq 4b+2 r \|\w\|$, $\forall \gamma\in (0,b]$. 
	\end{lemma}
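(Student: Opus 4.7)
}
The central object is $h(\gamma) \coloneqq \inf_{\u \in C}\|\gamma\u - \w\|^2$ with $C \coloneqq \sqrt{1-\epsilon}\K^\circ$, so that $\Theta(\gamma) = \gamma^2 + r^2 h(\gamma)$. Since $\gamma \mapsto \gamma^2$ is $2$-strongly convex with derivative $2\gamma$, it suffices to show that $h$ is convex and differentiable on $(0,b]$ with derivative bounded in magnitude by $2\|\w\|/r$; this will give $|\Theta'(\gamma)| \leq 2\gamma + 2r\|\w\| \leq 4b + 2r\|\w\|$, and $\Theta$ will inherit $2$-strong convexity (hence the claimed $1$-strong convexity) from $\gamma^2$. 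I first note that $C$ is compact (from $\K \supseteq \mathbb{B}(r)$ we get $\K^\circ \subseteq \mathbb{B}(1/r)$) and contains the origin (immediate from the definition of the polar).

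For convexity of $h$, the key idea is the Minkowski identity $\alpha C + \beta C = (\alpha+\beta)C$, valid for any convex $C$ and any $\alpha,\beta \geq 0$. Fix $\gamma_1,\gamma_2 \geq 0$ and $\lambda \in [0,1]$, and set $\gamma = \lambda\gamma_1 + (1-\lambda)\gamma_2$. Compactness of $C$ furnishes optimizers $\u^\star_i$ attaining $h(\gamma_i)$; writing $\v_i \coloneqq \gamma_i \u^\star_i \in \gamma_i C$, the convex combination $\v \coloneqq \lambda\v_1 + (1-\lambda)\v_2$ lies in $\lambda\gamma_1 C + (1-\lambda)\gamma_2 C = \gamma C$. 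Since $\v$ is feasible for the infimum defining $h(\gamma)$ and the squared norm is convex, one obtains $h(\gamma) \leq \|\v - \w\|^2 \leq \lambda h(\gamma_1) + (1-\lambda) h(\gamma_2)$.

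For differentiability, I rewrite $\argmin_{\u \in C}\|\gamma\u - \w\|^2 = \argmin_{\u \in C}\|\u - \w/\gamma\|^2 = \mathrm{proj}_C(\w/\gamma) =: \u^\star(\gamma)$, which is uniquely defined and continuous in $\gamma > 0$ by nonexpansiveness of the Euclidean projection onto a closed convex set. The envelope/Danskin theorem then applies to the jointly continuous map $(\gamma,\u) \mapsto \|\gamma\u - \w\|^2$ with compact parameter set $C$ and unique minimizer, yielding differentiability of $h$ with $h'(\gamma) = 2\inner{\gamma\u^\star(\gamma) - \w}{\u^\star(\gamma)}$. To bound this derivative I use two observations: (i) $\|\u^\star(\gamma)\| \leq \sqrt{1-\epsilon}/r \leq 1/r$ since $\u^\star(\gamma) \in C$; and (ii) because $0 \in C$, the family $\{\gamma C\}_{\gamma \geq 0}$ is nested and increasing, so $h$ is nonincreasing, which gives $\|\gamma\u^\star(\gamma) - \w\|^2 = h(\gamma) \leq h(0) = \|\w\|^2$. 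Cauchy--Schwarz then yields $|h'(\gamma)| \leq 2\|\w\|/r$, and the derivative bound on $\Theta$ follows.

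The main obstacle is establishing convexity of $h$. A direct partial-minimization argument fails because $(\gamma,\u) \mapsto \|\gamma\u - \w\|^2 = \gamma^2\|\u\|^2 - 2\gamma\inner{\u}{\w} + \|\w\|^2$ is \emph{not} jointly convex in $(\gamma,\u)$: the $\gamma^2\|\u\|^2$ term has a non-positive-semidefinite Hessian at $(\gamma_0,\u_0) = (0,0)$, and bilinear interactions spoil joint convexity even away from the origin. The resolution is to change variables via $\v \coloneqq \gamma\u$, which linearizes the dependence on $\gamma$ inside the norm at the cost of making the constraint $\v \in \gamma C$ depend on $\gamma$; the Minkowski identity $\alpha C + \beta C = (\alpha+\beta)C$ for convex $C$ then handles the convex combination in a single line.
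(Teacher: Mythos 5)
Your proof is correct, but it follows a genuinely different route from the paper's in both of the nontrivial steps. For convexity of $\gamma\mapsto\inf_{\u\in\sqrt{1-\epsilon}\K^\circ}\|\gamma\u-\w\|^2$, the paper starts from convexity of the distance function $\varphi(\w)=\inf_{\u\in\sqrt{1-\epsilon}\K^\circ}\|\u-\w\|$, passes to its perspective transform $(\gamma,\w)\mapsto\gamma\varphi(\w/\gamma)$ (convex on $\reals_{>0}\times\reals^d$), and then composes with the nondecreasing convex map $x\mapsto x^2$; you instead work directly with the reparametrization $\v=\gamma\u$ and the Minkowski identity $\alpha C+\beta C=(\alpha+\beta)C$ for convex $C$, which gives convexity of the squared quantity in one elementary step (equivalently, partial minimization of $\|\v-\w\|^2$ over the convex cone $\{(\gamma,\v):\v\in\gamma C\}$) and correctly diagnoses why naive joint convexity of $(\gamma,\u)\mapsto\|\gamma\u-\w\|^2$ fails. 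For differentiability, the paper identifies $\varphi^2$ with the Moreau envelope of $\iota_{\sqrt{1-\epsilon}\K^\circ}$, hence $C^1$, and differentiates $\gamma^2\varphi^2(\w/\gamma)$; you invoke Danskin with the unique minimizer $\u^\star(\gamma)=\mathrm{proj}_C(\w/\gamma)$, arriving at the same derivative formula $\Theta'(\gamma)=2\gamma+2r^2\inner{\u^\star(\gamma)}{\gamma\u^\star(\gamma)-\w}$. Your Lipschitz bound is in fact slightly sharper: using $0\in C$ so that $h$ is nonincreasing and $h(\gamma)\le\|\w\|^2$, you get $|\Theta'(\gamma)|\le 2\gamma+2r\|\w\|\le 2b+2r\|\w\|$, whereas the paper bounds each term by Cauchy--Schwarz and $\|\u_*\|\le 1/r$ to get $4b+2r\|\w\|$; both imply the stated inequality. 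Note that, like the paper, you use $\mathbb{B}(r)\subseteq\K$ (hence $\K^\circ\subseteq\mathbb{B}(1/r)$) from the surrounding context \eqref{eq:sandwitch}, which is not restated in the lemma itself; and your Danskin step implicitly needs continuity of $\partial_\gamma\|\gamma\u-\w\|^2=2\inner{\gamma\u-\w}{\u}$, which is immediate, so there is no gap.
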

	
	\begin{remark}
		\label{rem:projection}
		Lem.~\ref{lem:oned} implies that as long as we can evaluate $S(\gamma, \w)\coloneqq \inf_{\u \in \sqrt{1-\epsilon}\K^\circ}\|\gamma\u - \w\|^2$ efficiently, which amounts to performing a Euclidean projection of $\w$ onto $\gamma\sqrt{1-\epsilon}\K^\circ$, minimizing the function $\Psi$ in Lem.~\ref{lem:optCeps} reduces to minimizing the convex function $\Theta$ in Lem.~\ref{lem:oned}. The latter can be done at a linear rate using a Golden-Section search thanks to the Lipschitzness of $\Theta$ (see Lem.~\ref{lem:oned}). 
	\end{remark}
	We now show that approximate Euclidean Projections onto the set $\sqrt{1-\epsilon}\K^\circ$, which are needed to compute the function $S$ in Remark \ref{rem:projection}, can be done using only $\wtilde{O}(d)$ expected \# of calls to a LOO for $\K$:
	\begin{proposition}
		\label{prop:eucproj}
		Let $\epsilon\in(0,1)$, $b>0$, and $\cU\coloneqq \sqrt{1-\epsilon} \K^\circ$. If $\K$ satisfies \eqref{eq:sandwitch} with $r,R>0$, then there exists a randomized algorithm $\cA^{\cU}_{\mathsf{proj}}$ that given any $(\w, \gamma, \delta) \in \reals^d \times(0,b] \times (0,1)$, outputs $\hat\u \in  \cU$ s.t.~$\|\gamma \hat\u-\w\|^2- \inf_{\u\in\cU} \|\gamma \u-\w\|^2\leq \delta$, with prob.~$\geq 1-\delta$, using $O(d \cdot  \ln \rho)$ expected \# of calls to an LOO for $\K$ and $O(d^3  \ln^{O(1)}(\rho))$ additional arithmetic operations, where $\rho \coloneqq d  R (b/r+ \|\w\|)/(r\delta )$. 
	\end{proposition}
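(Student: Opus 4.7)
The approach is to recast the desired projection as minimizing the explicit convex quadratic $f(\u) := \|\gamma \u - \w\|^2$ over $\cU := \sqrt{1-\epsilon}\K^\circ$, accessed through a separation oracle, and then to invoke a state-of-the-art cutting-plane method. Since $f$ and $\nabla f(\u) = 2\gamma(\gamma\u - \w)$ are given in closed form, each value or gradient evaluation costs $O(d)$ and requires no oracle access; oracle calls are reserved entirely for deciding feasibility in $\cU$. The first step is to build a separation oracle from the LOO: given a candidate $\u' \in \reals^d$, query the LOO to obtain $\x^* \in \argmax_{\x \in \K} \inner{\x}{\u'}$, so that $\sigma_\K(\u') = \inner{\x^*}{\u'}$. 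Because $\u' \in \cU$ iff $\sigma_\K(\u') \leq \sqrt{1-\epsilon}$, the comparison $\inner{\x^*}{\u'} \leq \sqrt{1-\epsilon}$ decides membership; when it fails, $\{\u'' \in \reals^d : \inner{\x^*}{\u''} \leq \sqrt{1-\epsilon}\}$ contains $\cU$ but not $\u'$, so $\x^*$ supplies a separating hyperplane. One LOO call thus implements exactly one separation-oracle query.

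Next, I would collect the geometric and analytic bounds needed to apply a cutting-plane convergence estimate. From $\mathbb{B}(r) \subseteq \K \subseteq \mathbb{B}(R)$ one deduces $\sqrt{1-\epsilon}\mathbb{B}(1/R) \subseteq \cU \subseteq \mathbb{B}(1/r)$, giving an enclosing ball of radius $1/r$ and an inscribed ball of radius $\sqrt{1-\epsilon}/R$ as initial data for the method. For $\gamma \in (0,b]$ and $\u \in \cU$, the bound $\|\nabla f(\u)\| = 2\gamma\|\gamma\u - \w\| \leq 2b(b/r + \|\w\|)$ controls the Lipschitz constant of $f$ on $\cU$, and $\inf_{\cU}f \geq 0$ together with $f$ evaluated at the origin bounds its range. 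Feeding these quantities, together with the target accuracy $\delta$, into a cutting-plane method with linear-in-$d$ oracle complexity (e.g., the Lee--Sidford--Wong method) produces a feasible $\hat\u \in \cU$ with $f(\hat\u) - \inf_{\cU} f \leq \delta$ using $O(d \ln \rho)$ expected separation-oracle queries --- and hence $O(d \ln \rho)$ LOO calls --- and $O(d^3 \ln^{O(1)}\rho)$ additional arithmetic, since the logarithmic factor absorbs $R/r$, the Lipschitz constant $b(b/r + \|\w\|)$, the diameter $1/r$, and $1/\delta$, matching the stated $\rho = dR(b/r+\|\w\|)/(r\delta)$. The high-probability guarantee at confidence $1-\delta$ follows from the cutting-plane method's expected guarantee by standard amplification (run $O(\ln(1/\delta))$ independent copies at accuracy $\delta/2$ and take the best iterate, measured by the computable value of $f$), at the cost of absorbing an extra $\ln(1/\delta)$ inside the logarithm.

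The main technical obstacle is bookkeeping rather than a deep new argument: carefully tracking the condition-number-like ratios $R/r$, $\sqrt{1-\epsilon}$, $b$, $\|\w\|$ and $d$ to certify that the combined logarithm matches $\ln \rho$ up to constants, and importing the black-box cutting-plane guarantee (typically stated for Lipschitz convex objectives on a convex body sandwiched between two balls) in a way that yields a feasible output. A secondary wrinkle is ensuring the returned $\hat\u$ lies exactly in $\cU$ rather than merely near it; this is handled by invoking the separation oracle once on the candidate and, if necessary, retracting along the returned halfspace, whose magnitude can be made $O(\sqrt{\delta})$ by halving the internal target accuracy, and then absorbing the resulting perturbation into the final bound via a triangle inequality on $\|\gamma \hat\u - \w\|^2$.
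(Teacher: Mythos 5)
Your proposal follows essentially the same route as the paper: implement a separation oracle for $\cU=\sqrt{1-\epsilon}\K^\circ$ via a single LOO call on $\K$ (polar duality), note that the quadratic objective and its gradient are computable in closed form in $O(d)$, bound the Lipschitz constant and the inscribed/circumscribed balls for $\cU$, and feed everything into the Lee–Sidford–Wong cutting-plane machinery. The paper packages this slightly differently (Proposition \ref{prop:optimize} reduces convex minimization to linear optimization over an epigraph, then invokes Theorem \ref{thm:lee}), whereas you invoke a convex-minimization form of the same cutting-plane result directly; these are equivalent up to presentation.

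Two small points in your plan would need tightening to recover the stated bound exactly. First, the amplification step is both unnecessary and harmful: Theorem \ref{thm:lee} already delivers a correctness guarantee with probability $1-\varepsilon$ together with an expected running-time bound, so you can simply set the failure parameter to $\delta$. Running $O(\ln(1/\delta))$ independent copies, as you propose, would multiply the expected LOO calls by $\ln(1/\delta)$, giving $O(d\ln^2\rho)$ rather than the stated $O(d\ln\rho)$. Second, "retracting along the returned halfspace" does not in general land the candidate inside $\cU$: the weak separation oracle's hyperplane separates the query point from $\mathbb{B}(\cU,-\delta)$, but moving a fixed amount along its normal need not reach $\cU$ for a general convex body. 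The paper's fix (Proposition \ref{prop:lee}) is to compute an overestimate $\tilde\gamma$ of the gauge $\gamma_{\cU}(\hat\u)$ using $O(\ln(\kappa/\delta))$ separation calls and rescale $\hat\u \mapsto \hat\u/\tilde\gamma$, which guarantees exact feasibility since $\cU$ contains a ball around the origin; some such rescaling argument is needed in place of a single halfspace retraction.
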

	This result follows from I) a recent result on the Oracle complexity of convex optimization (which subsumes Euclidean projection) using a Separation Oracle \cite{lee2018efficient}; and II) a Separation Oracle for $\K^\circ$ (or $\sqrt{1-\epsilon}\K^\circ$) can be implemented with one call to an LOO for $\K$ \cite{mhammedi2021efficient}. The proof is Appendix \ref{sec:eucproj}. 
	
	In light of Rem.~\ref{rem:projection}, the approximate projection algorithm onto $\sqrt{1-\epsilon}\K^\circ$ from Prop.~\ref{prop:eucproj} brings us a step closer to being able to perform efficient LO on $\K_\epsilon$. We still need a 1d optimization algorithm to minimize the convex function $\Theta$ in Lem.~\ref{lem:oned}. For this, we use the Golden-section search algorithm $\cA_{\mathsf{GS}}$ (Alg.~\ref{alg:gs} in App.~\ref{sec:gs}) and leverage the Lipschitzness of $\Theta$ (see Lem.~\ref{lem:oned}) to show the following:
	\begin{lemma}
		\label{lem:gs}
		Let $\epsilon$, $\K$, $r,R$, $\cU$ and $\cA_{\mathsf{proj}}^{\cU}$ be as in Prop.~\ref{prop:eucproj} and $\kappa\coloneqq R/r$. Let $\w\in\reals^d\setminus \{\bm{0}\}$, $\delta \in(0,1)$, and $\Theta$ be as in Lem.~\ref{lem:oned}. There exists an algorithm $\cA_{\mathsf{GS}}$ (Alg.~\ref{alg:gs}) that given input $(\w, \delta)$ outputs $\hat \gamma$ such that $\P[|\Theta(\hat \gamma)- \inf_{\gamma \geq0}\Theta(\gamma)|\leq r^2\delta]\geq 1 - \delta$ using $K\leq  3 \ln ({(8\kappa^2  + 4 \kappa) \|\w\|^2}/{\delta}) \eqqcolon N$ calls to $\cA_{\mathsf{proj}}^{\cU}$ with inputs $(\w, \gamma_1, \frac{\delta}{5N}), \dots,(\w, \gamma_K, \frac{\delta}{5N})$, for some $\gamma_i \in[0,R\|\w\|], i\in[K]$. 
	\end{lemma}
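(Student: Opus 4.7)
The plan is to first bound the location of $\gamma_*:=\argmin_{\gamma\ge 0}\Theta(\gamma)$ to a bounded interval, then run Golden-section search over that interval using the approximate evaluations of $\Theta$ provided by $\cA^{\cU}_{\mathsf{proj}}$, and finally union-bound over the $K\le N$ evaluation calls. The hardest step will be tracking how the additive evaluation errors from $\cA^{\cU}_{\mathsf{proj}}$ propagate through the iterations of Golden-section search.

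First, I would localize $\gamma_*$: since $\Theta(\gamma)\ge \gamma^2$ by construction and $\Theta(0)=r^2\|\w\|^2$, the minimizer must satisfy $\gamma_*\le r\|\w\|\le R\|\w\|$, so the one-dimensional minimization can be restricted to $[0,R\|\w\|]$. Instantiating Lemma~\ref{lem:oned} with $b=R\|\w\|$ gives that $\Theta$ is $1$-strongly convex on $(0,R\|\w\|]$ and $L_\Theta$-Lipschitz with $L_\Theta:=4R\|\w\|+2r\|\w\|\le 2\|\w\|(2R+r)$, which extends to the closed interval by continuity.

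Next, I would select the iteration count. Classical Golden-section search shrinks the candidate interval by the factor $1/\phi$ per step, where $\phi=(1+\sqrt{5})/2$, so after $K$ steps the interval has length at most $\Delta_K:=R\|\w\|\,\phi^{-K}$. Requiring $L_\Theta\Delta_K\le r^2\delta/2$ gives $\phi^K\ge 2L_\Theta R\|\w\|/(r^2\delta)$, and substituting $L_\Theta\le 2\|\w\|(2R+r)$ with $\kappa=R/r$ bounds the right-hand side by $(8\kappa^2+4\kappa)\|\w\|^2/\delta$. Since $1/\ln\phi<2.08<3$, the choice $K\le 3\ln\!\big((8\kappa^2+4\kappa)\|\w\|^2/\delta\big)=N$ suffices. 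Each evaluation of $\Theta(\gamma)=\gamma^2+r^2 S(\gamma,\w)$ is realized by a single call to $\cA^{\cU}_{\mathsf{proj}}(\w,\gamma,\delta/(5N))$, which by Proposition~\ref{prop:eucproj} returns some $\hat\u\in\cU$ with $\big|\|\gamma\hat\u-\w\|^2-S(\gamma,\w)\big|\le \delta/(5N)$ with probability at least $1-\delta/(5N)$; a union bound over the $K\le N$ calls then ensures that every $\Theta$-evaluation is accurate to additive error $\epsilon_0:=r^2\delta/(5N)$ with probability at least $1-\delta/5\ge 1-\delta$.

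The main obstacle is showing that Golden-section search remains accurate under uniform evaluation errors of size $\epsilon_0$. I would argue inductively as follows: if at some step an erroneous comparison between test points $\gamma_1<\gamma_2$ causes $\gamma_*$ to be excluded from the kept interval, then necessarily $|\Theta(\gamma_1)-\Theta(\gamma_2)|\le 2\epsilon_0$; combining this with convexity of $\Theta$ and the fact that Golden-section's test points lie at fixed positive fractions of the current interval (so that $(\gamma_2-\gamma_1)/(\gamma_*-\gamma_1)\ge 1/\phi$ whenever $\gamma_*$ falls outside $[\gamma_1,\gamma_2]$) forces the kept interval to still contain a point whose $\Theta$-value is within $2\phi\epsilon_0$ of $\Theta(\gamma_*)$. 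Iterating this bound over the $K\le N$ steps yields $|\Theta(\hat\gamma)-\Theta(\gamma_*)|\le L_\Theta\Delta_K+O(N\epsilon_0)\le r^2\delta/2+O(r^2\delta)\le r^2\delta$ after tightening the numerical constant in the tolerance $\delta/(5N)$ if needed, which is exactly the claimed guarantee.
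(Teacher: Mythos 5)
Your proposal is correct and follows essentially the same route as the paper. The structure — localize $\gamma_*$ to $[0,R\|\w\|]$, run Golden-section with inexact evaluations from $\cA^{\cU}_{\mathsf{proj}}$, union-bound the failure probabilities, and track how per-step evaluation error propagates through the search — matches the paper's proof, which packages the error-propagation step into a separate induction (Lemma~\ref{lem:subgold}) showing that after $i$ steps the kept interval still contains a point with $\Theta$-value within $2(i-1)\varphi\epsilon$ of optimal; your inductive sketch (``the kept interval still contains a point within $2\phi\epsilon_0$ of $\Theta(\gamma_*)$'') is exactly that lemma's content, just not fully written out. Two small differences worth noting: you localize $\gamma_*$ via $\Theta(\gamma)\geq\gamma^2$ and $\Theta(0)=r^2\|\w\|^2$, a more elementary argument than the paper's appeal to Lemma~\ref{lem:optCeps}; and your constant bookkeeping at the end ($r^2\delta/2 + 2K\varphi\epsilon_0$ with $\epsilon_0 = r^2\delta/(5N)$) slightly overshoots $r^2\delta$ and relies on the hedge ``tighten the tolerance if needed,'' whereas the paper's Algorithm~\ref{alg:gs} actually queries $\cA^{\cU}_{\mathsf{proj}}$ with the tighter tolerance $\delta/(4K\varphi)$ so the accumulated error comes out to exactly $r^2\delta/2$ and the total to $r^2\delta$. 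Neither difference is a gap; your plan is sound.
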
 
	Using Lemma \ref{lem:optCeps} and the algorithm $\cA_{\mathsf{GS}}$ from Lemma \ref{lem:gs}, we now have all the ingredients to build our LOO for $\K_{\epsilon}$.  Our candidate LOO is displayed in Algorithm \ref{alg:LOOCeps}, whose guarantee we now state:
	\begin{lemma}[Weak LOO on $\K_{\epsilon}$]
		\label{lem:linopt}
		Suppose that $\K$ satisfies \eqref{eq:sandwitch} with $r,R>0$, and let $\kappa \coloneqq R/r$. Given any $\w\in \reals^d\setminus \{\bm{0}\} $, $\epsilon>0$, and $\delta\in(0,1)$ such that $\delta \leq \frac{\epsilon^4}{29^4 \kappa^{12}}$, Algorithm \ref{alg:LOOCeps} outputs $\tilde \v$ such that
		\begin{align}
			\mathbb{P}\left[	\tilde \v \in \K_{\epsilon} \quad \text{and}\quad \exists \v_* \in \argmin_{\v\in \K_\epsilon}\inner{\w}{\v}, \text{ s.t. }  \|\tilde \v - \v_*\| \leq {484^4 	 R\delta^{1/4}   \kappa^{32}}{\epsilon^{-4}} \right] \geq 1 - 2\delta, \label{eq:target}
		\end{align}
		using $O(d \cdot \ln (d\kappa/\delta))$ expected number of calls to an LOO for $\K$.
	\end{lemma}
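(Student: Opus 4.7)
The plan is to have Algorithm~\ref{alg:LOOCeps} mimic the closed-form construction of Lemma~\ref{lem:optCeps}: approximately minimize the joint function $\Psi(\gamma,\u) = \gamma^2 + r^2\|\gamma\u-\w\|^2$ over $\gamma\geq 0$ and $\u\in\cU\coloneqq\sqrt{1-\epsilon}\K^\circ$, then form $\tilde\v$ as the approximate analogue of $\v_* = \lambda_*\z_*/\inner{\z_*}{\w}$. The minimization proceeds in two stages. First, I would run $\cA_{\mathsf{GS}}$ (Lemma~\ref{lem:gs}) to produce $\hat\gamma$ with $|\Theta(\hat\gamma)-\Theta(\gamma_*)|\leq r^2\delta$ (with probability $\geq 1-\delta$), where $\Theta(\gamma)=\inf_{\u\in\cU}\Psi(\gamma,\u)$; the $1$-strong convexity of $\Theta$ (Lemma~\ref{lem:oned}) then upgrades this to $|\hat\gamma-\gamma_*|\leq r\sqrt{2\delta}$. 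Second, I would call $\cA_{\mathsf{proj}}^{\cU}$ (Proposition~\ref{prop:eucproj}) once more with this $\hat\gamma$ to get $\hat\u\in\cU$ with $\|\hat\gamma\hat\u-\w\|^2-\inf_{\u\in\cU}\|\hat\gamma\u-\w\|^2\leq\delta$; the $2\hat\gamma^2$-strong convexity of $\u\mapsto\|\hat\gamma\u-\w\|^2$ converts this into a bound on $\hat\gamma\hat\u$ and, in combination with the first stage, on $\tilde\z\coloneqq\w-\hat\gamma\hat\u$ relative to $\z_*$.

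The central technical step is to propagate these two errors through the nonlinear formula $\tilde\v = \tilde\lambda\tilde\z/\inner{\tilde\z}{\w}$ with $\tilde\lambda\coloneqq\sqrt{\Psi(\hat\gamma,\hat\u)}$. Using the triangle inequality and the containment $\cU\subseteq\mathbb{B}(1/r)$, I would bound $\|\tilde\z-\z_*\|$ and $|\tilde\lambda-\lambda_*|$ by quantities of order $\mathrm{poly}(\kappa,\|\w\|)\cdot\sqrt{\delta}$. The delicate piece is the denominator: Lemma~\ref{lem:optCeps} furnishes the explicit lower bound $\inner{\z_*}{\w}>\|\w\|^2\epsilon/(2\kappa^3(1+\kappa^2\epsilon)^{3/2})$, and the hypothesis $\delta\leq\epsilon^4/(29^4\kappa^{12})$ is calibrated precisely so that the perturbation $|\inner{\tilde\z-\z_*}{\w}|$ remains well below a constant fraction of this lower bound. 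Once that is secured, the Lipschitzness of $(a,b)\mapsto a/b$ away from $b=0$ produces $\|\tilde\v-\v_*\|$ of order $R\delta^{1/4}\kappa^{32}\epsilon^{-4}$. The exponent $1/4$ on $\delta$ arises from two compounding square roots: one in converting $\Theta$-value gaps into $\gamma$-distance via strong convexity, and a second when the numerator error is balanced against the denominator error in the quotient.

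Membership $\tilde\v\in\K_\epsilon$ is not automatic from this approximate construction, so Algorithm~\ref{alg:LOOCeps} performs a final multiplicative contraction of the candidate toward the origin (recall $\bm{0}\in\K_\epsilon$) with factor $1-O(\delta^{1/4}\mathrm{poly}(\kappa/\epsilon))$; this only shifts the output by an amount absorbed into the error tolerance, since $\|\v_*\|\leq R$. The probability bound $1-2\delta$ is obtained by a union bound over the failure events of $\cA_{\mathsf{GS}}$ and the extra projection call. For the oracle budget, $\cA_{\mathsf{GS}}$ makes $O(\ln(\kappa\|\w\|/\delta))$ calls to $\cA_{\mathsf{proj}}^{\cU}$ plus the one additional call, each costing $O(d\ln(dR(\|\w\|+1/r)/(r\delta)))$ expected LOO queries on $\K$ by Proposition~\ref{prop:eucproj}; absorbing poly-log factors yields the claimed $O(d\ln(d\kappa/\delta))$ expected calls. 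The main obstacle throughout is the amplification of approximation errors by the $\epsilon$-sized denominator $\inner{\z_*}{\w}$, which forces both the stringent calibration of $\delta$ in the hypothesis and the large $\kappa^{32}\epsilon^{-4}$ factor in the final bound; the high $\kappa$-exponent itself comes from repeatedly invoking the containments $\mathbb{B}(r)\subseteq\K\subseteq\mathbb{B}(R)$ and the approximation estimate $\K\subseteq\sqrt{1+\kappa^2\epsilon}\K_\epsilon$ from Lemma~\ref{lem:approxlemma} at each step of the error propagation.
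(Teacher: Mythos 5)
Your proposal follows essentially the same route as the paper: run $\cA_{\mathsf{GS}}$ to locate $\hat\gamma$ and convert the $\Theta$-value gap to a $\gamma$-distance via $1$-strong convexity of $\Theta$, make one additional call to $\cA_{\mathsf{proj}}^{\cU}$ at $\hat\gamma$ and use strong convexity to control $\hat\gamma\hat\u$, propagate the errors through $\lambda\z/\inner{\z}{\w}$ using the explicit lower bound on $\inner{\z_*}{\w}$ from Lemma~\ref{lem:optCeps} (with the $\delta$ restriction guaranteeing the denominator stays bounded away from $0$), rescale to enforce membership in $\K_\epsilon$, and union-bound the two failure events. One small omission: to obtain the stated Oracle complexity $O(d\ln(d\kappa/\delta))$ free of any $\|\w\|$-dependence, Algorithm~\ref{alg:LOOCeps} first normalizes to $\bar\w=\w/\|\w\|$ (which leaves the argmin unchanged); running the procedure directly on $\w$ as you describe leaves a $\ln\|\w\|$ factor in the call count that does not disappear by ``absorbing poly-log factors.''
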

	For $\epsilon, \delta>0$, we define the \emph{Follow-The-Approximate-Leader} algorithm $\cA_{\mathsf{FTAL}}(\epsilon, \delta)$ (approximation of $\cA_{\mathsf{FTL}}(\K_{\epsilon})$) as being Alg.~\ref{alg:boundarywolf} with the step $\w_{t+1}\in \argmin_{\w\in \K}\sum_{s=1}^t \w^\top \g_s$ replaced by $\w_{t+1}= \cO_{\K}(\sum_{s=1}^t \g_s, \epsilon, \delta)$ (Alg.~\ref{alg:LOOCeps}). We now state the guarantee of Alg.~\ref{alg:EffAlg} with subroutine $\cB =\cA_{\mathsf{FTAL}}(\epsilon, \delta)$:
	\begin{algorithm}[t]
		\caption{Weak LOO on $\K_\epsilon$. $\cO_{\K}(\w,\epsilon, \delta)$}
		\label{alg:LOOCeps}
		\begin{algorithmic}[1]
			\REQUIRE Inputs $\w\in \reals^d, \epsilon>0, \delta \in(0,1)$. Instance of $\cA^{\cU}_{\mathsf{proj}}$ [resp.~$\cA_{\mathsf{GS}}$] from Prop.~\ref{prop:eucproj} [resp.~Lem.~\ref{lem:gs}].  
			\vspace{-0.4cm}
			\STATE \textbf{If} $\|\w\|= 0$, \textbf{Then} Return $\tilde \v=\bm{0}$; \textbf{Else} Set $\bar \w = \w/\|\w\|$ \algcomment{\tiny $\argmin_{\v\in \K_{\epsilon}}\inner{\v}{\w}=\argmin_{\v\in \K_{\epsilon}}\inner{\v}{\bar \w}$.} 
			\STATE Set $\hat \gamma =\cA_{\mathsf{GS}}(\bar \w, \delta)$.
			\STATE Set $\hat \u =  \cA_{\mathsf{proj}}^{\cU}(\bar \w,\hat \gamma,\delta)$. \algcomment{We recall that $\cU = \sqrt{1-\epsilon}\K^\circ$.}
			\STATE Set $\hat \z= \bar \w-\hat \gamma \hat \u $ and $\hat \lambda = \hat \gamma^2 + r^2 \|\hat \z \|^2$.
			\STATE Set $\hat \v = \hat \lambda \hat \z/\inner{\hat \z}{\bar \w}$ and  $\theta =1 +   \frac{576^2 \delta^{1/4}\kappa^{15}}{\epsilon^2}$.
			\STATE Return $\tilde \v = \hat \v/\theta$.
		\end{algorithmic}
	\end{algorithm}

	\begin{theorem}
		\label{thm:master2}
		Let $\rho\in(0,1)$. Suppose $\K$ satisfies \eqref{eq:sandwitch} for $r,R>0$, and let $\kappa \coloneqq {R}/{r}$. Let $(\x_t)$ be the outputs of Alg.~\ref{alg:EffAlg} in response to any sequence of subgradients $(\g_t\in \partial f_t(\x_t))$. If Alg.~\ref{alg:EffAlg} is run with $\cB=\cA_{\mathsf{FTAL}}(\epsilon, \delta)$ and params.~$\eta,\mu, L, \epsilon,\delta>0$ s.t.~$\max_{t\in[t]}\|\g_t\|\leq L$, $2R L\eta \leq 1$, $\mu =\frac{2 \epsilon}{r \sqrt{1+\epsilon \kappa^2}}$, and $\delta = \frac{\rho \epsilon^{16}}{484^{16} T \kappa^{128}}$, then for $V_T \coloneqq \sum_{t=1}^T \|\g_t\|^2$, with probability $\geq 1 - \rho$, $(\x_t)\subset \K$ and $\forall \u \in \K$,	
		\begin{align}
			\sum_{t=1}^T \inner{\g_t}{\x_t - \u} \leq  L+ L R T \kappa^2 \epsilon + \eta^{-1}{\ln (T+1)} + \frac{3 \eta R^2}{4}V_T  +\wtilde O\left(R\sqrt{{(1+\kappa^2 \epsilon)^{1/2} V_T }/{\epsilon} }\right).
		\end{align}
	\end{theorem}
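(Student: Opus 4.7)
The plan is to apply Theorem \ref{thm:master} on the strongly convex surrogate set $\K_{\epsilon}$ from \eqref{eq:approx} and then transfer the resulting bound to comparators in $\K$, carefully tracking the approximation error introduced by replacing the exact $\cA_{\mathsf{FTL}}(\K_{\epsilon})$ subroutine with the approximate $\cA_{\mathsf{FTAL}}(\epsilon,\delta)$.

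First, Lemma \ref{lem:approxlemma} supplies both the strong convexity modulus $\mu=2\epsilon/(r\sqrt{1+\epsilon\kappa^2})$ and the set nesting $\K_{\epsilon}\subseteq\K\subseteq\sqrt{1+\kappa^2\epsilon}\,\K_{\epsilon}$. For any $\u\in\K$, the rescaled comparator $\u'=\u/\sqrt{1+\kappa^2\epsilon}$ lies in $\K_{\epsilon}$, and
\begin{align}
\sum_{t=1}^T \inner{\g_t}{\x_t-\u}= \sum_{t=1}^T \inner{\g_t}{\x_t-\u'} + \sum_{t=1}^T\inner{\g_t}{\u'-\u}.
\end{align}
The shift term is bounded by $TL\|\u-\u'\|\le TLR(1-1/\sqrt{1+\kappa^2\epsilon})\le LRT\kappa^2\epsilon/2$ via $1-1/\sqrt{1+x}\le x/2$ for $x\ge 0$, producing the explicit $LRT\kappa^2\epsilon$ term. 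The inclusion $\K_{\epsilon}\subseteq\K$ also ensures that $\x_t=q_t\u_t+(1-q_t)\w_t\in\K$ as a convex combination of iterates in $\K_{\epsilon}$, provided both $\cA_1$ and $\cA_{\tau}$ output valid points in $\K_{\epsilon}$, which Lemma \ref{lem:linopt} guarantees on the high-probability event described below.

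It remains to bound $\sum_t\inner{\g_t}{\x_t-\u'}$, for which I would re-run the proof of Theorem \ref{thm:master} with $\K$ replaced by $\K_{\epsilon}$ and $\cB=\cA_{\mathsf{FTAL}}(\epsilon,\delta)$. The outer sleeping-experts reasoning leading to \eqref{eq:firstb}--\eqref{eq:secondb} is agnostic to the choice of subroutine, so the only place the weak LOO intervenes is the FTL step \eqref{eq:pluto10}. I condition on the event $\cE$ that every call to Alg.~\ref{alg:LOOCeps} during the $T$ rounds returns an iterate $\tilde\w_t\in\K_{\epsilon}$ within Euclidean distance $\Delta \coloneqq 484^4 R\delta^{1/4}\kappa^{32}/\epsilon^4$ of the exact FTL minimizer $\w_t^{\star}=\argmin_{\w\in\K_{\epsilon}}\inner{\w}{\sum_{s<t}\tilde\g_s}$. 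By Lemma \ref{lem:linopt} and a union bound over at most $T$ calls, the chosen $\delta=\rho\epsilon^{16}/(484^{16}T\kappa^{128})$ yields $\P(\cE)\ge 1-2T\delta\ge 1-\rho$. On $\cE$, decomposing $\sum_t\inner{\tilde\g_t}{\tilde\w_t-\w}=\sum_t\inner{\tilde\g_t}{\w_t^{\star}-\w}+\sum_t\inner{\tilde\g_t}{\tilde\w_t-\w_t^{\star}}$ and applying Lemma \ref{lem:FTLregret} to the first sum (valid because $\tilde\g_t$ is trivially a subgradient of the linear loss $\w\mapsto\inner{\w}{\tilde\g_t}$ at any point, including $\w_t^{\star}$) gives the exact FTL bound plus a controllable approximation slack. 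Substituting the chosen $\mu$ then turns the $\wtilde O(\sqrt{RV_T/\mu})$ term of \eqref{eq:optimizaedbound} into $\wtilde O(R(1+\kappa^2\epsilon)^{1/4}\sqrt{V_T/\epsilon})$ after using $Rr=R^2/\kappa$ and folding $\kappa$-factors into $\wtilde O$; adding this to the shift term, the origin-regret $L$ from the wrapper, and the $\eta^{-1}\ln(T+1)+3\eta R^2V_T/4$ contributions delivers the stated bound.

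The main obstacle is faithfully propagating the approximation slack through the telescoping chain culminating in \eqref{eq:fan}: a round-by-round application of $|\inner{\tilde\g_t}{\tilde\w_t-\w_t^{\star}}|\le L\Delta$ would add $TL\Delta$ to the regret, so one must verify that the choice of $\delta$ keeps this contribution dominated by either the $LRT\kappa^2\epsilon$ term or the $\wtilde O$ term. I expect this step to require a sharper per-round FTL gap estimate that exploits the closeness of the \emph{consecutive} approximate iterates $\tilde\w_t,\tilde\w_{t+1}$ directly, mirroring the Lipschitz-of-support-function argument underlying Lemma \ref{lem:FTLregret}, rather than a naive uniform $L\Delta$ bound at every round.
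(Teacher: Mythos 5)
Your outer scaffolding matches the paper's proof: the comparator shift $\u'=\u/\sqrt{1+\kappa^2\epsilon}\in\K_\epsilon$ via Lemma~\ref{lem:approxlemma}, the resulting $LRT\kappa^2\epsilon$ term, the value $\mu=2\epsilon/(r\sqrt{1+\epsilon\kappa^2})$, and the union bound over the LOO calls with the prescribed $\delta$ are all exactly the paper's steps. The genuine gap is at the decisive quantitative step, and you flag it yourself without closing it: you propose to thread the FTAL approximation slack $\sum_t\inner{\tilde\g_t}{\tilde\w_t-\w_t^{\star}}$ through the proof of Theorem~\ref{thm:master} at the FTL step \eqref{eq:pluto10}, observe that the naive bound is $TL\Delta$ with $\Delta=484^4R\delta^{1/4}\kappa^{32}\epsilon^{-4}$, and then defer to an unspecified ``sharper per-round FTL gap estimate.'' With the stated $\delta=\rho\epsilon^{16}/(484^{16}T\kappa^{128})$ one gets $\Delta=R(\rho/T)^{1/4}$, so $TL\Delta=LR\rho^{1/4}T^{3/4}$, which is not absorbed by any term of the claimed bound; hence, as written, the proposal does not establish the theorem, and the missing argument is precisely the content that had to be supplied. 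A second, smaller omission: Lemma~\ref{lem:linopt} only applies to $\w\neq\bm{0}$, so the rounds where $\sum_{s\le t}\tilde\g_s=\bm{0}$ need the explicit convention that both the exact and approximate leaders are taken to be $\bm{0}$ (cf.\ \eqref{eq:require} and Line~1 of Alg.~\ref{alg:LOOCeps}); your event does not cover this corner case.

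The paper avoids re-deriving the sleeping-experts/FTL chain with errors altogether. It introduces a virtual run $\cA_*$ of Alg.~\ref{alg:EffAlg} with the \emph{exact} subroutine $\cA_{\FTL}(\K_\epsilon)$ (subject to \eqref{eq:require}), fed the same subgradients $\g_t\in\partial f_t(\x_t)$ observed at the actual iterates, applies Theorem~\ref{thm:master} verbatim to its iterates $\x_t'$ against $\u'$ (this is \eqref{eq:silver}), and then charges the approximation only through the coupling term $\sum_t\inner{\g_t}{\x_t'-\x_t}$, bounded on the event from Lemma~\ref{lem:linopt} that $\|\x_t-\x_t'\|\le R/T$ for all $t$, contributing $O(LR)$ in total. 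So the error enters once, linearly in the iterate distance, rather than being propagated through the $1/\|\sum_{s}\tilde\g_s\|$ denominators of \eqref{eq:fan}. Your instinct that the error budget is tight is not unfounded: the per-call accuracy $R(\rho/T)^{1/4}$ does not literally yield $\|\x_t-\x_t'\|\le R/T$ either, so the paper's accounting implicitly needs $\delta$ scaling like $T^{-4}$ rather than $T^{-1}$; but the structural lesson stands---the clean way to finish is the coupling-to-an-exact-run argument (with $\delta$ chosen small enough that the per-round iterate perturbation is $O(R/T)$), not a sharpened FTL regret analysis, and your proof is incomplete exactly where that argument should appear.
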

	
	\paragraph{Parameter Tuning and Oracle Complexity.}
	By setting $\epsilon = 1/(\kappa^{4/3} T^{1/3})$ and $\eta = 1/(LR\sqrt{T})$ in the setting of Theorem~\ref{thm:master2}, we obtain a regret bound of the form $\wtilde{O}(\kappa T)^{2/3}$. By applying a certain reparametrization to $\K$ if necessary, the constant $\kappa$ may be assumed to be less than $d$ [resp.~$\sqrt{d}$] for general [resp.~centrally-symmetric] sets, without loss of generality (see e.g.~\cite{flaxman2005online,mhammedi2021efficient}).
	For the Oracle complexity, we have that by Lemma \ref{lem:linopt} and our choice of $\delta$ in Theorem~\ref{thm:master2}, the instance of Alg.~\ref{alg:EffAlg} in the setting of Theorem~\ref{thm:master2} makes $O(d \ln  (d\kappa  T /(\rho\epsilon)))$ expected number of calls to an LOO for $\K$. 
	
	\section{Discussion}
	\label{sec:discussion}
	In this paper, we presented the first projection-free algorithm to guarantee the optimal (up to-log-factors) $\wtilde{O}(\sqrt{T})$ regret bound in OCO with strongly convex sets, while using two LOO calls per round. Furthermore, this regret bound is \emph{dimension-free}, unlike those of some existing projection-free algorithms \cite{hazan2020,mhammedi2021efficient,garber2022new}. Our method leads to a $\wtilde{O}(T^{2/3})$ regret bound for general convex sets---improving over the previous best $O(T^{3/4})$ by \cite{hazan2012,garber2022new}---at the cost of $\wtilde O(d)$ more LO Oracle calls in expectation. This provides the practitioner with a new non-trivial way to trade-off computation for performance, which should be particularly useful when $\K$ is specified by an LOO (e.g.~in combinatorial optimization). Since our algorithm for general convex sets uses projections onto $\K^\circ$ (see Rem.~\ref{rem:projection}), one may wonder why not implement projections on $\K$ directly. The answer is that projections onto $\K^\circ$ can be done more efficiently using $\wtilde{O}(d)$ expected \# of calls to an LOO for $\K$ (see Prop.~\ref{prop:eucproj}). In Appendix~\ref{app:projection}, we show that projections onto $\K$ can be done using $\wtilde{O}(d^2)$ expected \# of calls to an LOO for $\K$ and an additional $\wtilde{O}(d^4)$ arithmetic operations. Combining this observation with Thm.~\ref{thm:master2} and the $O(T^{3/4})$ regret due to \cite{hazan2012}, one can conclude that a $\wtilde O(T^{\frac{1+i}{2+i}})$ regret is achievable for general OCO using $\wtilde O(d^{2-i})$ LOO calls/round, for $i\in \{0,1,2\}$. 
	Finally, we note that our algorithms currently require a scale parameter $L$ as input. However, using recent reductions due to \cite{mhammedi2019,mhammedi2020}, our algorithms can easily be made scale-free.

	\clearpage

	\section*{Acknowledgment}
		We thank Adam Block, Sasha Rakhlin, and Max Simchowitz for their helpful comments on the presentation. We acknowledge support from the ONR through awards N00014-20-1-2336 and N00014-20-1-2394.

	\bibliography{biblio}
	\bibliographystyle{alpha}
	\clearpage
	
	\clearpage
	\appendix
	
	\tableofcontents
	\clearpage

	\section{Algorithms Based on Linear Optimization vs Separation/Membership Evaluation}
	\label{sec:oracles}
	As observed by \cite{mhammedi2021efficient}, when it comes to the computational complexity of the Oracles, projection-free algorithms that use LO Oracles (e.g.~FW-style algorithms \cite{hazan2012,hazan2020}) and those that use Separation/Membership Oracles (e.g.~\cite{levy2019,mhammedi2021efficient,garber2022new}) may be viewed as complementary. This is because there are convex sets $\K$ on which LO can be done more efficiently than Membership Evaluation (e.g.~Matroid polytopes \cite{hazan2012}), and vice versa. The complexity of Membership Evaluation on $\K$ is essentially equivalent to the complexity of LO on $\K^\circ$ (see Appendix \ref{app:projection}), the polar set of $\K$ (see e.g.~\cite{mhammedi2021efficient}). 
	This, together with the fact that $(\K^\circ)^\circ=\K$ for a closed convex set $\K$, means that LO on $\K$ can be viewed as testing membership/performing separation on $\K^\circ$ (for a closed convex $\K$). Therefore, when it comes to the computational complexity of the Oracles, algorithms that use an LOO and those that use Separation/Membership Oracles may be viewed as complementary.

	\section{Proofs}
	\label{sec:proofs}
	\subsection{Proof of Lemma \ref{lem:FTLregret}}
	\begin{proof}[{\bf Proof of Lemma \ref{lem:FTLregret}}]
		\label{app:FTLregret}
		Let $\G_t \coloneqq \sum_{s=1}^t \g_s$. By the standard regret decomposition of FTL, we have $\sum_{t=1}^T\inner{\g_t}{\w_t - \u} \leq \sum_{t=1}^T\inner{\g_t}{\w_t - \w_{t+1}}$, for all $\u \in \K$ (see e.g.~\cite{cesa2006prediction}). We now bound the RHS of this inequality. Since $\w_t \in \argmin_{\w\in \K} \w^\top \G_t = \partial \sigma_{\K}(\G_t)$ \cite{hiriart2004}, we have by the Cauchy Schwarz inequality and Lemma \ref{lem:strong},
		\begin{align}
			\sum_{t=1}^T\inner{\g_t}{\w_t - \w_{t+1}} \leq \sum_{t=1}^T\|{\g_t}\|\|{\w_t - \w_{t+1}}\| \leq 2 \sum_{t=1}^T\frac{\|{\G_{t-1}- \G_{t}}\|^2}{\mu (\|\G_{t}\| +\|\G_{t-1}\|)}.
		\end{align} 
		This implies the desired result after dropping $\|\G_{t-1}\|$ from the denominator.
	\end{proof}
	
	\subsection{Proof of Lemma \ref{lem:parameterfree-frankwolfe}}
	\label{app:parameterfree-frankwolfe}
	\begin{proof}[{\bf Proof of Lemma \ref{lem:parameterfree-frankwolfe}}]
		To obtain the desired equality it suffices to add and subtract $\sum_{t=1}^T  \gamma_{\K}(\u) \inner{\g_t}{\w_t}$ from the RHS of \eqref{eq:subtract}, where $\gamma_{\K}$ is the Gauge function of the set $\K$ as in Definition \ref{def:supportfun}.
	\end{proof}
	
	\subsection{Proof of Theorem \ref{thm:master}}
	\label{sec:proofmain}
	\paragraph{Sleeping Experts.}
	To prove the regret bound of Theorem \ref{thm:master}, we will use a result from the Sleeping Experts' setting that we now review. In this setting, there are $k\in[N]$ experts, each of whom is either ``active'' or ``asleep'' at any given round. In particular, at the beginning of any round $t\geq 1$,  the learner receives a vector $\bm{I}_t\in \{0,1\}^N$ (which can be an arbitrary function of the past), such that $I_{t,k}=1$ if and only if expert $k\in [N]$ is active at round $t$. In this setting, we want an algorithm whose outputs $(\p_t)\subset \Delta_N$ guarantee a small regret $\sum_{t=1}^T I_{t,k} (\p_t^\top \bm{\ell}_t- \ell_{t,k})$ against expert $k$ on the rounds where they were active, where $\bm{\ell}_t\in \reals^N$ is the vector of experts' losses at round $t$. This can be achieved thanks to a known technique that reduces the Sleeping Experts' setting to the standard experts' setting \cite{adamskiy2012closer,gaillard2014second}. Next, we describe this reduction.
	
	Given any base algorithm $\cB$ for the standard experts' setting with $N$ experts, the technique consists of feeding the experts' algorithm $\cB$ surrogate losses $\tilde \ell_{t,k} =I_{t,k} \ell_{t,k} +(1-I_{t,k})\p_t^\top \bell_t$, where $\p_t$ is the weight vector played at round $t$ and $\bell_t$ is the observed loss vector. The weight vector $\p_t$ is such that $p_{t,k} = \frac{I_{t,k}  \pi_{t,k}}{\sum_{i=1}^N I_{t,i}  \pi_{t,i}},$
	where $\bm{\pi}_{t}$ is the output vector of the base algorithm $\cB$ at round $t$ given the history of surrogate losses $(\tilde {\bm\ell}_s)_{s<t}$. This strategy is summarized in Algorithm \ref{alg:sleeping}, whose guarantee is as follows:
	\begin{lemma}
		\label{lem:sleeping}
		Let $R_{s}^{\cB}(\e_k)$ denote the regret after $s\geq 1$ rounds of the base algorithm $\cB$ within Alg.~\ref{alg:sleeping} against expert $k\in[N]$. Then, the outputs of Alg.~\ref{alg:sleeping} in response to any sequence of loss vectors $(\bell_t)$ satisfy
		\begin{align}
			\forall s\geq1,	\forall k \in[N],\quad 	\sum_{t=1}^s I_{t,k} (\p_t^\top \bell_t-\ell_{t,k}) \leq \sum_{t=1}^s  \bm{\pi}_{t}^\top\tilde{\bm{\ell}}_t- \tilde \ell_{t,k} = R_{s}^{\cB}(\e_k).
		\end{align}
	\end{lemma}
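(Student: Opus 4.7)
The plan is to reduce the sleeping regret to the standard regret of the base algorithm $\cB$ via a \emph{per-round identity}, after which the displayed inequality follows by summation. The key structural fact I would establish first is that the surrogate losses are constructed precisely so that the instantaneous regret of $\cB$ against expert $k$ on $\tilde{\bm\ell}_t$ coincides (up to the activity indicator) with the sleeping regret on the original loss $\bm\ell_t$.

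Concretely, the first step is to verify the identity $\bm{\pi}_t^\top \tilde{\bm\ell}_t = \p_t^\top \bm\ell_t$. Expanding $\tilde\ell_{t,k} = I_{t,k}\ell_{t,k} + (1-I_{t,k})\p_t^\top \bm\ell_t$, the inner product splits as $\sum_k \pi_{t,k} I_{t,k}\ell_{t,k} + \p_t^\top \bm\ell_t \cdot \sum_k \pi_{t,k}(1-I_{t,k})$, and the definition $p_{t,k} = I_{t,k}\pi_{t,k}/\sum_i I_{t,i}\pi_{t,i}$ rewrites the first sum as $(\sum_i I_{t,i}\pi_{t,i}) \cdot \p_t^\top \bm\ell_t$. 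The two pieces then combine to give exactly $\p_t^\top \bm\ell_t$.

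The second step is an immediate consequence: subtracting $\tilde\ell_{t,k}$ from both sides yields
\begin{align}
\bm\pi_t^\top \tilde{\bm\ell}_t - \tilde\ell_{t,k} = \p_t^\top\bm\ell_t - I_{t,k}\ell_{t,k} - (1-I_{t,k})\p_t^\top\bm\ell_t = I_{t,k}(\p_t^\top \bm\ell_t - \ell_{t,k}).
\end{align}
Summing this identity from $t=1$ to $s$ gives the desired equality $\sum_{t=1}^s I_{t,k}(\p_t^\top \bm\ell_t - \ell_{t,k}) = \sum_{t=1}^s(\bm\pi_t^\top \tilde{\bm\ell}_t - \tilde\ell_{t,k}) = R_s^{\cB}(\e_k)$, which in turn implies the stated inequality.

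There is no real obstacle: both steps are short algebraic manipulations driven entirely by the definitions of $\p_t$ and $\tilde{\bm\ell}_t$. The only point that deserves care is confirming that the construction of $\p_t$ is what makes the cross-term $(1-I_{t,k})\p_t^\top \bm\ell_t$ in the surrogate exactly cancel the contribution of asleep experts in $\bm\pi_t^\top \tilde{\bm\ell}_t$; this is precisely the role of normalizing by $\sum_i I_{t,i}\pi_{t,i}$ in the definition of $p_{t,k}$. No concavity, convexity, or potential-function argument is needed—the reduction is exact per round.
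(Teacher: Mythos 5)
Your proof is correct and is exactly the standard sleeping-experts reduction the paper invokes (it cites \cite{gaillard2014second} rather than spelling it out): the per-round identity $\bm\pi_t^\top\tilde{\bm\ell}_t = \p_t^\top\bm\ell_t$, which uses that $\bm\pi_t$ is a probability vector and that $\sum_i I_{t,i}\pi_{t,i}>0$, gives $\bm\pi_t^\top\tilde{\bm\ell}_t-\tilde\ell_{t,k}=I_{t,k}(\p_t^\top\bm\ell_t-\ell_{t,k})$, and summing yields the claim (in fact with equality, which implies the stated inequality).
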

	This bound follows by a standard Sleeping Experts' reduction (see e.g.~\cite{gaillard2014second}). When Algorithm \ref{alg:sleeping} is instantiated with $\cB$ set to the Hedge algorithm \cite{freund1997decision} $\cA_{\mathsf{Hedge}}(\eta)$  with learning rate $\eta\in(0,1)$; that is, when $\bm\pi_{t+1}$ in Line \ref{line:hedge} of Alg.~\ref{alg:sleeping} is such that $\pi_{t+1,k}\propto e^{-\eta \sum_{s=1}^t \tilde \ell_{t,k}}$ for all $k$, then we have the following guarantee which follows from Lemma~\ref{lem:sleeping} and the regret bound of the Hedge algorithm:
	\begin{algorithm}
		\caption{Sleeping Experts' Reduction.}
		\label{alg:sleeping}
		\begin{algorithmic}[1]
			\REQUIRE Base expert algorithm $\cB$, parameter $\eta >0$, and number of experts $N\in\mathbb{N}$. 
			\STATE  Set $\bm{\pi}_1$ to the first output of $\cB$. 
			\FOR{$t=1,2,\dots$}
			\STATE Get vector $\bm{I}_t\in \{0,1\}^N$ from the environment. \algcomment{Determines the `awake' state of experts.}
			\STATE Play $\p_{t}$ such that $p_{t,k} = \frac{I_{t,k}  \pi_{t,k}}{\sum_{i=1}^N I_{t,i}  \pi_{t,i}}$, for all $k\in[N]$, and observe loss vector $\bm{\ell}_t$.\label{step:5}
			\STATE Define the surrogate losses $\tilde \ell_{t,k} \coloneqq I_{t,k} \ell_{t,k} + (1-I_{t,k})\p_t^\top \bm{\ell}_t$, for every $k\in[N]$.
			\STATE Get output $\bm\pi_{t+1}$ from $\cB$ given the history $(\tilde{\bell}_s)_{s\leq t}$. \label{line:hedge}
			\ENDFOR
		\end{algorithmic}
	\end{algorithm}
	\begin{align}
		\sum_{t=1}^s I_{t,k} (\p_t^\top \bell_t-\ell_{t,k}) \leq \sum_{t=1}^s  \bm{\pi}_{t}^\top\tilde{\bm{\ell}}_t- \tilde \ell_{t,k} \leq \frac{\ln N}{\eta} + \frac{3\eta }{4}\sum_{t=1}^s \text{Var}_{i\sim \bm\pi_{t}}[\tilde\ell_{t,i}]\leq \frac{\ln N}{\eta}  +\sum_{t=1}^s \frac{3\eta \|\tilde\bell_{t}\|^2_{\infty}}{4},  \label{eq:reductionregret}
	\end{align}
	for all $k\in [N]$, $s\geq 1$, and $\eta >0$ such that $\eta \cdot(\max_{i\in[N]}\ell_{t,i} - \min_{j\in[N]} \ell_{t,j})\leq 1$, for all $t\geq 1$. The bound in \eqref{eq:reductionregret} follows from \cite[(5) and Lemmas 1 and 4]{de2014follow}. We will use this guarantee to prove Theorem \ref{thm:master} next:
	\begin{proof}[{\bf Proof of Theorem \ref{thm:master}}] 
		First, we will show that Algorithm \ref{alg:EffAlg} is an instance of the Sleeping Experts' algorithm (Alg.~\ref{alg:sleeping}) with $\cB\equiv \cA_{\mathsf{Hedge}}(\eta)$ and where the experts' losses correspond to the losses of instances of $\cA_{\FTSL}$ (Alg.~\ref{alg:comparatoradaptive}) initialized at different rounds. This will allow us to invoke \eqref{eq:reductionregret} for certain intervals and show the desired regret bound. 
		
		\paragraph{Alg.~\ref{alg:EffAlg} as instance of sleeping experts.} We now describe an instance of Algorithm \ref{alg:sleeping} that uses $\cB \equiv \cA_{\mathsf{Hedge}}(\eta)$ and whose losses match those of Algorithm~\ref{alg:EffAlg}. This will allow us to invoke Lemma \ref{lem:sleeping} to bound the regret of the latter. Let $(\x_t)$ be the outputs of Algorithm~\ref{alg:EffAlg} and $\tilde \g_t \coloneqq \g_t \cdot \mathbf{1}\{\|\g_t\|\geq L/t\}$, where $\g_t \in \partial f_t(\x_t)$, for all $t\geq 1$. We will bound the `pseudo' regret $\sum_{t=1}^T \inner{\tilde \g_t}{\x_t - \u}$ as this will immediately imply a bound of the regret thanks to \eqref{eq:reduction}.

		Consider the instance of Algorithm \ref{alg:sleeping} with $N= T+1$ experts such that only two are awake at any given round $t\in[T]$; namely, expert $N$ and expert $\tau \in [N-1]$, where $$\tau= \max \cT_t; \quad   \cT_t \coloneqq   \{ s\in [t]\colon  R  \mu  \left\|\G_{s-1}\right\|^2 \leq   Q_{s-1}  \ln T\}; $$
		and $(\G_s,Q_s)\coloneqq (\sum_{i=1}^s \tilde \g_i, \sum_{i=1}^s \|\tilde \g_i\|^2$). Let $I_{t,N}=1$, for all $t\in[T]$. Further, for any $k\in[N]$, let $I_{t,k}= \mathbb{I}\{k=\max \cT_t\}$ and $\cI_k\coloneqq \{t\in[T]\colon I_{t,k}=1 \}$. It is easy to verify that $\cI_k$, which represents the subset of rounds where expert $k$ is awake, is an interval. 
		
		Now that we have described the sleep/awake schedule of the experts, we move to the definition of their losses $(\ell_{t,k})$. For $k\in[N-1]$ and $t\in\cI_k$, let $\v_{t,k}$ be the $(t-k+1)^{\text{th}}$ output of $\cA_{\FTSL}$ (Alg.~\ref{alg:comparatoradaptive}) given loss history $(f_s\colon\w\mapsto \inner{\tilde \g_s}{\w})_{k\leq s<t}$. Let $\v_{t,k}=\bm{0}$, for $t<k$. Further, let $\v_{t,N}$ be the $t^{\text{th}}$ output of $\cA_{\FTSL}$ given loss history $(f_s\colon\w\mapsto \inner{\tilde \g_s}{\w})_{s\in[t-1]}$. We define the loss $\ell_{t,k}$ [resp.~surrogate loss $\tilde\ell_{t,k}$] of expert $k\in[N]$ at round $t$ as
		\begin{align}
			\ell_{t,k} \coloneqq \inner{\tilde \g_t}{\v_{t,k}}, \quad \text{and}\quad
			[\text{resp.}\quad \tilde \ell_{t,k} = I_{t,k}\ell_{t,k}+(1- I_{t,k}) \p_t^\top \bell_t],\quad  \forall k\in [N]\label{eq:surrogate} 
		\end{align}
		where $(\p_t)$ is the played weight vector on Line \ref{step:5} of the Sleeping Experts' Algorithm \ref{alg:sleeping}. We now show that the instantaneous losses $(\p^\top_t \bell_t)$ of the instance of Alg.~\ref{alg:sleeping} under consideration match the (linearized) losses $(\inner{\tilde \g_t}{\x_t})$ of Algorithm \ref{alg:EffAlg}, which will allow us to take advantage of Lemma \ref{lem:sleeping} and derive a regret bound for the latter. 
		
		According to the definitions of $(\v_{t,k})$, one can check that the iterates $\u_t$ and $\w_t$ in Algorithm~\ref{alg:EffAlg} satisfy $\u_t = \v_{t,N}$ and $\w_t=\v_{t,\tau}$ for $\tau =\max \cT_t$. Furthermore, since the instance of Alg.~\ref{alg:sleeping} under consideration (whose outputs we denoted by $(\p_t)$) uses $\cB\equiv \cA_{\mathsf{Hedge}}(\eta)$, one can verify that at any round $t$, the scalar weight $q_t$ in Algorithm \ref{alg:EffAlg} satisfies
		\begin{align}
			(q_t, 1- q_t) = (p_{t,N}, p_{t,\tau})^\top \in \Delta_2,
		\end{align}
		Therefore, we have $\inner{\x_t}{\tilde \g_t}=\inner{q_t\u_t+(1-q_t)\w_t}{\tilde \g_t}= q_t\ell_{t,N}+ (1-q_t)\ell_{t,\tau}=\p_t^\top \bell_t$, where the last equality follows by the fact that only experts $N$ and $\tau = \max \cT_t$ are awake at around $t$. This shows that, indeed, the loss $\inner{\tilde \g_t}{\x_t}$ of Alg.~\ref{alg:EffAlg} matches that of the instance of Alg.~\ref{alg:sleeping} under consideration. Using this fact, we can now invoke Lemma \ref{lem:sleeping} (the Sleeping Experts' regret bound) and \eqref{eq:reductionregret} to bound the regret of Alg.~\ref{alg:EffAlg}. We will invoke the bound in \eqref{eq:reductionregret} on two intervals; $[1,\bar \tau-1]$ and $[\bar \tau, T]$, where 
		$$\bar \tau \coloneqq \max \cT_T,$$
		which represents the round $\tau$ such that, for all $t\geq \tau$, $\|\sum_{s=1}^{t}\tilde \g_s\|^2 > (R \mu)^{-1}\sum_{s=1}^{t}\|\tilde \g_s\|^2 \cdot \ln T$.
		\paragraph{Regret on the interval $[1..\bar \tau -1].$} By definition of $(\tilde\bell_t)$ in \eqref{eq:surrogate}, we have
		\begin{align}
			|\tilde\ell_{t,k}| \leq R \|\tilde \g_t\|.  \quad \text{for all }k\in[N]. \label{eq:boundedloss}
		\end{align}
		Thus, the choice of $\eta$ in the theorem statement ensures that $\eta \cdot (\max_{i\in[N]} \ell_{t,i} - \min_{j\in[N]} \ell_{t,j})\leq 1$, which means we can in 	instantiate the regret bound in \eqref{eq:reductionregret} against expert $k \in[N]$. Doing so for expert $k=N$ (who is awake in all rounds) between the rounds $s=1$ and $\bar \tau-1$, we get (recall that $N=T+1$)
		\begin{align}
			\sum_{t=1}^{\bar \tau -1} \inner{\tilde \g_t}{\x_t-\u_t}=\sum_{t=1}^{\bar \tau -1} (\p_t^\top \bell_t -\ell_{t,N})\leq \frac{\ln (T+1)}{\eta} + \frac{3\eta}{4} \sum_{t=1}^{\bar \tau-1} \|\tilde \bell_{t}\|_{\infty}^2, \label{eq:part1regret}
		\end{align}
		where the first equality follows by the definition of $(\bell_t)$ in \eqref{eq:surrogate} and the fact that $\v_{t,N}=\u_t$ (see previous paragraph). 
		Using \eqref{eq:boundedloss} again and rearranging \eqref{eq:part1regret}, we get that for all $\u \in \K$, 
		\begin{align}
			\sum_{t=1}^{\bar \tau-1}\inner{\tilde \g_t}{\x_t-\u}&  \leq 	\sum_{t=1}^{\bar \tau-1}\inner{\tilde \g_t}{\u_t-\u} + \frac{\ln (T+1)}{\eta} +\frac{3\eta R^2}{4}\sum_{t=1}^{\bar \tau-1}\|\tilde \g_t\|^2,\nn \\
			& \leq R^{\cA_1}_{1:\bar \tau-1}(\bm{0}) - \u^\top \sum_{t=1}^{\bar \tau -1}\tilde \g_t + \frac{\ln (T+1)}{\eta} +\frac{3\eta R^2}{4}\sum_{t=1}^{\bar \tau-1}\|\tilde \g_t\|^2,\nn \\
			& \leq L +\|\u\| \sqrt{R^{-1}\mu^{-1} Q_T \ln T}+ \frac{\ln (T+1)}{\eta} +\frac{3\eta R^2 Q_{T}}{4}, \label{eq:strong}
		\end{align}
		where the last inequality follows by I) the regret bound of $\cA_1$ from Lem.~\ref{lem:parameterfree-frankwolfe} and that of $\cA_{\od}$ at the origin; II) $\gamma_{\K}(\bm{0})=0$; and III) the definition of $\bar \tau$ which implies that $\|\sum_{t=1}^{\bar \tau-1} \tilde \g_t\|\leq \sqrt{R^{-1}\mu^{-1} Q_{1:\bar \tau-1} \ln T}$, where $Q_{1:\bar \tau-1} \coloneqq \sum_{t=1}^{\bar \tau -1}\|\tilde \g_t\|^2\leq  \sum_{t=1}^{T}\|\tilde \g_t\|^2 \eqqcolon  Q_T$.
		
		\paragraph{Regret on the interval $[\bar \tau..T].$} 
		We now bound the regret between the rounds $\bar \tau$ and $T$. By definition of $\bar \tau$, the instance $\cA_{\bar\tau}$ in Alg.~\ref{alg:EffAlg} is active between the rounds $\bar \tau$ and $T$. Thus, invoking the regret bound in \eqref{eq:surrogate} for the instance $\cA_{\bar \tau}$ between rounds $s=\bar \tau$ and $T$, we get, for any $\u \in \K$,
		\begin{align}
			\sum_{t=\bar \tau}^{T}\inner{\tilde \g_t}{\x_t-\u}&  \leq 	\sum_{t=\bar \tau}^{T}\inner{\tilde \g_t}{\w_t-\u} + \frac{\ln (T+1)}{\eta} +\frac{3\eta}{4}\sum_{t=\bar \tau}^{T} \text{Var}_{k\sim \bm{\pi}_t}[\tilde \ell_{t,k}],\nn \\
			& \leq \sum_{t=\bar \tau}^{T}\inner{\tilde \g_t}{\w_t-\u} + \frac{\ln (T+1)}{\eta} +\frac{3\eta R^2}{4}\sum_{t=\bar \tau}^{T} \|\tilde \g_t\|^2, \quad (\text{by}\ \eqref{eq:boundedloss}) \nn \\
			& \leq  R^{\cA_{\bar\tau}}_{\bar \tau:T}(\bm{u})  + \frac{\ln (T+1)}{\eta} +\frac{3\eta R^2 Q_T}{4},\label{eq:state} \\
		\end{align}
		It remains to bound the regret $R^{\cA_{\bar \tau}}_{\tau:T}(\u)$ of Algorithm \ref{alg:comparatoradaptive} using Lemma~\ref{lem:parameterfree-frankwolfe}. By the regret bounds of $\cA_{\FTSL}$ and $\cA_{\mathsf{FG}}$, we have 
		\begin{align}
			R^{\cA_{\bar\tau}}_{\bar \tau:T}(\u) =\sum_{t=\bar \tau}^T \inner{\tilde \g_t}{\w_t- \u}   & \leq R^{\cA_{\od}}_T(\gamma_{\K}(\u)) + \frac{\gamma_{\K}(\u)}{2\mu} \sum_{t=\bar \tau}^T \frac{\|\tilde \g_t\|^2}{\|\sum_{s=\bar \tau}^t \tilde \g_s\|},\nn \\
			& \leq O(R \sqrt{Q_T \ln T}) + \frac{\gamma_{\K}(\u)}{2\mu} \sum_{t=\bar \tau}^T \frac{\|\tilde \g_t\|^2}{\|\sum_{s=\bar \tau}^t \tilde \g_s\|}, \label{eq:pluto1}
		\end{align}
		We now bound the last term in \eqref{eq:pluto1}. By definition of $\bar \tau$ and the triangle inequality, we have 
		\begin{align}
			\forall t\geq \bar \tau, \quad \left\|\sum_{s=\bar \tau}^t \tilde \g_s \right\| &\geq 	\left\|\sum_{s=1}^t \tilde \g_s \right\|  - 	\left\|\sum_{s=1}^{\bar \tau -1} \tilde \g_s \right\| \geq \sqrt{R^{-1}\mu^{-1} Q_{1:t} \ln T} - \sqrt{R^{-1}\mu^{-1} Q_{1:\bar \tau-1}\ln T}. 
		\end{align}
		Using this, we can bound the last sum in \eqref{eq:pluto1} by
		\begin{align}
			\sum_{t=\bar \tau}^T \frac{\|\tilde \g_t\|^2}{\|\sum_{s=\bar \tau}^t\tilde \g_s\|}  & \leq \sqrt{\frac{R\mu}{\ln T}}\sum_{t=\bar \tau}^T \frac{\|\tilde \g_t\|^2}{\sqrt{Q_{1:t}}- \sqrt{Q_{1:\bar \tau-1}}} \nn \\
			& = \sqrt{\frac{R \mu}{\ln T}}\sum_{t=\bar \tau}^T \frac{\|\tilde \g_t\|^2( \sqrt{Q_{1:t}}+ \sqrt{Q_{1:\bar \tau -1}})}{Q_{\bar \tau:t}},\nn \\
			& \leq 2\sqrt{\frac{R\mu Q_{1:\bar \tau -1}}{\ln T}} \sum_{t=\bar \tau}^T \frac{\|\tilde \g_t\|^2}{Q_{\bar \tau:t}} +  \sqrt{\frac{R\mu }{\ln T}} \sum_{t=\bar \tau}^T \frac{\|\tilde \g_t\|^2}{\sqrt{Q_{\bar \tau:t}}},\nn \\
			& \leq 4\sqrt{\frac{R\mu Q_{T} }{\ln T}}\ln \frac{Q_{\bar \tau:T}}{\|\tilde \g_{\bar \tau}\|^2}+  2 \sqrt{\frac{R\mu Q_{\bar \tau:T}}{\ln T}},\label{eq:this}
		\end{align}
		where the last inequality follows by the fact that for any for positive numbers $\alpha_0,\dots,\alpha_n$,
		\begin{equation}
			\begin{aligned}
				\sum_{i=1}^n \frac{\alpha_i}{\sqrt{\sum_{j=1}^i \alpha_{j}}}&\le 2\sqrt{\sum_{i=1}^n \alpha_i}
				\\
				\sum_{i=1}^n \frac{\alpha_i}{\alpha_0+\sum_{j=1}^i \alpha_{j}}&\le \ln\frac{\alpha_0+\sum_{i=1}^n \alpha_i}{\alpha_0}.
			\end{aligned}
			\label{eq:just}
		\end{equation}
		(See e.g.~\cite{cutkosky2019,levy2018online}.) Finally, note that by definition of $\bar \tau$, we must have $\|\tilde\g_{\bar \tau}\|>0$. This in turn implies that $\|\tilde \g_t\|\geq L/t$ since $\tilde \g_t = \g_t \cdot \mathbf{1}\{\|\g_t\| \geq L/t\}$, and so $Q_{\bar \tau:T}/\|\tilde \g_{\bar \tau}\|^2 \leq T$.
		Combining this with \eqref{eq:this}, \eqref{eq:pluto1}, \eqref{eq:state}, \eqref{eq:strong}, and \eqref{eq:reduction}, (where the latter relates the pseudo regret to the actual regret) we obtain the desired result.
	\end{proof}

	\subsection{Proof of Lemma \ref{lem:approxlemma}}
	\label{sec:approxlemma}
	For the proof of Lemma \ref{lem:approxlemma}, we need the following result:
	\begin{lemma}
		\label{lem:midpointstrong}
		Let $\K_{\epsilon}$ be as in \eqref{eq:approx} and define $\psi\colon \w \mapsto\gamma_{\K_{\epsilon}}(\w)^2$. Then, $\psi$ is $\epsilon/r^2$ midpoint strongly convex; that is, for any $\x, \y \in \reals^d$,
		\begin{align}
			\psi\left(\frac{\x+\y}{2}\right)\leq \frac{	\psi(\x) + \psi(\y)}{2} - \frac{\epsilon}{4 r^2}\|\x- \y\|^2.
		\end{align}
	\end{lemma}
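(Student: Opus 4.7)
The plan is to compute $\psi(\w) = \gamma_{\K_\epsilon}(\w)^2$ explicitly and show it is the sum of a convex function and a strongly convex quadratic, from which midpoint strong convexity follows immediately.

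The key identity I will invoke is that for any closed convex set $C$ containing the origin, the gauge and the support function of the polar coincide: $\gamma_C = \sigma_{C^\circ}$. Applying this to $C = \K_\epsilon$ and using that $\K_\epsilon$ is defined precisely as the polar of $\sqrt{1-\epsilon}\K^\circ \oplus \sqrt{\epsilon}\mathbb{B}(r)^\circ$ (so, by the bipolar theorem, $\K_\epsilon^\circ = \sqrt{1-\epsilon}\K^\circ \oplus \sqrt{\epsilon}\mathbb{B}(r)^\circ$), I get
\begin{align}
\gamma_{\K_\epsilon}(\w)^2 = \sigma_{\K_\epsilon^\circ}(\w)^2 = \sigma_{\sqrt{1-\epsilon}\K^\circ \oplus \sqrt{\epsilon}\mathbb{B}(r)^\circ}(\w)^2.
\end{align}
By the defining property of the $L_2$-addition, namely $\sigma_{A\oplus B}^2 = \sigma_A^2 + \sigma_B^2$, together with the positive homogeneity of support functions, this expands to $(1-\epsilon)\sigma_{\K^\circ}(\w)^2 + \epsilon\,\sigma_{\mathbb{B}(r)^\circ}(\w)^2$. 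Since $\sigma_{\K^\circ} = \gamma_\K$ and $\mathbb{B}(r)^\circ = \mathbb{B}(1/r)$ gives $\sigma_{\mathbb{B}(r)^\circ}(\w) = \|\w\|/r$, I obtain the clean formula
\begin{align}
\psi(\w) = (1-\epsilon)\,\gamma_\K(\w)^2 + \frac{\epsilon}{r^2}\|\w\|^2.
\end{align}

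Now I finish by a straightforward convexity argument. The first term $(1-\epsilon)\gamma_\K^2$ is convex, because $\gamma_\K$ is a nonnegative sublinear (hence convex) function, and the square of a nonnegative convex function is convex. The second term $(\epsilon/r^2)\|\w\|^2$ is $(2\epsilon/r^2)$-strongly convex with respect to the Euclidean norm. Adding the two, $\psi$ inherits $(2\epsilon/r^2)$-strong convexity, which in the midpoint form reads
\begin{align}
\psi\!\left(\tfrac{\x+\y}{2}\right) \le \tfrac{1}{2}\psi(\x) + \tfrac{1}{2}\psi(\y) - \tfrac{1}{2}\cdot\tfrac{2\epsilon}{r^2}\cdot\tfrac{1}{4}\|\x-\y\|^2 = \tfrac{1}{2}\psi(\x) + \tfrac{1}{2}\psi(\y) - \tfrac{\epsilon}{4r^2}\|\x-\y\|^2,
\end{align}
which is exactly the stated inequality.

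No step appears to be a real obstacle; the only thing to be careful about is the two applications of the bipolar theorem (to conclude $\K_\epsilon^\circ = \sqrt{1-\epsilon}\K^\circ \oplus \sqrt{\epsilon}\mathbb{B}(r)^\circ$ and $\sigma_{\K^\circ} = \gamma_\K$), which both require verifying that the relevant sets are closed convex and contain the origin. This is immediate here since $\K$ contains $\mathbb{B}(r)$ by \eqref{eq:sandwitch}, so $\K^\circ$ is closed convex containing the origin, as is $\mathbb{B}(r)^\circ$, and the $L_2$-sum of two such sets is again closed convex containing the origin.
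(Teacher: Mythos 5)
Your proof is correct and follows essentially the same route as the paper: both derive the identity $\gamma_{\K_\epsilon}^2(\w) = (1-\epsilon)\gamma_{\K}^2(\w) + \epsilon\|\w\|^2/r^2$ from $\sigma_{\cK^\circ}=\gamma_{\cK}$ and the defining property of $\oplus$, and then combine convexity of $\gamma_\K^2$ with the exact quadratic behavior of $\|\cdot\|^2$. The only cosmetic difference is that the paper performs the midpoint computation directly via Jensen and the parallelogram identity, whereas you invoke the equivalent general fact that convex plus $(2\epsilon/r^2)$-strongly convex is strongly convex and then specialize to the midpoint.
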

	\begin{proof}[{\bf Proof}]
		This proof closely follows steps in \cite{molinaro2020}. By definition of the operator $\oplus$ and the fact that for any convex set $\cK$, $\sigma_{\cK^{\circ}}(\cdot)= \gamma_{\cK}(\cdot)$ (see e.g.~\cite{molinaro2020}), we have $\gamma_{\K_{\epsilon}}^2(\u) = (1-\epsilon)\gamma_{\K}^2(\u) +\epsilon \|\u\|^2 /r^2$, for any $\u \in \reals^d$. Thus, we have 
		\begin{align*}
			\psi\left(\frac{\x+\y}{2}\right) & = 		\gamma_{\K_\epsilon}\left(\frac{\x+\y}{2}\right)^2, \\ & = (1-\epsilon) \gamma_{\K}\left(\frac{\x+\y}{2}\right)^2 + \frac{\epsilon}{r^2} \left\|\frac{\x+\y}{2}\right\|^2, \\
			& \le  (1-\epsilon) \left(\frac{\gamma_{\K}(\x)^2}{2} + \frac{\gamma_{\K}(\y)^2}{2} \right) + \frac{\epsilon}{r^2}  \left\|\frac{\x+\y}{2}\right\|^2, \ \ \text{(Jensen's inequality)}\\
			& = (1-\epsilon) \left(\frac{\gamma_{\K}(\x)^2}{2} + \frac{\gamma_{\K}(\y)^2}{2} \right) + \frac{\epsilon}{r^2} \left(\frac{\|\x\|^2}{2} + \frac{\|\y\|^2}{2} - \frac{\|\x - \y\|^2}{4}\right),\\
			& = \frac{\gamma_{\K_\epsilon}(\x)^2}{2} + \frac{\gamma_{\K_\epsilon}(\y)^2}{2} - \frac{\epsilon}{4r^2} \|\x -\y\|^2_2,
		\end{align*}
		where the penultimate equality follows by the fact that $\|\x+\y\|^2 =2 \|\x\|^2 + 2 \|\y\|^2 - \|\x-\y\|^2$.
	\end{proof}
	\begin{proof}[{\bf Proof of Lemma \ref{lem:approxlemma}}]
		The second point follows from \cite[Theorem 6]{molinaro2020}. For the first point, we use the fact that the function $\psi\colon \w \mapsto \gamma_{\K_{\epsilon}}(\w)^2$ is $\epsilon/r^2$ is midpoint strongly convex (see Lemma \ref{lem:midpointstrong}). Now, by \cite[Theorem 2.3]{azocar2011strongly}, the function $\psi$ is $2\epsilon/r^2$ strongly convex; that is, for any $\x, \y \in \reals^d$ and $\lambda \in [0,1]$,
		\begin{align}
			\psi( \lambda \x+(1-\lambda) \y)  \leq \lambda 	\psi(\x) + (1-\lambda)	\psi(\y) - \frac{2 \epsilon}{r^2}\lambda (1-\lambda) \|\x - \y\|^2.\label{eq:new}
		\end{align}
		Thus, for any $\x, \y \in \K_{\epsilon}, \lambda \in [0,1]$ and $\v \in \reals^d$, we have, by subadditivity of the Gauge function
		\begin{align}
			\gamma_{\K_\epsilon}(\lambda \x + (1-\lambda) \y +\v) &\leq \gamma_{\K_\epsilon}(\lambda \x + (1-\lambda) \y) + \gamma_{\K_{\epsilon}}(\v),\nn \\
			& \stackrel{\eqref{eq:new}}{\leq} \sqrt{ \lambda 	\gamma_{\K_\epsilon}(\x)^2 + (1-\lambda)	\gamma_{\K_\epsilon}(\y)^2 - \frac{2 \lambda (1-\lambda) \epsilon}{r^2} \|\x - \y\|^2}+  \gamma_{\K_{\epsilon}}(\v),\nn\\
			& \leq \sqrt{1 - \frac{2 \lambda (1-\lambda)\epsilon}{r^2} \|\x - \y\|^2} + \sqrt{1+\epsilon \kappa^2}\frac{\|\v\|}{r}, \label{eq:teeth}\\
			& \leq 1 - \frac{ \lambda (1-\lambda)\epsilon}{r^2} \|\x - \y\|^2 + \sqrt{1+\epsilon \kappa^2}\frac{\|\v\|}{r},\label{eq:need}
		\end{align}
		where \eqref{eq:teeth} follows by the facts that I) $\gamma_{\K_\epsilon}(\x)\vee \gamma_{\K_{\epsilon}}(\y) \leq 1$ since $\x,\y \in \K_{\epsilon}$ II) $\mathbb{B}(r/\sqrt{1+\epsilon \kappa^2}) \subseteq \K_{\epsilon}$ (implies by the second point of the lemma), and III) $\gamma_{\cK}(\u)\leq \|\u\|/r'$ whenever $\mathbb{B}(r')\subseteq \cK$ (see \cite[Lemma 2]{mhammedi2021efficient}); and \eqref{eq:need} follows by the fact that $\sqrt{1-x}\leq 1-x/2$, for $x>0$. By \eqref{eq:need}, if $\v$ is such that $\|\v\|  \leq \epsilon \lambda (1-\lambda)/(r\sqrt{1+\epsilon \kappa^2})$, we have 
		\begin{align}
			\gamma_{\K_\epsilon}(\lambda \x + (1-\lambda) \y +\v) \leq 1,
		\end{align}
		and so $\lambda \x +(1-\lambda)\y + \v\in \K_{\epsilon}$. Therefore, $\K_{\epsilon}$ is $2 \epsilon/(r \sqrt{1+\epsilon \kappa^2})$-strongly convex w.r.t.~$\|\cdot\|$.
	\end{proof}
	
	\subsection{Proof of Lemma \ref{lem:optCeps}}
	\label{sec:optCeps}
	\begin{proof}[{\bf Proof of Lemma \ref{lem:optCeps}}]
		To simplify notation, let $\cU\coloneqq \sqrt{1-\epsilon}\K^\circ$ and $\cK \coloneqq \K^\circ_\epsilon=\cU \oplus \sqrt{\epsilon} \mathbb{B}(1/r)$. For $\lambda_*$ as in the lemma's statement, our first step in this proof is to show that $\lambda_* = \gamma_{\K^\circ_\epsilon}(\w)$, which reduces to showing that $\lambda_* \in \inf\{\lambda >0\colon \w\in \lambda \cK\}$. Thanks to a known characterization of the $L_2$ set addition (see e.g.~\cite{lutwak2012brunn,molinaro2020}), we can write $\cK$ as
		\begin{align}
			\cK= \{ \sqrt{1-\alpha} \u  + \sqrt{\alpha} \v \colon \alpha \in[0,1], \u \in \cU, \v \in \mathbb{B}(1/r)\}. \label{eq:char}	
		\end{align}
		Thus, if $\w\in \lambda \cK$, for $\lambda >0$, then there exists $\alpha \in[0,1], \u \in \cU, \v \in \mathbb{B}(1/r)$ such that $\lambda\sqrt{1-\alpha} \u  + \lambda \sqrt{\alpha} \v =\w$. By letting $\gamma = \lambda \sqrt{1-\alpha}$, the latter equality implies that 
		\begin{align}
			\inf_{\u' \in \cU} \|\gamma \u'  -\w\|^2  \leq 	\|\gamma \u  -\w\|^2  = \lambda^2 \alpha \|\v\|^2 \leq \lambda^2 \alpha/r^2 =  (\lambda^2 - \gamma^2)/r^2. \label{eq:ineq}
		\end{align}
		Thus, rearranging \eqref{eq:ineq} implies that $\lambda \geq \inf_{\gamma\geq 0,\u \in \cU} \Psi(\gamma, \u)^{1/2}=\lambda_*$. We now show that if $\lambda =\lambda_*$ then $\w\in\lambda \cK$. Let $(\gamma_*,\u_*)$ be as in the lemma's statement. First note that if $\lambda_*=0$ we must necessarily have $\gamma_*=0$ which in turn implies that $\w= \bm{0}$. This is not possible under the assumptions of the lemma, and so $\lambda_*>0$. Define $\alpha_* \coloneqq 1- \gamma^2_*/\lambda^2_*$.  In this case, we have $\w- \lambda \sqrt{1-\alpha_*}\u_* = \w - \gamma_* \u_*$ (we are considering the case where $\lambda = \lambda_*$). Therefore, by definition of $(\gamma_*, \lambda_*,\u_*)$, we have $$\| \w -  \lambda \sqrt{1-\alpha_*}\u_*\|^2 = \lambda^2/r^2 -\gamma_*^2 /r^2 =  \lambda_*^2 \alpha_* /r^2.$$ This implies that there exists $\v_*\in \mathbb{B}(1/r)$ such that $\lambda \sqrt{\alpha_*} \v_*=\w -  \lambda \sqrt{1-\alpha_*}\u_* $, which is equivalent to 
		\begin{align} 
			\label{eq:rows}
			\lambda \sqrt{1-\alpha_*}\u_*+ \lambda \sqrt{\alpha_*}\v_*= \w.
		\end{align}
		Thus, by \eqref{eq:char}, we have $\w\in \lambda \cK$, and so we conclude that $\lambda_* = \inf\{ \lambda>0\colon \w\in \lambda \cK\}$.
		
		Now, it remains to show the lower bound on $\inner{\z_*}{\w}$. Let $\alpha_*$ and $\v_*$ be as in the previous paragraph. We first establish a lower bound on $\alpha_*$. Since $\lambda_* = \inf\{ \lambda>0\colon \w\in \lambda \cK\}$, we have that $\w$ is on the boundary of $\lambda_* \cK$ and so \eqref{eq:rows} implies that $ \sqrt{1-\alpha_*}\u_*+  \sqrt{\alpha_*}\v_* \in \partial \cK$, where $\partial \cK$ denotes the boundary of $\cK$. On the other hand, we know that $\K^\circ \subseteq\K_\epsilon^\circ =\cK$, which implies that $(1-\epsilon)^{-1/2} \u_* \in \cK$. This rules out the case where $\alpha_* =0$ since this would mean that $\u_* \in  \partial \cK$. Now, let $\x_* \in \argmax_{\x\in \K_\epsilon} \inner{\x}{\w}$. Using \eqref{eq:rows}, \eqref{eq:char}, and that $\lambda_* = \sigma_{\K_\epsilon}(\w)$, we have 
		\begin{align}
			1 =  \inner{\x_*}{\w}/\lambda_* = \inner{\x_*}{ \sqrt{1-\alpha_*}\u_*+  \sqrt{\alpha_*}\v_*}\geq   \inner{\x_*}{ \sqrt{1-\alpha_*}\u_*+  \sqrt{\alpha_*}\v} \label{eq:these}
		\end{align}
		for all $\v \in \mathbb{B}(1/r)$, where the last inequality follows by the fact that $\sqrt{1-\alpha_*}\u_*+  \sqrt{\alpha_*}\v\in \cK$ and $\x_* \in \cK^\circ$. Now, since \eqref{eq:these} holds for all $\v \in \mathbb{B}(1/r)$ and $\lambda_* \cdot \alpha_*>0$, we have $ \inner{\x_*}{\v_*} \geq \inner{\x_*}{\v}, \forall \v \in \mathbb{B}(1/r),$ from which we get that  
		\begin{align}
			\v_* = \frac{\x_*}{r \|\x_*\|}.\label{eq:preinner}
		\end{align}
		Similarly, we have $1=\inner{\x_*}{\w}/\lambda_* = \inner{\x_*}{ \sqrt{1-\alpha_*}\u_*+  \sqrt{\alpha_*}\v_*}\geq   \inner{\x_*}{ \sqrt{1-\alpha_*}\u+  \sqrt{\alpha_*}\v_*}$, for all $\u \in \cU$, which implies that $\inner{\x_*}{\u_*}\geq \inner{\x_*}{\u}$, for all $\u \in \cU$. Setting $\u = \sqrt{1-\epsilon}\x_*/(R\|\x_*\|)$ (which is guaranteed to be in $\cU = \sqrt{1-\epsilon}\K^\circ$ since $\mathbb{B}(1/R)\subseteq \K^\circ$), we get that 
		\begin{align}
			\inner{\x_*}{\u_*} \geq  \inner{\x_*}{\u} = \frac{\sqrt{1-\epsilon}\|\x_*\|}{R}\geq \frac{\sqrt{1-\epsilon}}{\kappa \sqrt{1+\kappa^2\epsilon}}. \label{eq:inner}
		\end{align}
		where the last inequality follows by the fact that $\mathbb{B}(r)/\sqrt{1+\kappa^2\epsilon}\subseteq \K_\epsilon$ (Lemma \ref{lem:approxlemma}) and $\x_* \in \K_{\epsilon}$. Now, using the fact that $(1-\epsilon)^{-1/2}\u_* \in \cK$ as established earlier, we get that  
		$$1=\inner{\x_*}{ \sqrt{1-\alpha_*}\u_*+  \sqrt{\alpha_*}\v_*}\geq \inner{\x_*}{(1-\epsilon)^{-1/2}\u_*}.$$
		By plugging \eqref{eq:preinner} and \eqref{eq:inner} into the inequality in the previous display, we get 
		\begin{align}
			0\leq  	\frac{\sqrt{(1-\epsilon)(1-\alpha_*)}-1}{\kappa \sqrt{1+\kappa^2\epsilon}}+ \frac{\|\x_*\|}{r}\sqrt{\alpha_*} \leq  \frac{\sqrt{(1-\epsilon)(1-\alpha_*)}-1}{\kappa \sqrt{1+\kappa^2\epsilon}}+ \kappa\sqrt{\alpha_*},
		\end{align}
		where the last inequality follows by the fact that $\x_* \in \K_{\epsilon}\subseteq \K \subseteq \mathbb{B}(R)$ (Lemma \ref{lem:approxlemma} and \eqref{eq:sandwitch}).
		This implies that for $\tilde \kappa\coloneqq \kappa^2 \sqrt{1+\kappa^2 \epsilon}$
		\begin{align}
			\alpha_*\geq	\frac{\epsilon ^2-(\tilde\kappa ^2+1) \epsilon +2 \tilde\kappa^2  
				-\tilde\kappa\sqrt{(\epsilon -1) \left(\epsilon -\tilde\kappa ^2\right)}}{\left(\tilde\kappa
				^2-\epsilon +1\right)^2} \geq \frac{\epsilon^2}{4\tilde\kappa^2}. \label{eq:lower}
		\end{align}
		Now, since $\gamma_{*}= \lambda_* \sqrt{1-\alpha_*}$, we have $\z_* = \lambda_* \sqrt{\alpha_*} \v_*$, by \eqref{eq:rows}. Therefore, we have 
		\begin{align}
			\inner{\z_*}{\v_*} =  \lambda_* \sqrt{\alpha_*}  \inner{\v_*}{\w} \stackrel{(*)}{=} \lambda_* \sqrt{\alpha_*}  \frac{\lambda_*}{r\|\x_*\|} \stackrel{(**)}{\geq}  \frac{\lambda^2_*}{r R} \sqrt{\alpha_*} \geq \frac{\lambda^2_*\epsilon}{2 r R \tilde \kappa},\label{eq:plug}
		\end{align}
		where $(*)$ follows by \eqref{eq:preinner}; $(**)$ follows by the fact that $\x_*\in \mathbb{B}(R)$ as argued earlier; and the last inequality we used \eqref{eq:lower}. It remains to bound $\lambda_*$ from below. By Lemma \ref{lem:approxlemma}, we have that $\mathbb{B}(r)\subseteq \K \subseteq \sqrt{1+\kappa^2\epsilon} \K_\epsilon$. Therefore, since $\lambda_* = \gamma_{\cK}(\w) = \sigma_{\K_\epsilon}(\w)$, we have that $\lambda_* \geq \|\w\| r/\sqrt{1+\kappa^2\epsilon}$, which follows from the fact that $\mathbb{B}(r)/\sqrt{1+\kappa^2\epsilon}\subseteq \K_\epsilon$ and \cite[Lemma~2]{mhammedi2021efficient}. Plugging this into \eqref{eq:plug} implies the desired lower bound on $\inner{\z_*}{\v_*}$.
	\end{proof}
	
	\subsection{Proof of Lemma \ref{lem:oned}}
	\label{sec:oned}
	The indicator function $\iota_{\cK}$ of a set $\cK \subseteq \reals^d$ is the function $\iota_{\cK}\colon \reals^d \rightarrow \reals\cup\{+\infty\}$ defined by $\iota_{\cK}(\x) = 0$ if $\x\in \cK$ and $+\infty$ otherwise.
	\begin{proof}[{\bf Proof of Lemma \ref{lem:oned}}]
		We first show convexity. The function $\varphi \colon \w \mapsto \inf_{\u \in \sqrt{1-\epsilon} \K^\circ} \|\u - \w\|$ is convex \cite[Proposition 1]{cutkosky2018black}. Therefore, its perspective transform $(\gamma,\w)\mapsto  \gamma \varphi(\w/\gamma)$ is convex on $\reals_{>0} \times \reals^d$ (see e.g.~\cite[Proposition 2.2.1]{hiriart2004}). In particular, for any $\w\in \reals^d$, $\psi\colon\gamma \mapsto \gamma \varphi(\w/\gamma)= \gamma\inf_{\u \in \sqrt{1-\epsilon} \K^\circ}\|\u - \w/\gamma\| = \inf_{\u \in \sqrt{1-\epsilon} \K^\circ} \|\gamma\u - \w\|$ is convex. Thus, since $x\mapsto x^2$ is convex and non-decreasing on $\reals_{\geq 0}$, the function $\gamma \mapsto \psi^2(\gamma)= \inf_{\u \in \sqrt{1-\epsilon} \K^\circ} \|\gamma\u - \w\|^2$ is convex. Therefore, since $\gamma \mapsto \gamma^2$ is 1-strongly convex, the function $\gamma \mapsto \gamma^2 + r^2\psi^2(\gamma)$ is also $1$-strongly convex. 
		
		Now we show that $\Theta\colon \gamma \mapsto \gamma^2 + r^2 \psi^2(\gamma)$ is differentiable. For this it suffice to show that $\varphi^2$ is differentiable, since $\psi^2(\gamma) =\gamma^2 \varphi^2(\w/\gamma)$. The function $\varphi^2$ can be written as $$\varphi^2(\w) = \inf_{\u \in \reals^d} \left\{\iota_{ \sqrt{1-\epsilon}\K^\circ}(\u)   + \|\u-\w\|^2 \right\},$$ which is the Moreau envelop of the indicator function $\iota_{ \sqrt{1-\epsilon}\K^\circ}$ of the set $ \sqrt{1-\epsilon}\K^\circ$, and so it is $C^1$ \cite{moreau1962fonctions,moreau1965proximite}. Furthermore, the derivative of $\Theta$ is given by
		\begin{align}
			\Theta'(\gamma) = 2 \gamma + 2 r^2 \u^\top_* (\gamma \u_* - \w), \quad \text{for all } \gamma \in(0,b],   
		\end{align}
		where $\u_*\in \argmin_{\u \in \sqrt{1-\epsilon}\K^\circ} \|\gamma \u -\w \|$. Using that $\K^\circ \subseteq \mathbb{B}(1/r)$, we get that 
		\begin{align}
			|\Theta'(\gamma)| \leq 2 (\gamma  +r^2 \gamma \|\u_*\|^2+ r^2 \|\u_*\|\|\w\|) \leq 2 (2b + r \|\w\|), \quad \forall \gamma \in(0,b],
		\end{align} 
		where we used that $\|\u_*\|^2 \leq 1/r^2$ (since $\u_* \in \K^\circ\subseteq \mathbb{B}(1/r)$). This shows the Lipschitzness claim for $\Theta$.
	\end{proof}
	
	\subsection{Proof of Proposition \ref{prop:eucproj}}
	\label{sec:eucproj}
	For the proof of Proposition \ref{prop:eucproj}, we need to introduce a few notions. For a set $\cK \subseteq\reals^d$, we define $\mathbb{B}(\cK,\delta)\coloneqq \{\x \in \reals^d : \exists \y \in \cK  \text{ such that } \|\x-\y\|\leq \delta \}$. We also define $\mathbb{B}(\cK,-\delta)\coloneqq \{\x \in \reals^d : \forall \y \in \mathbb{B}(\delta), \ \x+\y\in  \cK \}$. For a convex function $f\colon \reals^d\rightarrow \reals \cup \{+\infty\}$, we let $\mathrm{dom}(f) \coloneqq \{\x\in \reals^d \colon f(\x)<+\infty\}$ be its domain. We now define the \emph{epigraph} of a convex function:
	\begin{definition}[Epigraph]
		\label{def:epi}
		The epigraph of a function $f\colon \reals^d \rightarrow \reals \cup \{+\infty\}$ is the defined as the set 
		\begin{align}
		\epi(f) \coloneqq \{ (\x,v)\in \reals^d\times \reals\colon  f(\x)\leq v \}.
		\end{align}
		\end{definition}
	We now defined two types of Oracles that will be useful in the proof of Proposition \ref{prop:eucproj}.

	\begin{definition}[Separation Oracle ($\mathsf{SEP}$) \cite{lee2018efficient}]
	\label{def:sep}
	Queried with a vector $\y \in \reals^d$ and a real number $\delta \in(0,1)$, with probability $1-\delta$, the Oracle either
	\begin{itemize}
		\item asserts that $\y \in \mathbb{B}(\cK,\delta)$, or
		\item finds a vector $\c \in \reals^d$ such that $\y \in \mathbb{B}(\cK,\delta)$ and $\c^\top \x \leq \c^\top \y + \delta,$ for all $\x\in \mathbb{B}(\cK,-\delta)$.
	\end{itemize}
	We let $\mathrm{SEP}_{\delta}(\cK)$ be the time complexity of this oracle.
\end{definition}
In the next definition, we let $f\colon \reals^d \rightarrow \reals\cup \{+\infty\}$ be a closed convex function.
\begin{definition}[Subgradient Oracle ($\mathsf{GRAD}$) of $f$ on $\cK$]
	\label{def:grad}
	Queried with a vector $\y$ in some convex subset $\cK\subseteq \reals^d$ and a real number $\delta>0$, the Oracle outputs an extended real number $\alpha$ and a vector $\c\in \reals^d$ such that 
	\begin{gather}
		\forall \x\in \reals^d,\ \	\alpha + \c^\top(\x - \y)< \max_{\z \in \mathbb{B}(\x,\delta)} f(\z) +\delta, \ \
		\text{and}  \ \  \min_{\x\in \mathbb{B}(\y, \delta)}f(\x) -\delta \leq \alpha \leq \max_{\x\in \mathbb{B}(\y, \delta)}f(\x) +\delta.
	\end{gather} 
	We let $\mathrm{GRAD}_{\delta}(f)$ be the time complexity of this oracle.
\end{definition}

In what follows, we let $\mathsf{GRAD}(f)\coloneqq \mathsf{GRAD}_0(f)$ and $\mathsf{SEP}(f)\coloneqq \mathsf{SEP}_0(f)$. The next lemma essentially shows that we can easily build a Separation Oracle for the epigraph of a convex function $f$ using a Subgradient Oracle for $f$ and a Separation Oracle of its domain.
\begin{lemma}
	\label{lem:switch}
Let $b>0$ and $f\colon \reals^d \rightarrow \reals \cup \{+\infty\}$ be a closed convex function s.t.~$\sup_{\x \in \mathrm{dom}(f)}f(\x)\leq b$. Further, let $\cK \coloneqq \mathrm{dom}(f)$ and suppose that we have a Separation Oracle $\mathsf{SEP}(\cK)$ for the set $\cK$ and a Subgradient Oracle $\mathsf{GRAD}(f)$ for $f$ that can be queried on any point $\x\in \cK.$ Then, a Separation Oracle for $\cK_f\coloneqq \epi(f) \cap  \{(\y,v)\colon v\leq b\}$ can be implemented with $1$ calls to $\mathsf{SEP}(\cK)$ and $\mathsf{GRAD}(f)$.
\end{lemma}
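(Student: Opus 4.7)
The plan is to implement the separation oracle for $\cK_f = \epi(f)\cap\{(\y,v):v\leq b\}$ by a straightforward case analysis on a query point $(\y,v)\in\reals^d\times\reals$, using one call each to $\mathsf{SEP}(\cK)$ and $\mathsf{GRAD}(f)$. The geometric picture is simple: $\cK_f$ is carved out by three kinds of inequalities, namely $\y\in\cK$, $v\geq f(\y)$, and $v\leq b$, so any violation of membership must correspond to a violation of one of these three, and each violation yields a hyperplane in $\reals^{d+1}$ that separates $(\y,v)$ from $\cK_f$.

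First I would handle the ``easy'' coordinate: if $v>b$, return the hyperplane $(\bm 0,1)\in\reals^{d+1}$, since every $(\x,u)\in\cK_f$ satisfies $u\leq b<v$. No oracle calls are needed in this branch. Otherwise, call $\mathsf{SEP}(\cK)$ on the point $\y$. If it asserts $\y$ is (approximately) in $\cK$, proceed to the next step; if instead it returns a separating vector $\c\in\reals^d$ with $\c^\top\x\leq\c^\top\y$ for all (slightly shrunk) $\x\in\cK$, lift it to $(\c,0)\in\reals^{d+1}$, which separates $(\y,v)$ from $\cK_f$ because the $\y$-components of points in $\cK_f$ all lie in $\cK$.

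In the remaining branch $\y\in\cK$ and $v\leq b$, call $\mathsf{GRAD}(f)$ at $\y$ to obtain a value $\alpha$ and subgradient $\c\in\reals^d$ with $\alpha\approx f(\y)$ and the global affine lower bound $f(\x)\geq \alpha+\c^\top(\x-\y)$ up to the oracle's accuracy. If $\alpha\leq v$ (up to the tolerance), assert $(\y,v)\in\mathbb{B}(\cK_f,\delta)$: by definition of the epigraph, this says $(\y,v)$ is within tolerance of having $v\geq f(\y)$. Otherwise, return the hyperplane $(\c,-1)\in\reals^{d+1}$. For any $(\x,u)\in\cK_f$ one has $\c^\top\x-u\leq \c^\top\x-f(\x)\leq \c^\top\y-\alpha<\c^\top\y-v$, which is exactly the required separation inequality.

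The main technical obstacle is not conceptual but bookkeeping the oracle tolerances so that the resulting separation oracle for $\cK_f$ comes with a clean accuracy parameter, and in particular so that ``assert approximate membership'' in the two sub-branches corresponds to the approximate-containment condition $(\y,v)\in\mathbb{B}(\cK_f,\delta')$ in Definition \ref{def:sep} for a $\delta'$ scaling controllably with the input $\delta$. This reduces to relating the $\delta$-ball inflation of $\cK$ and the $\delta$-ball inflation of $\mathrm{dom}(f)$ in the graph norm, using the bound $\sup_{\x\in\cK}f(\x)\leq b$ together with the $\delta$-accurate lower/upper bounds on $\alpha$ from the subgradient oracle; a routine (though slightly tedious) computation then shows that one can choose a tolerance for the two sub-oracles proportional to the target tolerance for $\cK_f$ (up to polynomial factors in $b$), after which the overall call count is one to each of $\mathsf{SEP}(\cK)$ and $\mathsf{GRAD}(f)$ as claimed.
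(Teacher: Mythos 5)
Your proposal is correct and takes essentially the same route as the paper's proof: a case analysis on which of the three defining constraints of $\cK_f$ (namely $\y\in\cK$, $v\geq f(\y)$, $v\leq b$) is violated, returning the lifted hyperplanes $(\bm{0},1)$, $(\c,0)$, or $(\g,-1)$ accordingly, with one call each to $\mathsf{SEP}(\cK)$ and $\mathsf{GRAD}(f)$. The only divergence is your closing discussion of tolerance bookkeeping, which is not needed for the lemma as stated since the paper's convention takes $\mathsf{SEP}(\cK)$ and $\mathsf{GRAD}(f)$ to be the exact ($\delta=0$) oracles, so the exact separation inequalities you establish in the main branches already complete the argument.
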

\begin{proof}[{\bf Proof}]
	Let $\z =(\x,v)\in \reals^{d+1}$. Using the available Oracles in lemma's statement, we will show that it is possible to build a Separation Oracle $\mathsf{SEP}(\cK_f)$ whose output $\tilde \c$ on input $\z$ requires only one call to the available Oracles. 
	
	Query $\mathsf{SEP}(\cK)$ with input $\x$. First, suppose that $\mathsf{SEP}(\cK)$ asserts that $\x \in \cK$. In this case, query $\mathsf{GRAD}(f)$ with input point $\x$ and let $(s,\g)$ be its output. In this case, we have $s=f(\x)$ and
	\begin{align}
	 s	+ \g^\top (\y - \x) \leq f(\y), \quad \text{for all }\y \in \reals^d. \label{eq:grad}
		\end{align}
If $s\leq v\leq b$, our desired Separation Oracle $\mathsf{SEP}(\cK_f)$ asserts that $(\x,v) \in \cK_f$. Otherwise, from \eqref{eq:grad} it follows that \begin{align}v\leq s \implies & (\g,-1)^\top (\x, v) \geq (\g,-1)^\top (\y, f(\y)) \geq  (\g,-1)^\top (\y, v'),  & \forall (\y,v')\in \cK_f, \label{eq:first} \\
v\geq b	\implies&  (\bm{0}, 1)^\top (\x,v) \geq 	(\bm{0}, 1)^\top (\y, v'), & \forall (\y,v')\in \cK_f. 
\end{align}
Now, suppose that $\mathsf{SEP}(\cK)$ does not assert that $\x \in \cK$, and let $\c$ be its output. In this case, we have $\c^\top \x \geq \c^\top \y$, for all $\y \in \cK$. Combining this with the fact that $(\y,v')\in \cK_f$ only if $\y \in \cK$, we get that 
\begin{align}
(\c,0)^\top (\x,v) \geq (\c,0)^\top(\y, v'), \quad   \forall (\y,v')\in \cK_f. \label{eq:second}
\end{align}
Thus, we can define $\mathsf{SEP}(\cK_f)$ to be the Oracle that given input $\z=(\x,v)$ asserts $\z\in \cK_f$ when $\x\in \cK$ and $s\leq v\leq b$. And, otherwise, outputs $\tilde{\bm{c}}$ such that  
\begin{align}
	\tilde{\bm{c}} \coloneqq \left\{ \begin{array}{ll} (\g,-1), & \text{if } \x\in \cK \text{ and } v\leq s,\\ (\bm{0},1), & \text{if }\x\in \cK \text{ and } b\leq v,\\ (\bm{c},0), & \text{if }\x \notin \cK,   \end{array}  \right. 
	\end{align}
where $s,b$, $\g$ and $\c$ are as above. 
\end{proof}
We base our proof of Proposition \ref{prop:eucproj} on the following reduction from Separation to Linear Optimization:
\begin{theorem}[Theorem 15 of \cite{lee2018efficient} rephrased]
	\label{thm:lee}
	Let $\cK\subset \reals^d$ be a convex set satisfying $\mathbb{B}(r) \subseteq \cK \subseteq \mathbb{B}(R)$ and let $\kappa \coloneqq R/r$. For any $\c \in \reals^d$ and $\varepsilon \in(0,1)$, there exists an algorithm that given input $(\c, \varepsilon)$ outputs $\x$ such that, with probability $1-\varepsilon$, 
	\begin{align}
\x\in  \mathbb{B}(\cK,R\varepsilon) \quad \text{and}\quad		\c^\top \x \leq \min_{\y\in \cK} \c^\top \y +R \varepsilon \|\c\|, 
	\end{align}  
Furthermore, the algorithm returns $\x$ with an expected running time of $O(d \cdot \mathrm{SEP}_{\delta}(\cK) \ln (d\kappa/\varepsilon) + d^3 \ln^{O(1)}(d\kappa/\varepsilon))$.
	\end{theorem}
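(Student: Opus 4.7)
The plan is to invoke the Lee--Sidford--Wong cutting-plane method \cite{lee2018efficient} essentially as a black box and verify that its output specification, after an appropriate rescaling, matches the restatement in Theorem~\ref{thm:lee}. The only nontrivial work is the bookkeeping of constants and a reduction that turns Lee et al.'s bound on $\min_{\y \in \cK} f(\y)$ for a Lipschitz convex $f$ into the stated bound for the linear objective $f(\y) = \c^\top \y$ under the normalization $\mathbb{B}(r) \subseteq \cK \subseteq \mathbb{B}(R)$.

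At a high level, the algorithm maintains a convex working region $\Omega_t$ that initially equals $\mathbb{B}(R)$ and is guaranteed to contain every point of $\cK$ whose objective is no worse than the best $\delta$-feasible iterate found so far. At iteration $t$, the method computes an approximate analytic/volumetric center $\x_t$ of $\Omega_t$, queries $\mathsf{SEP}_\delta(\cK)$ at $\x_t$, and either (i) appends the returned separating half-space to $\Omega_t$ when $\x_t$ is not asserted to lie in $\mathbb{B}(\cK,\delta)$, or (ii) appends the objective cut $\{\y : \c^\top \y \leq \c^\top \x_t\}$ to $\Omega_t$ when $\x_t$ is asserted $\delta$-feasible. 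A standard potential/volumetric argument (as in \cite{lee2018efficient}) shows that after $N = O(d \ln(d\kappa/\varepsilon))$ iterations, the best $\delta$-feasible iterate returned, call it $\x$, satisfies $\x \in \mathbb{B}(\cK, R\varepsilon)$ and $\c^\top \x \leq \min_{\y \in \cK} \c^\top \y + R\varepsilon \|\c\|$. The per-iteration cost is one call to $\mathsf{SEP}_\delta(\cK)$ plus $O(d^2 \ln^{O(1)}(d\kappa/\varepsilon))$ linear-algebra operations to update the approximate center of $\Omega_t$; summing over $N$ iterations yields the claimed running time $O(d\cdot\mathrm{SEP}_\delta(\cK) \ln(d\kappa/\varepsilon) + d^3 \ln^{O(1)}(d\kappa/\varepsilon))$.

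The main obstacle, and essentially the only thing that needs to be checked beyond citing the reference, is matching scales. Lee et al.'s theorem is typically stated for a convex body inside the unit ball with a $1$-Lipschitz objective and a target additive accuracy $\varepsilon$. To obtain Theorem~\ref{thm:lee}, one first rescales the set by $1/R$ so that $\mathbb{B}(r/R) \subseteq \cK/R \subseteq \mathbb{B}(1)$ (with inner radius $1/\kappa$), replaces $\c$ by the normalized direction $\c/\|\c\|$, and runs the cutting-plane method with target accuracy $\varepsilon$. The feasibility violation $\varepsilon$ and objective gap $\varepsilon$ delivered in the rescaled coordinates translate back into $R\varepsilon$ and $R\varepsilon\|\c\|$ respectively, matching the statement. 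Finally, one must choose the oracle precision $\delta$ to be polynomially small in $\varepsilon/(d\kappa)$ so that by a union bound over the $N = O(d \ln(d\kappa/\varepsilon))$ oracle calls, all of them succeed with total probability at least $1-\varepsilon$; this only affects running time by $\ln^{O(1)}(d\kappa/\varepsilon)$ factors that are already absorbed into the stated complexity.
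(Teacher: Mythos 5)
Your proposal is correct and matches the paper's treatment: the paper offers no proof of this statement at all, since it is explicitly a rephrasing of Theorem 15 of \cite{lee2018efficient}, which is exactly the black-box invocation (plus the routine rescaling by $1/R$ and normalization of $\c$) that you carry out. The algorithmic sketch and the $\delta$/union-bound bookkeeping you add are consistent with the cited result and introduce no gap.
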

Theorem \ref{thm:lee} leads to the following proposition:
\begin{proposition}
	\label{prop:lee}
		Let $\cK\subset \reals^d$ be a convex set satisfying $\mathbb{B}(r) \subseteq \cK \subseteq \mathbb{B}(R)$ and let $\kappa \coloneqq R/r$. For any $\c \in \reals^d$ and $\varepsilon \in(0,1)$, there exists an algorithm $\cA$ that given input $(\c, \varepsilon)$ outputs $\x$ such that, with probability $1-\varepsilon$, 
	\begin{align}
	\x\in  \mathbb{B}(\cK)\quad \text{and}\quad 	\c^\top \x \leq \min_{\y\in \cK} \c^\top \y +  R \varepsilon\|\c\|,  \label{eq:sit}
	\end{align}  
Furthermore, the algorithm returns $\x$ with an expected running time of $O(d \cdot \mathrm{SEP}_{\delta}(\cK) \ln (d\kappa/\varepsilon) + d^3 \ln^{O(1)}(d\kappa/\varepsilon))$.
\end{proposition}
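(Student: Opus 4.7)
The proposition strengthens Theorem \ref{thm:lee} in exactly one way: the returned point must lie in $\cK$ itself rather than merely in $\mathbb{B}(\cK, R\varepsilon)$. My plan is therefore to invoke Theorem \ref{thm:lee} with a slightly tighter tolerance parameter $\varepsilon'$, and then to ``pull'' the returned point back into $\cK$ via a tiny contraction toward the origin. The contraction is harmless because the set $\cK$ contains the ball $\mathbb{B}(r)$, so points sufficiently close to $\cK$ become feasible after rescaling by a factor close to $1$, and the linear objective $\c^\top \cdot$ only changes by a small multiple of $\|\c\|\|\x\|$ under such rescaling.

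Concretely, set $\varepsilon' \coloneqq \varepsilon/(1+2\kappa)$ and run the algorithm of Theorem \ref{thm:lee} with input $(\c,\varepsilon')$ to obtain a point $\x$ which, with probability at least $1-\varepsilon' \geq 1-\varepsilon$, satisfies $\x \in \mathbb{B}(\cK, R\varepsilon')$ and $\c^\top \x \leq \min_{\y\in\cK}\c^\top \y + R\varepsilon'\|\c\|$. Conditioning on this event, pick any $\y \in \cK$ with $\|\x-\y\| \leq R\varepsilon'$, let $\mu \coloneqq R\varepsilon'/(r+R\varepsilon')$, and define $\tilde \x \coloneqq (1-\mu)\x$. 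The identity
\begin{align}
\tilde \x \;=\; (1-\mu)\y \;+\; \mu \cdot \tfrac{(1-\mu)(\x-\y)}{\mu}
\end{align}
exhibits $\tilde \x$ as a convex combination of $\y \in \cK$ and a point $\p \coloneqq (1-\mu)(\x-\y)/\mu$ of norm $\|\p\| \leq (1-\mu)R\varepsilon'/\mu = r$, hence $\p \in \mathbb{B}(r) \subseteq \cK$. Convexity of $\cK$ gives $\tilde \x \in \cK$, which is the feasibility we want.

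For the objective, since $\|\x\| \leq R + R\varepsilon'$ and $\mu \leq R\varepsilon'/r = \kappa\varepsilon'$,
\begin{align}
\c^\top \tilde \x - \min_{\y'\in \cK}\c^\top \y'
= \c^\top \x - \min_{\y'\in\cK}\c^\top \y' - \mu\,\c^\top \x
\leq R\varepsilon'\|\c\| + \kappa\varepsilon'\|\c\|(R+R\varepsilon')
\leq R\varepsilon'\|\c\|(1+2\kappa) = R\varepsilon\|\c\|,
\end{align}
which is the required accuracy. For the running time, invoking Theorem \ref{thm:lee} with $\varepsilon'$ in place of $\varepsilon$ replaces every $\ln(d\kappa/\varepsilon)$ by $\ln(d\kappa(1+2\kappa)/\varepsilon) = O(\ln(d\kappa/\varepsilon))$, so the asymptotic bound of Theorem \ref{thm:lee} is preserved, and the post-processing (one vector scaling) contributes only $O(d)$ additional arithmetic operations.

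There is no real obstacle here: everything is a textbook shrinking argument using the inner-ball sandwich $\mathbb{B}(r)\subseteq \cK$. The only minor bookkeeping is making sure the probability of success is at least $1-\varepsilon$ (which is free because we took $\varepsilon' \leq \varepsilon$) and that the log factors hidden in Theorem \ref{thm:lee}'s running time absorb the $\ln(1+2\kappa)$ overhead from using $\varepsilon'$, both of which hold trivially.
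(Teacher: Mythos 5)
Your proof is correct, but it takes a genuinely different route from the paper's. The paper also invokes Theorem \ref{thm:lee} with a shrunken tolerance ($\varepsilon'=\varepsilon/(3\kappa)$), but then post-processes by \emph{estimating the gauge}: it first argues $\min_{\y\in\cK}\c^\top\y\leq -r\|\c\|$, hence $\c^\top\x'\leq 0$ on the good event, then computes $\tilde\gamma$ with $\gamma_{\cK}(\x')\leq\tilde\gamma\leq\gamma_{\cK}(\x')+\varepsilon/3$ via roughly $\log_2(\kappa^2/\varepsilon)$ extra calls to $\mathsf{SEP}(\cK)$, and returns $\x'/\tilde\gamma$; the sign condition $\c^\top\x'\leq 0$ is what guarantees that dividing by the (over)estimated gauge does not degrade the objective, and subadditivity of the gauge plus $\x'\in\mathbb{B}(\cK,R\varepsilon')$ controls $\gamma_{\cK}(\x')\leq 1+\varepsilon/3$. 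You instead rescale by the \emph{fixed, explicitly known} factor $1-\mu$ with $\mu=R\varepsilon'/(r+R\varepsilon')$, and prove feasibility by writing $(1-\mu)\x$ as a convex combination of the nearby $\y\in\cK$ (which is only needed in the analysis, not in the computation) and a point of norm at most $r$ lying in the inner ball $\mathbb{B}(r)\subseteq\cK$; the objective loss $\mu\,|\c^\top\x|\leq\kappa\varepsilon'\|\c\|(R+R\varepsilon')$ is then absorbed by the choice $\varepsilon'=\varepsilon/(1+2\kappa)$. What each buys: your shrinking argument is more elementary, needs no sign argument and no extra separation-oracle calls (only $O(d)$ arithmetic for the scaling), while the paper's gauge-based rescaling is less lossy geometrically (it moves the point only as far as needed, which is the natural tool reused elsewhere in that line of work) but pays $O(\log(\kappa^2/\varepsilon))$ additional $\mathsf{SEP}(\cK)$ calls; both yield the stated guarantee and the same asymptotic running time, since the $\ln(1+2\kappa)$ inflation of the logarithms is absorbed into $O(\ln(d\kappa/\varepsilon))$.
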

The difference between Proposition \ref{prop:lee} and Theorem \ref{thm:lee} is that in the former the vector $\x$ is in $\ball(\cK)$ instead of $\ball(\cK, R \varepsilon)$.
\begin{proof}[{\bf Proof of Proposition \ref{prop:lee}}]
	Let $\x'$ be the output of the algorithm in Theorem \ref{thm:lee} given input $(\c, \varepsilon')$, where $\varepsilon' \coloneqq \varepsilon/(3\kappa)$. First, we note that $
		\min_{\y \in \cK} \c^\top \y= -\sigma_{\cK}(-\c) \leq -  r\|\c\|,$
	where the last inequality follows by \cite[Lemma 2]{mhammedi2021efficient}. This, together with the guarantee on $\x'$ from Theorem \ref{thm:lee} and the fact that $\varepsilon \le 1$, implies that with probability at least $1-\varepsilon$, \begin{align}\x' \in \ball(\cK, R\varepsilon')  \quad \text{and}\quad  \c^{\top} \x' \leq \min_{\y \in \cK}\c^\top \y + \varepsilon' R \|\c\|\leq - r \|\c\| +\varepsilon R/(3 \kappa)\leq 0. \label{eq:nega}
	\end{align} 
In what follows, we condition on the event that \eqref{eq:nega} holds. Now, by Lemma \cite[Lemma 10]{mhammedi2021efficient}, we can compute $\tilde \gamma$ such that $\gamma_{\cK}(\x')\leq \tilde \gamma \leq \gamma_{\cK}(\x')+ \varepsilon/3$. using at most $\lceil\log_2(48\kappa^2/\varepsilon)\rceil$ calls to $\mathsf{SEP}(\cK)$. We let $\x=\x'/\tilde \gamma$ be the output of Algorithm $\cA$ in the Corollary's statement given input $(\c,\varepsilon)$. We now show that $\x$ satisfies \eqref{eq:sit}. By the fact that $\tilde \gamma \geq \gamma_{\cK}(\x')$ and positive homogeneity of the Gauge function, we have $\gamma_{\cK}(\x) = \gamma_{\cK}(\x')/\tilde \gamma\leq 1$, and so $\x\in \cK$. On the other hand, by the fact that $\tilde \gamma \leq \gamma_{\cK}(\x')+ \varepsilon/3$ and the guarantee of $\x'$ in \eqref{eq:nega}, we have 
	\begin{align}
	R \varepsilon\|\c\|/3 +  \min_{\y \in \cK}\c^\top \y \geq \c^\top \x'  = \tilde \gamma   \c^\top \x   \stackrel{(*)}{\geq}  \gamma_{\cK}(\x')  \c^\top \x' - \varepsilon  |\c^\top \x| /3,  \label{eq:gen}
		\end{align}
	where $(*)$ follows by the fact that $\c^\top \x'\leq 0$ (see \eqref{eq:nega}) and $\tilde \gamma \leq \gamma_{\cK}(\x')+\varepsilon/3$. Now, since $\x' \in \mathbb{B}(\cK, R\varepsilon')$ (see \eqref{eq:nega}), there exists $\y \in\cK$ such that $\|\x'-\y\|\leq R\varepsilon'$. Using this and the subadditivity of the Gauge function, we have 
	\begin{align}
		\gamma_{\cK}(\x') \leq \gamma_{\cK}(\y) + \gamma_{\cK}(\x' - \y)\leq 1 + \|\x'-\y\|/r \leq 1 + \kappa \varepsilon'\leq 1+\varepsilon/3, \label{eq:toplug}
		\end{align}
	where the second inequality follows by the fact that $\mathbb{B}(r)\subseteq \cK$ and \cite[Lemma 2]{mhammedi2021efficient}. Plugging \eqref{eq:toplug} into \eqref{eq:gen} and using that $\c^\top \x \leq 0$ (see \eqref{eq:nega}), we get that
	\begin{align}
			R\varepsilon\|\c\|/3 +  \min_{\y \in \cK}\c^\top \y  \geq \c^\top \x -  |\c^\top \x| \varepsilon/3 -   |\c^\top \x| \varepsilon/3,
		\end{align}
Using this and the Cauchy Schwarz inequality, we get the desired inequality.
\end{proof}
We now use Proposition \ref{prop:lee} to give an upper bound on the complexity of optimizing a convex function.
\begin{proposition}
\label{prop:optimize}
Let $\cK$ be a closed convex set satisfying $\ball(r)\subseteq \cK \subseteq \ball(R)$, and let $f\colon \cK \rightarrow \reals$ be a closed convex function such that $\sup_{\y \in \cK} f(\y) - \inf_{\y \in \cK} f(\y)\leq B/2$, for some $B>0$. Then, for any $\varepsilon\in(0,1)$, there exists an algorithm that outputs $\x$ on input $\varepsilon$ s.t., with probability at least $1-\varepsilon$,
\begin{align}
 f(\x) \leq \inf_{\y \in \cK} f(\y) + \varepsilon \cdot \left(1+B\right),
	\end{align}
Furthermore, the algorithm outputs $\x$ with an expected running time of $$O(d (\mathrm{SEP}(\cK)+\mathrm{GRAD}(f))\ln (d\kappa B/\varepsilon) +d^3 \ln^{O(1)}(d\kappa B/\varepsilon)), \ \  \text{where}  \ \ \kappa\coloneqq \frac{R}{r}.$$
\end{proposition}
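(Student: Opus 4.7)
The plan is to reduce convex minimization of $f$ over $\cK$ to linear optimization over the epigraph of $f$ in $\reals^{d+1}$, and apply Proposition~\ref{prop:lee} using the Separation Oracle for the epigraph furnished by Lemma~\ref{lem:switch}. By translating $f$ if necessary, I assume without loss of generality that $\inf_{\y\in\cK} f(\y) = 0$, so that $0 \le f \le B/2$ on $\cK$, and extend $f$ to $+\infty$ off $\cK$ so that $\dom(f) = \cK$. Define the lifted set
\[
\cK_f \coloneqq \epi(f) \cap \{(\y,v) \in \reals^{d+1} : v \le B\},
\]
so that $\min_{\y\in\cK} f(\y) = \min_{(\y,v)\in\cK_f} v$, and any $(\hat\x,\hat v)\in\cK_f$ with small $\hat v$ yields $f(\hat\x)\le\hat v$ by the epigraph constraint.

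Lemma~\ref{lem:switch} applied with $b = B$ then yields a Separation Oracle for $\cK_f$ whose per-query cost is one call each to $\mathsf{SEP}(\cK)$ and $\mathsf{GRAD}(f)$. To invoke Proposition~\ref{prop:lee}, $\cK_f$ (after an appropriate translation) must be sandwiched between two Euclidean balls in $\reals^{d+1}$. The outer ball is immediate: $\cK_f \subseteq \ball(R) \times [0, B] \subseteq \ball(\sqrt{R^2+B^2})$. For the inner ball, shift by $\c_0 \coloneqq (\bm{0}, 3B/4)$; since $f(\bm{0}) \le B/2$ and $\ball(r) \subseteq \cK$, any perturbation $(\Delta\y, \Delta v)$ with $\|\Delta\y\| \le r$ and $|\Delta v| \le B/4$ satisfies $\bm{0}+\Delta\y \in \cK$, $f(\bm{0}+\Delta\y) \le B/2 \le 3B/4 + \Delta v$, and $3B/4+\Delta v \le B$; hence $\c_0 + (\Delta\y,\Delta v) \in \cK_f$. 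Thus $\tilde\cK_f \coloneqq \cK_f - \c_0$ contains $\ball(r')$ for $r' = \min(r, B/4)/\sqrt{2}$ and lies in $\ball(R')$ for $R' = O(R+B)$.

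Running the algorithm from Proposition~\ref{prop:lee} on $\tilde\cK_f$ with linear objective $\c = (\bm{0}, 1)$ and a rescaled accuracy parameter $\varepsilon' = \varepsilon/\Theta(R+B)$ yields, with probability $\ge 1-\varepsilon$, a point whose lift $(\hat\x,\hat v) \in \cK_f$ has $v$-coordinate at most $R'\varepsilon' = O(\varepsilon(1+B))$ above the optimum value $0$; consequently $f(\hat\x)\le\hat v$ matches the claimed suboptimality bound after undoing the initial translation of $f$. The conditioning ratio $R'/r' = O((R+B)/\min(r, B))$ contributes only a log factor of $O(\ln(d\kappa B/\varepsilon))$, so Proposition~\ref{prop:lee} makes $O(d\ln(d\kappa B/\varepsilon))$ Oracle calls to the Separation Oracle for $\tilde\cK_f$, each of cost $O(\mathsf{SEP}(\cK)+\mathsf{GRAD}(f))$ by Lemma~\ref{lem:switch}, plus the $O(d^3\ln^{O(1)}(d\kappa B/\varepsilon))$ arithmetic overhead from Proposition~\ref{prop:lee}.

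The main technical obstacle is the inscribed-ball step: identifying a center $\c_0$ whose $v$-coordinate is strictly between $\sup_\cK f$ and the cap $b=B$ so that perturbations in both the $\y$- and $v$-directions preserve membership in $\cK_f$ — this relies crucially on the joint bounds $f(\bm{0}) \le B/2$ (a consequence of the assumed oscillation bound $\sup f - \inf f \le B/2$ together with the initial translation) and $\ball(r) \subseteq \cK$. A secondary subtlety is reconciling the $R\varepsilon\|\c\|$ suboptimality from Proposition~\ref{prop:lee} with the stated $\varepsilon(1+B)$ bound; this is exactly what the rescaled $\varepsilon'$ handles, at the cost of only logarithmic overhead in the complexity.
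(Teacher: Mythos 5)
Your proof follows the same route as the paper's: both lift the problem to $\reals^{d+1}$ by taking the capped epigraph $\cK_f = \epi(f)\cap\{v\le B\}$, implement a Separation Oracle for it via Lemma~\ref{lem:switch} (one $\mathsf{SEP}(\cK)$ call plus one $\mathsf{GRAD}(f)$ call per query), and then linearly optimize $\c=(\bm{0},1)$ over this set using Proposition~\ref{prop:lee}. The difference is one of bookkeeping: the paper first rescales the domain, defining $g(\x)\coloneqq f(R\x)$ on $\cK'=\cK/R\subseteq\ball(1)$, so that its outer radius is $1\vee B\le 1+B$ and the raw suboptimality $R'\varepsilon\|\c\|$ from Proposition~\ref{prop:lee} directly gives the claimed $(1+B)\varepsilon$ bound without rescaling the accuracy parameter. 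You skip the domain rescaling (keeping the outer radius $O(R+B)$) and instead shrink the accuracy parameter to $\varepsilon'=\varepsilon/\Theta(R+B)$, which achieves the same bound at the cost of an extra factor inside the logarithm. Both variants incur only logarithmic overhead, so the complexity claim is unaffected. You are, if anything, more careful than the paper at one point: you explicitly translate $f$ so that $\inf_\cK f = 0$ and exhibit a centered point $\c_0=(\bm{0},3B/4)$ around which $\cK_f$ contains a ball, whereas the paper asserts the inner-ball inclusion $\ball((\kappa\wedge(B/2))^{-1})\subseteq\cK_g$ without this normalization (which is needed for the epigraph to actually contain a ball about the origin). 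Your version makes the inscribed-ball step explicit and correct; the underlying idea is otherwise the same.
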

\begin{proof}[{\bf Proof}]
Let $g\colon \reals^d \rightarrow \reals \cup \{+\infty\}$ be the closed convex function defined by $g(\x) = f(R\x)$ if $\x \in \cK/R$ and $g(\x)=+\infty$, otherwise. Define the sets $\cK'= \dom(g)$ and $$\cK_g \coloneqq \epi(g) \cap \left\{(\y,v)\in \reals^d\times \reals \colon v\leq B \right\}.$$
Since $\sup_{\y' \in \cK'} g(\y') - \inf_{\y' \in \cK'} g(\y')=\sup_{\y \in \cK} f(\y) - \inf_{\y \in \cK} f(\y)\leq B/2$ and $\ball(1/\kappa)\subseteq \cK' \subseteq \ball(1)$, we have that $$\ball((\kappa\wedge (B/2))^{-1})\subseteq \cK_g \subseteq \ball(1\vee B).$$  Let $\cA$ be the algorithm in Proposition \ref{prop:lee} instantiated with $(\cK=\cK_g, r=(\kappa\wedge (B/2))^{-1}, R=1\vee B)$, and let $\z=(\x',v)$ be the output of $\cA$ given input $(\c, \varepsilon)$, where $\c = (\bm{0},1)$. Then, by Proposition \ref{prop:lee} and the fact that $\cK_g \subseteq \ball(1+B)$, we have, with probability at least $1-\varepsilon$, 
\begin{align}
	(\x', v) \in \cK_g \quad \text{and} \quad  v = \c^\top (\x',v) \leq \min_{(\y',s)\in \cK_g} \c^\top (\y',s) + \left(1+ B\right)\varepsilon. \label{eq:Ltoinst}
	\end{align}
Furthermore, $\cA$ outputs $\x$ with an expected running time of $$O(d \cdot \mathrm{SEP}(\cK_g)\ln (d\kappa B/\varepsilon) + d^3 \ln^{O(1)}(d\kappa B/\varepsilon))$$ which, by Lemma \ref{lem:switch}, is equal to  $O(d (\mathrm{SEP}(\cK)+\mathrm{GRAD}(f))\ln (d\kappa B/\varepsilon) +d^3 \ln^{O(1)}(d\kappa B/\varepsilon))$. For the rest of this proof, we condition on the event that \eqref{eq:Ltoinst} holds. Let $\y_*\in \argmin_{\y\in \cK'}g(\y)$ and $v_*\coloneqq g(\y_*)$, and note that $(\y_*,v_*)\in \cK_g$ and $g(\y_*)= \inf_{\y \in \cK}f(\y)$. Thus, instantiation \eqref{eq:Ltoinst} with $(\y', s)=(\y_*, v_*)$ we obtain that,
\begin{align}
	v \leq g(\y_*) + \varepsilon \cdot \left(1 + B  \right)= \inf_{\y \in \cK}f(\y) + \varepsilon \cdot \left(1 +B  \right).
	\end{align}
Now, the fact that $(\x',v)\in \cK_g$ implies that $\x' \in \cK/R$ and $g(\x')\leq v$. Thus, the vector $\x\coloneqq R\x'$ satisfies
\begin{align}
	f(\x) = g(\x') \leq v \leq \inf_{\y \in \cK}f(\y) + \varepsilon \cdot \left(1 + B  \right).
	\end{align} 
This completes the proof.
\end{proof}

	\begin{proof}[{\bf Proof of Proposition \ref{prop:eucproj}}]
	Let $f\colon \cU \rightarrow \reals$ be the convex function defined by $f(\u)\coloneqq \|\gamma \u - \w\|^2$, for $\gamma$, $\w$, and $\cU$ as in the lemma's statement. Note that for any point $\u \in \cU$, we have $\nabla f(\u)=2 \gamma (\gamma \u - \w)$. Thus, a Subgradient Oracle for $f$ on $\cK$ can be implemented with complexity $\mathrm{GRAD}(f)\leq O(d)$. Furthermore, by definition of the polar set $\K^\circ$, a Separation Oracle for the latter (and as a result for $\cU$) can be implemented with one call to an LOO for $\K$. So we have available a Separation Oracle $\mathsf{SEP}(\cU)$. 
	
	Since $\cU=\sqrt{1-\epsilon}\K^\circ \subseteq \K^{\circ} \subseteq \mathbb{B}(1/r)$ and $\gamma \in(0,b]$, we have $\left(\max_{\u\in \cU}f(\u)-\min_{\v\in \cU}f(\v)\right)\leq \frac{2b^2}{r^2}+ 2 \|\w\|^2$. Instantiating Proposition \ref{prop:optimize} with $\varepsilon =  \delta/(1+{4b^2}/{r^2}+ 4 \|\w\|^2)$ and $B = {4b^2}/{r^2}+ 4 \|\w\|^2$ yields an algorithm $\cA_{\mathsf{proj}}^{\cU}$ that outputs $\z\in \cU$ with our desired suboptimality and computational complexity guarantees.
	\end{proof}
	
	\subsection{Proof of Lemma \ref{lem:gs}}
	\label{sec:gs}
	
	\begin{algorithm}[t]
		\caption{Golden-section search for optimizing the function $\Theta$ in Lemma \ref{lem:oned}. $\cA_{\mathsf{GS}}(\w,\delta)$}
		\label{alg:gs}
		\begin{algorithmic}
			\REQUIRE Input point $\w\in \reals^d\setminus\{\bm{0}\}$ and $\delta \in(0,1)$. $\cA^{\cU}_{\mathsf{proj}}$ as in Proposition \ref{prop:eucproj}.
			\vspace{0.1cm}
			\STATE Set $\varphi \coloneqq (\sqrt{5}+1)/2$ and $K = \left\lceil-\log_{\varphi-1}  \frac{8 \kappa^2 \|\w\|^2 + 4 \kappa  \|\w\|^2}{ \delta}\right\rceil$. \algcomment{$\kappa=\frac{R}{r}$ with $r, R$ as in \eqref{eq:sandwitch}}
			\STATE Define $\tilde \Theta(\cdot)= (\cdot)^2 + r^2 \|\gamma \cA^{\cU}_{\mathsf{proj}}(\w,\cdot, \frac{\delta}{4 K \varphi})- \w\|^2$. 
			\STATE Set $\gamma_1= 0$, and $\mu_1=R\|\w\|$.
			\STATE Set $\bar \gamma_1 = \mu_1 - (\mu_1-\gamma_1) \varphi^{-1}$ and $\bar \mu_1= \gamma_1 + (\mu_1-\gamma_1)\varphi^{-1}$.
			\STATE Set $\tilde\theta_{\gamma, 1} = \tilde \Theta(\bar \gamma_1)$ and $\tilde\theta_{\mu,1}  = \tilde \Theta (\bar \mu_1)$
			\FOR{$i=1,\dots, K$} \label{line:while}
			\IF{$\tilde\theta_{\gamma, i} < \tilde\theta_{\mu, i}$}
			\STATE Set $\gamma_{i+1} = \gamma_i$ and $\mu_{i+1} = \bar \mu_i$.
			\STATE Set $\bar \gamma_{i+1} = \mu_{i+1} - (\mu_{i+1}-\gamma_{i+1}) \varphi^{-1}$ and $\bar \mu_{i+1} = \bar\gamma_{i}$.
			\STATE Set $\tilde\theta_{\gamma, i+1} = \tilde \Theta(\bar \gamma_{i+1})$ and $\tilde\theta_{\mu,i+1}  =\tilde\theta_{\gamma,i}$.
			\ELSE 
			\STATE Set $\gamma_{i+1} = \bar \gamma_i$ and $\mu_{i+1} = \mu_i$.  
			\STATE Set $\bar \gamma_{i+1} = \bar \mu_i$ and $\bar \mu_{i+1} = \gamma_{i+1} + (\mu_{i+1}-\gamma_{i+1})\varphi^{-1}$.
			\STATE Set $\tilde\theta_{\gamma, i+1} = \tilde\theta_{\mu, i}$ and $\tilde\theta_{\mu,i+1}  = \tilde \Theta (\bar \mu_{i+1})$
			\ENDIF  
			\ENDFOR 
			\STATE Return $\hat \gamma = (\gamma_K+\mu_K)/2$.
		\end{algorithmic}
	\end{algorithm}	
	For the proof of Lemma \ref{lem:gs}, we need the following result on the Golden-search algorithm in Alg.~\ref{alg:gs} :
	\begin{lemma}
		\label{lem:subgold}
		Let $(\tilde\theta_{\gamma, i},\tilde\theta_{\mu, i}, \bar \gamma_i, \bar \mu_i, \gamma_i, \mu_i, \varphi,K)$ be as in Algorithm \ref{alg:gs}, and for $\epsilon>0$ define the event   
		\begin{align}
			\cE \coloneqq \{	|\tilde\theta_{\gamma, i}- \Theta(\bar \gamma_i)|	\vee 	|\tilde\theta_{\mu, i}- \Theta(\bar \mu_i)| \leq \epsilon, \quad \forall i \in[K]\}.\label{eq:event4}
		\end{align}
		If $\cE$ holds, then $\min_{\gamma \in [\gamma_i, \mu_i]}\Theta(\gamma) \leq \min_{\gamma \geq 0}\Theta(\gamma) + 2 (i-1)\varphi\epsilon$, for all $i \in [K]$.
	\end{lemma}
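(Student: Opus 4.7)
The plan is to prove the stated bound by induction on $i$, with the base case relying on a coarse localization of the global minimum and the inductive step exploiting a convexity argument combined with a careful golden-ratio calculation.

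For the base case $i=1$, I would first show that $\argmin_{\gamma \geq 0}\Theta(\gamma) \subseteq [0,R\|\w\|]$, which ensures $\min_{[\gamma_1,\mu_1]}\Theta = \min_{\gamma \geq 0}\Theta$. Indeed, $\Theta(0) = r^2\|\w\|^2 \leq R^2\|\w\|^2$ because $r \leq R$, while for any $\gamma > R\|\w\|$ we have $\Theta(\gamma) \geq \gamma^2 > R^2\|\w\|^2 \geq \Theta(0)$, so the infimum is attained inside $[0,R\|\w\|] = [\gamma_1,\mu_1]$. This gives the bound at $i=1$ with no error.

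For the inductive step, assume the bound at index $i$, let $\gamma^*_i \in \argmin_{\gamma \in [\gamma_i,\mu_i]}\Theta(\gamma)$, and consider the two branches of Alg.~\ref{alg:gs}. If $\gamma^*_i \in [\gamma_{i+1},\mu_{i+1}]$, the bound propagates trivially. Otherwise $\gamma^*_i$ lies in the discarded sub-interval; by symmetry, assume the ``$\tilde\theta_{\gamma,i} < \tilde\theta_{\mu,i}$'' branch was taken so that $[\gamma_{i+1},\mu_{i+1}] = [\gamma_i,\bar\mu_i]$ and $\gamma^*_i \in (\bar\mu_i,\mu_i]$. Under $\cE$,
\begin{align}
\Theta(\bar\gamma_i) \leq \tilde\theta_{\gamma,i} + \epsilon < \tilde\theta_{\mu,i} + \epsilon \leq \Theta(\bar\mu_i) + 2\epsilon.
\end{align}
Writing $\bar\mu_i = \lambda \bar\gamma_i + (1-\lambda)\gamma^*_i$ with $\lambda = (\gamma^*_i-\bar\mu_i)/(\gamma^*_i-\bar\gamma_i) \in (0,1)$, convexity of $\Theta$ (Lem.~\ref{lem:oned}) yields $\Theta(\bar\mu_i) \leq \lambda\Theta(\bar\gamma_i) + (1-\lambda)\Theta(\gamma^*_i) \leq \lambda(\Theta(\bar\mu_i)+2\epsilon) + (1-\lambda)\Theta(\gamma^*_i)$, which rearranges to $\Theta(\bar\mu_i) - \Theta(\gamma^*_i) \leq 2\epsilon \lambda/(1-\lambda)$.

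The technical heart of the argument is bounding $\lambda/(1-\lambda)$ uniformly. Using the identities $\varphi-1=1/\varphi$ and $2-\varphi = 1/\varphi^2$ implied by $\varphi^2=\varphi+1$, one computes $\mu_i-\bar\mu_i = (\mu_i-\gamma_i)(\varphi-1)/\varphi$ and $\bar\mu_i-\bar\gamma_i = (\mu_i-\gamma_i)(2-\varphi)/\varphi$, hence
\begin{align}
\frac{\lambda}{1-\lambda} = \frac{\gamma^*_i - \bar\mu_i}{\bar\mu_i - \bar\gamma_i} \leq \frac{\mu_i - \bar\mu_i}{\bar\mu_i - \bar\gamma_i} = \frac{\varphi-1}{2-\varphi} = \varphi.
\end{align}
Therefore $\min_{[\gamma_{i+1},\mu_{i+1}]}\Theta \leq \Theta(\bar\mu_i) \leq \Theta(\gamma^*_i) + 2\varphi\epsilon \leq \min_{\gamma \geq 0}\Theta + 2i\varphi\epsilon$, closing the induction. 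The symmetric branch $\tilde\theta_{\gamma,i} \geq \tilde\theta_{\mu,i}$ with $\gamma^*_i \in [\gamma_i,\bar\gamma_i)$ is handled by the mirror calculation at $\bar\gamma_i$.

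The main obstacle is pinning down the correct per-step loss: one has to notice that the comparison on the noisy values $\tilde\theta_{\gamma,i},\tilde\theta_{\mu,i}$ may pick the wrong half-interval, but only when $|\Theta(\bar\gamma_i)-\Theta(\bar\mu_i)| \leq 2\epsilon$, and that under convexity this mistake costs at most $2\varphi\epsilon$ per iteration thanks to the specific golden-section ratio. The rest is routine bookkeeping to accumulate the additive error across $i-1$ iterations.
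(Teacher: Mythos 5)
Your proof is correct and follows essentially the same route as the paper's: induction over $i$, the trivial case when the minimizer survives the interval update, and otherwise the $2\epsilon$ gap from the event $\cE$ combined with convexity and the golden-ratio identity $(\varphi-1)/(2-\varphi)=\varphi$ to pay at most $2\varphi\epsilon$ per iteration. The only cosmetic differences are your base case (a direct growth argument $\Theta(\gamma)\geq\gamma^2>\Theta(0)$ for $\gamma>R\|\w\|$, where the paper instead invokes Lemma~\ref{lem:optCeps} to get $\gamma_*\leq\lambda_*\leq R\|\w\|$) and your use of the secant form of convexity in place of the paper's first-order condition with $\Theta'$, both of which are fine.
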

	\begin{proof}[{\bf Proof}]
		We proceed by induction. By Lemma \ref{lem:optCeps}, we have $\lambda_* =\sigma_{\K_{\epsilon}}(\w)$, which together with the fact that $\K_{\epsilon}\subseteq \K \subseteq \cB(R)$ implies that $\gamma_* \leq \lambda_* \leq  R\|\w\|$, where $\gamma_* \in \argmin_{\gamma \in \reals}\Theta(\gamma)$. Thus, the claim of the lemma holds for $i=1$ since $\gamma_1 =0$ and $\mu_1 =R\|\w\|$. Now suppose the claim holds for $i$, and we will show that it holds for $i+1$. Assume without loss of generality that $\tilde \theta_{\mu,i}> \tilde \theta_{\gamma, i}$. In this case, we have $\mu_{i+1}=\bar \mu_i$ and $\gamma_{i+1}= \gamma_i$, and by \eqref{eq:event4}, we have 
		\begin{align}
			\Theta(\bar \gamma_{i}) < \Theta(\bar \mu_{i}) + 2 \epsilon. \label{eq:warumnicht}
		\end{align}
		Let $\nu_i \in \argmin_{\nu \in [\gamma_i, \mu_i]} \Theta(\nu)$. If $\nu_i \in [\gamma_i, \bar \mu_{i}]$, then $\min_{\gamma \in [\gamma_{i+1}, \mu_{i+1}]}\Theta(\gamma) \leq \min_{\gamma \in [\gamma_i, \mu_i]}\Theta(\gamma)\leq \min_{\gamma\geq 0}\Theta(\gamma) + 2 i\varphi\epsilon$, by the induction hypothesis. Now, suppose that $\nu_i \in (\bar \mu_i, \mu_i]$.
		
		By convexity of $\Theta$ (Lemma \ref{lem:oned}), we have $ \Theta(\bar \mu_i) + \Theta'(\bar \mu_i)\cdot (\bar \gamma_i- \bar\mu_i) \leq \Theta(\bar \gamma_i)$. Plugging this into \eqref{eq:warumnicht}, we get that $
		\Theta'(\bar \mu_i) \cdot (\bar \gamma_{i} - \bar \mu_{i}) \leq 2 \epsilon.$
		This, together with the fact that $\Theta'(\bar \mu_{i})<0$ (since $\nu_i \in (\bar \mu_{i}, \mu_{i}]$ and $\Theta$ is convex) implies that \begin{align}|\Theta'(\bar \mu_{i})|\cdot (\bar \mu_{i}- \bar \gamma_{i}) \leq 2 \epsilon.
		\end{align} 
		Combining this with the fact that $\Theta(\bar \mu_{i}) \leq 	\Theta'(\bar \mu_{i}) \cdot ( \bar \mu_{i}- \nu_i) + \Theta(\nu_i)$ implies that 
		\begin{align}
			\Theta(\bar \mu_{i}) \leq 2 \epsilon \varphi +  \Theta(\nu_i), \label{eq:coming}
		\end{align}
		where we used that $(\nu_i- \bar \mu_{i}) \leq (\mu_i - \bar \mu_i)\leq \varphi \cdot (\bar \mu_i - \bar \gamma_i)$. Using \eqref{eq:coming}, together with $\Theta(\nu_i) = \min_{\gamma \in[\gamma_i, \mu_i]} \Theta(\gamma)\leq \min_{\gamma \geq 0}\Theta(\gamma) + 2 (i-1) \varphi \epsilon$ (by the induction hypothesis), we get that 
		\begin{align}
			\min_{\gamma \in[\gamma_{i+1}, \mu_{i+1}]} \Theta(\gamma)\leq \Theta(\mu_{i+1})=\Theta(\bar\mu_{i}) \leq \min_{\gamma \geq 0}\Theta(\gamma) + 2 i \varphi \epsilon,
		\end{align}
		which completes the induction.
	\end{proof}
	
	\begin{proof}[{\bf Proof of Lemma \ref{lem:gs}}]
		Let $(\tilde\theta_{\gamma, i},\tilde\theta_{\mu, i}, \bar \gamma_i, \bar \mu_i, \gamma_i, \mu_i)$ be as in Algorithm \ref{alg:gs}. Further, let $\Theta\colon \gamma^2 + r^2 \inf_{\u \in \sqrt{1-\epsilon}\cU} \|\lambda \u - \w\|^2$ and $\gamma_* \in \argmin_{\gamma \geq 0} \Theta(\gamma)$. By Proposition \ref{prop:eucproj} and a union bound, we have that with probability at least $1-\delta$, 
		\begin{align}
			|\tilde\theta_{\gamma, i}- \Theta(\bar \gamma_i)|	\vee 	|\tilde\theta_{\mu, i}- \Theta(\bar \mu_i)| \leq r^2 \delta', \quad \forall i \in[K],\label{eq:event}
		\end{align}
		where $\delta' = \delta/(4K \varphi)$. Let $\cE$ be the event in \eqref{eq:event}. For the rest of this proof, we condition on the event $\cE$. By Lemma \ref{lem:subgold}, we have 
		\begin{align}
			\min_{\gamma\in[\gamma_K, \mu_K]} \Theta(\gamma)\leq 2K r^2 \varphi \delta'  + \Theta(\gamma_*).\label{eq:religion}
		\end{align}
		Now, by the fact that $\Theta$ is convex, $(4R \|\w\| + 2r \|\w\|)$-Lipschitz on $(0, R\|\w\|]$ (Lemma \ref{lem:oned}), and $[\gamma_K, \mu_K] \subseteq [0, R\|\w\|]$, we have 
		\begin{align}
			\Theta(\hat \gamma)& \leq  	\min_{\gamma\in[\gamma_K, \mu_K]} \Theta(\gamma) + (4R \|\w\| + 2r \|\w\|) \cdot (\mu_K - \gamma_K), \nn \\
			&  \leq  2K r^2 \varphi \delta'  + (4R \|\w\| + 2r \|\w\|) \cdot (\varphi-1)^{K} + \Theta(\gamma_*), \label{eq:og}
		\end{align}
		where the last inequality follows by \eqref{eq:religion} and the fact that $\mu_i- \gamma_i\leq  (\varphi-1)(\mu_{i-1}- \gamma_i)$, for all $i\in[2..K]$, and $\mu_1 - \gamma_1 =R \|\w\|$. Substituting the expression of $K$ from Algorithm \ref{alg:gs} into \eqref{eq:og}, leads to the desired result. 
	\end{proof}

	\subsection{Proof of Lemma \ref{lem:linopt}}
	\label{sec:linopt}
	\begin{proof}[{\bf Proof of Lemma \ref{lem:linopt}}]
		Let $\epsilon$, $\delta\in(0,1) $, $\w\in \reals^d\setminus \{\bm{0}\}$, and $\cU \coloneqq \sqrt{1-\epsilon} \K^\circ$. Consistent with Algorithm \ref{alg:LOOCeps}, we let $\bar \w \coloneqq \w/\|\w\|$ and note that 
		\begin{align}
			\argmin_{\v\in \K_{\epsilon}} \inner{\v}{\w} = \argmin_{\v\in \K_{\epsilon}} \inner{\v}{\bar \w}. \label{eq:argmineq}
		\end{align}
		Further, define $S(\gamma)\coloneqq\inf_{\u \in \cU}\|\gamma \u - \bar \w\|$, $\Theta\colon \gamma \mapsto \gamma^2 + r^2 S(\gamma)^2$, and let $\u_{\gamma}\in \argmin_{\u \in \cU} \|\gamma \u - \bar \w\|^2$. When $\gamma =\gamma_*$, we have $\u_* = \u_{\gamma}$, where $\gamma_*$ and $\u_*$ are as in Lemma \ref{lem:optCeps} with $\w$ replaced by $\bar \w$; that is  
		\begin{align}
			(\gamma_*, \u_*)\in \argmin_{\gamma\geq 0, \u \in \sqrt{1-\epsilon}\K^\circ} \left\{\gamma^2 + r^2 \|\gamma \u - \bar\w\|^2 \right\}.
		\end{align}
		Also, consistent with Lemma \ref{lem:optCeps}, we let $\lambda_* \coloneqq \sqrt{\gamma^2_* + r^2 \|\gamma \u_* - \bar\w\|^2}$.
		Throughout, we will use the fact that $\lambda_* =\sigma_{\K_{\epsilon}}(\bar{\w})$ (follows from Lemma \ref{lem:optCeps}), which together with the fact that $\K_{\epsilon}\subseteq \K \subseteq \cB(R)$ implies
		\begin{align}
			\gamma_* \leq \lambda_* \leq  R\|\bar{\w}\|=R.\label{eq:bound}
		\end{align}
		We first show that $\Delta \coloneqq \|\gamma_* \hat \u  - \bar{\w}  \|^2 - \|\gamma_* \u_{*} - \bar{\w} \|^2\leq O(\delta^{1/2})$ ($\hat\u$ defined in Alg.~\ref{alg:LOOCeps}).  We have 
		\begin{align}
			\Delta & =\underbrace{\|\gamma_* \hat \u  - \bar{\w}  \|^2-  \|\hat \gamma \hat \u  - \bar{\w}  \|^2}_{A} + \underbrace{\hat \gamma^2/r^2 +  \|\hat \gamma  \u_{\hat \gamma}  - \bar{\w}  \|^2 - ( \gamma^2_*/r^2 +  \| \gamma_*  \u_{*}  - \bar{\w}  \|^2)}_{B}  \nn \\ & \qquad + \underbrace{ \gamma^2_*/r^2 -\hat \gamma^2/r^2}_{C}  + \underbrace{\|\hat \gamma \hat \u  - \bar{\w}  \|^2 -  \|\hat \gamma \u_{\hat \gamma}  - \bar{\w}  \|^2}_{D}.\label{eq:terms}
		\end{align}
		We now bound each term on the RHS of \eqref{eq:terms}. By Lemma \ref{lem:gs} and the fact that $\Theta$ is $1$-strongly convex (Lemma \ref{lem:oned}), we have with probability at least $1-\delta$,
		\begin{align}
			| \hat \gamma -  \gamma_*|^2/2 \leq |\Theta(\hat \gamma) - \Theta(\gamma_*)| \leq r^2\delta. \label{eq:approxd}
		\end{align}
		Let $\cE_1$ be the event that \eqref{eq:approxd} holds. Proposition \ref{prop:eucproj} guarantees that $\hat\u\in \cU$ and so $\|\hat \u\|\leq 1/r$. Using this and \eqref{eq:approxd}, we can bound the terms $A, B$, and $C$ in \eqref{eq:terms} in terms of $b\coloneqq 2 R + (2\delta)^{1/2} r \geq \hat \gamma + \gamma_*$ (the inequality follows by \eqref{eq:bound} and \eqref{eq:approxd}) under the event $\cE_1$ as 
		\begin{gather}
			A = (\gamma^2_* - \hat \gamma^2) \|\hat \u\|^2 -2(\gamma_* - \hat \gamma)\inner{\hat \u}{\bar{\w}}  \leq (2\delta)^{1/2} b/r  +(8\delta)^{1/2} , \label{eq:event0} \\ B = \frac{\Theta(\hat\gamma) - \Theta(\gamma_*)}{r^2} \leq  \delta, \ \ C= (\gamma_*- \gamma)(\hat\gamma +\gamma_*)/r^2\leq \frac{ b (2\delta)^{1/2}}{r},  \label{eq:event1}
		\end{gather}
		Proposition \ref{prop:eucproj} immediately implies that 
		\begin{align}
			D \leq \delta. \label{eq:event2}
		\end{align}
		with probability at least $1-\delta$. Let $\cE_2$ be the event that \eqref{eq:event2} holds. Define the event $\cE \coloneqq \cE_1 \cap \cE_2$ and note that $\mathbb{P}[\cE]\geq 1 - 2\delta$ via a union bound. In what follows, we assume that $\cE$ holds. By \eqref{eq:terms}, \eqref{eq:event0}, \eqref{eq:event1} and \eqref{eq:event2}, we have \begin{align}
			\|\gamma_* \hat \u  - \bar{\w}  \|^2 - \inf_{\v \in \gamma_* \cU} \|\v - \bar{\w} \|^2= \|\gamma_* \hat \u  - \bar{\w}  \|^2 - \|\gamma_* \u_{*} - \bar{\w} \|^2 \leq C_{\delta, \w}, \label{eq:poem}
		\end{align} 
		where $C_{\delta, \w}\coloneqq 2\delta  +(8\delta)^{1/2}\left( \frac{b}{r}+1\right)$.
		Since $\hat \u \in \cU$ (and hence $\gamma_* \hat \u \in \gamma_* \cU$) and the function $\v \mapsto \|\v - \bar{\w}\|^2$ is 1-strongly convex on $\gamma_* \cU$, \eqref{eq:poem} implies that 
		$\|\gamma_* \hat\u  - \gamma_* \u_*\|^2\leq 2C_{\delta, \w}$. Using this and \eqref{eq:approxd}, we have 
		\begin{align}
			\|\hat \gamma \hat \u - \gamma_* \u_*\|^2 &\leq 2 C_{\delta, \w} + (\hat \gamma^2 - \gamma_*^2)\|\hat \u\|^2 - 2 (\hat \gamma -\gamma_*) \gamma_* \inner{\hat \u}{\u_*}\nn \\  & \leq C'_{\delta, \w}\coloneqq 2C_{\delta, \w}+ (2\delta)^{1/2} b/r  +(8\delta)^{1/2}\kappa , \label{eq:event3}
		\end{align}
		where we used the fact that $\gamma_*\leq R, \|\hat \u\|\vee \|\u_*\|\leq 1/r$, and $\hat \gamma-\gamma_*\leq (2\delta)^{1/2}r$.
		We now use \eqref{eq:event3} to show the claim of the lemma. Let $\hat\z \coloneqq \bar{\w}- \hat \gamma \hat \u$ and $\z_* \coloneqq \bar{\w} - \gamma_* \u_*$. 
		From \eqref{eq:event3}, we have $
		|\inner{\bar{\w}}{ \hat \z} - \inner{\bar{\w}}{\z_*}|^2 \leq C_{\delta, \w}' $, and so since $\inner{\bar{\w}}{\z_*}\geq \nu \coloneqq \frac{\epsilon}{2 \kappa^3 (1+\kappa^2\epsilon)^{3/2}}$ (Lem.~\ref{lem:optCeps}), we have 
		\begin{align}
			\frac{1}{\inner{\bar{\w}}{\hat \z}}\leq  \frac{1}{\inner{\bar{\w}}{\z_*}- \sqrt{C_{\delta, \w}'} } \leq \frac{1}{\inner{\bar{\w}}{\z_*}} \left(1+\frac{2\sqrt{C_{\delta, \w}'}  }{\inner{\bar{\w}}{\z_*}}\right) \leq \frac{1}{\inner{\bar{\w}}{\z_*}} + \frac{2\sqrt{C_{\delta, \w}'} }{\nu^2},\label{eq:frist}
		\end{align}
		where the second inequality follows by the fact that $1/(1-x)\leq 1 + 2 x$, for all $x\in[0,1/2]$, and the fact that 
		\begin{align}
			\frac{{C_{\delta, \w}'} }{\inner{\bar{\w}}{\z_*}^2} \leq  	\frac{{C_{\delta, \w}'}}{\nu^2}\leq \frac{64 \sqrt{\delta } \kappa ^{12} (7 \sqrt{\delta }+8 \sqrt{2} \kappa  )}{ \epsilon ^2}  \leq 1/4,
		\end{align} 
		by the range restriction on $\delta$. Also from \eqref{eq:event3}, we immediately get  
		\begin{align}
			\|\hat\z-\z_* \|^2 	\leq C_{\delta, \w}'. \label{eq:iso}
		\end{align}
		Let $\lambda_*^2 \coloneqq  \gamma^2_* + r^2 \| \gamma_*  \u_* - \bar{\w} \|^2$ and $\hat \lambda \coloneqq  \hat \gamma^2 + r^2 \|\hat \gamma \hat \u - \bar{\w} \|^2$. By \eqref{eq:event1} and \eqref{eq:event2}, we also have 
		\begin{align}
			r^2\delta +(2\delta)^{1/2} b r \geq r^2(B+ D) \geq 	|\hat \lambda^2 - \lambda^2_*|\geq |\hat \lambda - \lambda_*|\cdot  \lambda_* \geq |\hat \lambda - \lambda_*| \frac{r}{\sqrt{1+\kappa^2\epsilon}} ,  \label{eq:lambdabound}
		\end{align}
		where the last inequality follows by the fact that $\lambda_* = \sigma_{\K_{\epsilon}}(\bar{\w})$ and $\mathbb{B}(r)/\sqrt{1+\kappa^2 \epsilon}  \subseteq \K_{\epsilon}$, which implies that $\lambda_*  \geq \|\bar{\w}\|r/\sqrt{1+\kappa^2\epsilon}$ (see proof of Lemma \ref{lem:optCeps}). Rearranging \eqref{eq:lambdabound}, we get 
		\begin{align}
			|\hat \lambda - \lambda_*|  \leq \frac{(r \delta + (2\delta)^{1/2}b)\sqrt{1+\kappa^2\epsilon}}{\|\bar{\w}\|} \leq \tilde C_{\delta, \w} \coloneqq \frac{6 \sqrt{\delta } (\kappa +1) R (\|\bar{\w}\|+1)}{\|\bar{\w}\|}, \label{eq:deficit}
		\end{align}
		where the last inequality follows by definition of $b$ and the fact that $\epsilon,\delta \in(0,1)$. Let $\v_*\coloneqq \lambda_*/\inner{\z_*}{\bar \w}$ ($\inner{\z_*}{\bar \w}>0$ by Lemma \ref{lem:optCeps}), then by Lemma \ref{lem:optCeps} and \eqref{eq:argmineq}, we have 
		\begin{align}
			\v_* &\in \argmin_{\v \in \K_\epsilon}\inner{\v}{\w}, \label{eq:belong} \\
			\text{and}	\quad 		\|\hat \v -\v_*\| \nn  =&	\left|\frac{\hat \lambda \hat \z}{\inner{\hat \z}{ \bar{\w}}} - \frac{\hat \lambda \hat \z}{\inner{\z_*}{ \bar{\w}}} +\frac{\hat \lambda \hat \z }{\inner{\z_*}{\bar{\w}}} - \frac{\hat \lambda \z_* }{\inner{\z_*}{\bar{\w}}} + \frac{\hat \lambda  \z_* }{\inner{\z_*}{\bar{\w}}}  -\frac{\lambda_*  \z_* }{\inner{\z_*}{\bar{\w}}} \right|, \nn \\
			\leq & \frac{2\sqrt{C'_{\delta, \w}} }{\nu^2} \|\hat \lambda  \hat \z\| + \frac{|\hat \lambda|}{\inner{\z_*}{\bar{\w}}} \|\hat \z - \z_*\| + \frac{\|\z_*\|}{ \inner{\z_*}{\bar{\w}}} |\hat \lambda - \lambda_*|,  \nn
		\end{align}
		where the last step follows by the triangle inequality and by \eqref{eq:frist}. Now, by the facts that $\|\hat \z\|\leq \|\bar{\w} \|+ \gamma_* \|\hat \u\|   +|\hat \gamma -\gamma_*|\|\hat \u\|\leq (1+\kappa) + (2\delta)^{1/2} r$, $\|\hat \u\|\leq 1/R$, and $|\hat \lambda|\leq \lambda_* + \tilde C_{\delta, \w}\leq R + \tilde C_{\delta, \w}$ (by \eqref{eq:deficit} and \eqref{eq:bound}), we get	\begin{align}
			\|\hat \v -\v_*\|&	\leq  \frac{2\sqrt{C'_{\delta, \w}} }{\nu^2} (R+ \tilde C_{\delta, \w}) ((1+\kappa)  +(2\delta)^{1/2})  + \frac{(R+ \tilde C_{\delta, \w}) \cdot\sqrt{ C_{\delta, \w}'}}{\nu^2} +\frac{(1+\kappa) \tilde C_{\delta, \w}}{\nu^2}, \quad \nn \\
			&\leq 72^2 R \delta^{1/4}\kappa^3 /\nu^2,\label{eq:free}\\
			&	\leq   \frac{405^2 R \delta^{1/4} \kappa^{15} }{\epsilon^2},\label{eq:origin} 
		\end{align}
		where to get to \eqref{eq:free} we simplified expressions using that $\kappa\geq 1$ and $\delta \leq 1$, and the last inequality we used that $\nu =  \epsilon/(2 \kappa^3 (1+\kappa^2\epsilon)^{3/2}) \geq \epsilon/(2^{5/2} \kappa^6) $. Now, by sub-additivity of the Gauge function and the fact that  $\gamma_{\K_{\epsilon}}(\cdot)\leq \sqrt{1+\epsilon \kappa^2}\|\cdot\|/r$ (since $\mathbb{B}(r)\subseteq \sqrt{1+\epsilon \kappa^2}\K_{\epsilon}$), we have 
		\begin{align}
			\gamma_{\K_{\epsilon}}(\hat \v) \leq \gamma_{\K_{\epsilon}}(\v_*)+  \gamma_{\K_{\epsilon}}(\hat \v-\v_*)& \leq 1 + \sqrt{1+\epsilon\kappa^2}\|\hat \v-\v_*\|/r, \quad \text{(since $\v_*\in \K_{\epsilon}$ by \eqref{eq:cold})}\nn \\ &  \leq 1 + (1+\kappa)\|\hat \v-\v_*\|/r,\nn \\
			& \leq 1 +   \frac{ 576^2 \delta^{1/4} \kappa^{17}  }{\epsilon^2}. \label{eq:scale}
		\end{align}
		This implies that $	\gamma_{\K_{\epsilon}}(\tilde \v) \leq 1$, where $\tilde \v$ is as in Algorithm \ref{alg:LOOCeps}. This implies that $\tilde \v \in \K_\epsilon$. Now using \eqref{eq:free}, the triangle inequality and the fact that $(1-1/(1+x))\leq x$, for all $x\geq 0$, we get 
		\begin{align}
			\|\tilde \v - \v_*\| & \leq	\|\tilde \v - \hat \v\|  + 	\|\hat \v - \v_*\| ,\nn \\
			& = \left(1-\frac{1}{1 + 576^2  { \delta^{1/4} \kappa^{17}  }{\epsilon^{-2}}}\right)  \|\tilde \v\| + \|\hat \v - \v_*\|, \nn \\
			& \leq   \frac{ 576^2\delta^{1/4} \kappa^{17} }{\epsilon^{2}}  \|\hat \v\|  + \|\hat \v - \v_*\|, \nn \\
			&  \leq \frac{ 576^2\delta^{1/4} \kappa^{17}  }{\epsilon^{2}}   \|\hat \v- \v_*\| + \frac{ 576^2\delta^{1/4} \kappa^{17} }{\epsilon^{2}}   \|\v_*\|   + \|\hat \v - \v_*\|,\nn \\
			& \leq \frac{484^4 R \delta^{1/4}  \kappa^{32} }{\epsilon^4}  +\frac{ 576^2R\delta^{1/4} \kappa^{17}  }{\epsilon^{2}},\label{eq:foll} \\
			& \leq \frac{484^4 R \delta^{1/4}  \kappa^{32} }{\epsilon^4} , \quad (\kappa \geq 1 \text{  and  } \epsilon \in(0,1)) \label{eq:last}
		\end{align}
		where \eqref{eq:foll} follows by \eqref{eq:origin} and the fact that $\|\v_*\|\leq R$ ($\v_* \in \K_{\epsilon} \subseteq \K \subseteq \mathbb{B}(R)$ by \eqref{eq:belong} and Lemma \ref{lem:approxlemma}). Both \eqref{eq:scale} and \eqref{eq:last} hold under the event $\cE$, which satisfies $\P[\cE]\geq 1 -\delta$. This together with \eqref{eq:belong} shows the first claim of the lemma, i.e.~\eqref{eq:target}. The upper bound on the number of Oracle calls follows from Proposition \ref{prop:eucproj}, Lemma \ref{lem:gs}, and the fact that $\|\bar \w\|=1$.
	\end{proof}
	
	\subsection{Proof of Theorem \ref{thm:master2}}
	\begin{proof}[{\bf Proof of Theorem \ref{thm:master2}}]
		Let $\u\in \K$ and $\u' \coloneqq \u/\sqrt{1+\kappa^2\epsilon}$. Further, let $\cA_*$ be the instance of Alg.~\ref{alg:EffAlg} with $\cB \equiv \cA_{\FTL}(\K_{\epsilon})$ ($\cA_{\FTL}(\K_{\epsilon})$ is Alg.~\ref{alg:boundarywolf} with $\K = \K_{\epsilon}$), where we require the iterates $(\w_t)$ of the latter (i.e.~the instance of $\cA_{\FTL}(\K_{\epsilon})$) to satisfy \begin{align}
			\w_{t+1}=\bm{0}\quad \text{whenever}\quad  \sum_{s=1}^{t}\g_s = \bm{0}, \quad \forall t\in[T-1].  \label{eq:require}
		\end{align}
		Note that this is a valid choice for the iterates $(\w_{t})$ of $\cA_{\FTL}(\K_{\epsilon})$ since $\bm{0} \in \argmin_{\w\in \K_{\epsilon}}\sum_{s=1}^t \g_s^\top \w$ whenever $\sum_{s=1}^{t}\g_s=\bm{0}$. Then, since $\K_{\epsilon}$ is $\frac{2\epsilon}{r\sqrt{1+\epsilon \kappa^2}}$-strongly convex and $\u' \in \K_{\epsilon}$ (see Lemma \ref{lem:approxlemma}), Theorem~\ref{thm:master} implies that the outputs $(\x'_t)$ of $\cA_*$ in response to $(f'_t\colon \w \mapsto \inner{\w}{\g_t})$ with $(\g_t \in \partial f_t(\x_t))$ satisfy
		\begin{align}
			\frac{\ln (T+1) }{\eta} + \frac{\eta R^2}{4}V_T + \wtilde{O}\left( R \sqrt{ (1+\kappa^2 \epsilon)^{1/2}\frac{ V_T}{2 \epsilon}}\right) &  \geq 	\sum_{t=1}^T \inner{\g_t}{\x_t'-\u'} \nn \\ &\geq  \sum_{t=1}^T \inner{\g_t}{\x_t'-\u} +  \inner{\G_T}{\u}- \frac{\inner{\G_T}{\u}}{\sqrt{1+\kappa^2 \epsilon}},\nn \\
			& \geq \sum_{t=1}^T \inner{\g_t}{\x_t'-\u} - L R \kappa^2 \epsilon T, \label{eq:silver}
		\end{align}
		where $\G_T\coloneqq \sum_{t=1}^T \g_t$. Let $\cE$ be the event that $\{\forall t\in[T], \|\x_t - \x_t'\| \leq R/T, \ \ \text{and}  \ \ \x_t \in \K\}$. By our choice of $\delta$ in the setting of the theorem, Lemma~\ref{lem:linopt}, the requirement on $\cA_*$ in \eqref{eq:require}, and the fact that $\x_t =\bm{0}$ whenever $\sum_{s=1}^{t-1}\g_s =\bm{0}$ (see Alg.~\ref{alg:LOOCeps}), we have that $\mathbb{P}[\cE]\geq 1 - \rho/T$. Furthermore, under the event $\cE$, we have $ \inner{\g_t}{\x_t'-\x_t} \leq LR/T$. Combining this with \eqref{eq:silver} implies the desired result.
	\end{proof}

		\section{Euclidean Projection using a Linear Optimization Oracle}
	\label{app:projection}
	\newtheorem{facts}[theorem]{Facts}
	In this section, we argue that Euclidean projection onto $\K$ can be done using $\wtilde{O}(d^2)$ calls to an LOO for $\K$. Euclidean projection of a point $\w$ onto $\K$ is the problem of finding $\x_* \in \argmin_{\x \in \K} f(\x)$, where $f(\x) \coloneqq \|\x - \w\|^2$. Since the gradient $f$ at a point $\x$ can be computed in closed form as $\nabla f(\x) = 2 (\x-\w)$, a Subgradient Oracle for $f$ can be implemented with complexity $O(d)$. Thus, Proposition \ref{prop:optimize} implies that there exists an algorithm that finds $\x_* \in \argmin_{\x \in \K} f(\x)$ (i.e.~performs Euclidean projection) with an expected running time of 
	\begin{align}O(d \cdot \mathrm{SEP}(\K)\ln (d\kappa \|\w\|/\varepsilon) +d^3 \ln^{O(1)}(d\kappa \|\w\|/\varepsilon)),\label{eq:euc}
	\end{align}
	where $\mathrm{SEP}(\K)$ is the computational complexity of a Separation Oracle for $\K$. We now bound the complexity of a (weak) Separation Oracle for $\K$ in terms of the number of calls to an LOO for $\K$. For this, we use the following facts:
	\paragraph{Facts.} {\em For a closed convex set $\cK\subset \reals^d$, the following is true:
		\begin{enumerate}
			\item A Separation Oracle for $\cK^\circ$ can be implemented with one call to a Linear Optimization Oracle for $\cK$ (follows by definition of the polar). 
			\item Linear Optimization on $\cK$ can be implemented using $\wtilde{O}(d)$ expected \# of calls to a Separation Oracle for $\cK$ and an additional $\wtilde{O}(d^3)$ arithmetic operations (see Proposition \ref{prop:lee}).
	\end{enumerate}}
	Instantiating Fact 1 with $\cK = \K$, implies that we can implement a Separation Oracle for $\K^\circ$ with one call to an LOO for $\K$. Using this and instantiating Fact 2 with $\cK = \K^\circ$, we get that we can implement an LOO for $\K^\circ$ with $\wtilde{O}(d)$ expected \# of calls to an LOO for $\K$ and an additional $\wtilde O(d^3)$ arithmetic operations. Now, instantiating Fact 1 again, with $\cK= \K^\circ$ and using that $(\K^\circ)^\circ=\K$, we get that we can implement a Separation Oracle for $\K$ with $\wtilde{O}(d)$ expected \# calls to an LOO for $\K$ and an additional $\wtilde O(d^3)$ arithmetic operations. Combining this with the expected complexity upper bound of Euclidean projection in \eqref{eq:euc} implies that Euclidean projection can be performed with $\wtilde{O}(d^2)$ expected \# of calls to an LOO for $\K$ and an additional $\wtilde O(d^4)$ arithmetic operations.

\end{document}